\documentclass{article} %
\usepackage{hyperref}
\let\realaddcontentsline\addcontentsline
\usepackage[accepted]{icml2026}
\usepackage{custom}
\usepackage{etoc}

\newcommand{\mymacro}[1]{{#1}}

\newcommand{\ignore}[1]{}
\newcommand{\expandLater}[1]{}

\newcommand{\defn}[1]{\mymacro{\textbf{#1}}}

\newcommand{\ifcondition}{\textbf{if }}
\newcommand{\otherwisecondition}{\textbf{otherwise}}

\newcommand{\defeq}{\mathrel{\stackrel{\textnormal{\tiny def}}{=}}}

\newcommand{\ind}[1]{\mymacro{\mathbbm{1}} \left\{ #1 \right\}}
\DeclarePairedDelimiter\ceil{\lceil}{\rceil}
\newcommand{\floor}[1]{\mymacro{\left\lfloor #1 \right\rfloor}}

\newcommand{\set}[1]{{\{ #1 \}}}

\def\1{\mymacro{\bm{1}}}
\def\0{\mymacro{\bm{0}}}

\newcommand{\interleaveFun}[2]{\mymacro{{#1}^\frown{#2}}}

\newcommand{\N}{{\mymacro{\mathbb{N}}}}

\newcommand{\NTo}[1]{{\left[ #1 \right]}}

\newcommand{\R}{{\mymacro{\mathbb{R}}}}
\newcommand{\F}{{\mymacro{\mathbb{F}}}}
\newcommand{\maxFNum}{{\mymacro{B_{\F}}}}

\newcommand{\abs}[1]{{\mymacro{\left\lvert #1 \right\rvert}}}
\newcommand{\norm}[1]{{\mymacro{\left\lVert #1 \right\rVert}}}

\newcommand{\bigO}{{\mymacro{\mathcal{O}}}}
\newcommand{\bigOFun}[1]{{\mymacro{\bigO(#1)}}}
\newcommand{\smallO}{{\mymacro{{o}}}}
\newcommand{\smallOFun}[1]{{\mymacro{\smallO(#1)}}}

\newcommand{\bigOmega}{{\mymacro{\Omega}}}
\newcommand{\bigOmegaFun}[1]{{\mymacro{\bigOmega(#1)}}}
\newcommand{\bigTheta}{{\mymacro{\Theta}}}
\newcommand{\bigThetaFun}[1]{{\mymacro{\bigTheta(#1)}}}

\DeclareMathOperator*{\argmax}{{\mymacro{argmax}}}

\DeclareMathSymbol{\mlq}{\mathord}{operators}{``} %
\DeclareMathSymbol{\mrq}{\mathord}{operators}{`'} %

\newcommand{\tempParam}{{\mymacro{\tau}}}
\newcommand{\tempParamFun}[1]{{\mymacro{\tempParam(#1)}}}

\newcommand{\func}{{\mymacro{f}}}

\newcommand{\inner}[2]{{\mymacro{#1^\top #2}}}
\newcommand{\innerProd}[2]{{\mymacro{\langle #1, #2 \rangle }}}

\newcommand{\alphabet}{{\mymacro{\Sigma}}}
\newcommand{\strings}{{\mymacro{\kleene{\alphabet}}}}

\newcommand{\lang}{\mymacro{\mathcal{L}}}

\newcommand{\kleene}[1]{{\mymacro{#1^{*}}}}

\newcommand{\langCls}{\mymacro{\mathcal{C}}}
\newcommand{\reductionCls}{\mymacro{\mathcal{R}}}
\newcommand{\transduction}{\mymacro{\texttt{t}}}

\newcommand{\str}{{\mymacro{\boldsymbol{w}}}}

\newcommand{\strlen}{{\mymacro{N}}}

\newcommand{\stridx}{{\mymacro{n}}}

\newcommand{\sym}{{\mymacro{w}}}

\newcommand{\padlen}{\mymacro{P}}
\newcommand{\padSym}{\mymacro{\square}}
\newcommand{\timesteps}{\mymacro{T}}

\newcommand{\LPTAcr}{\mymacro{LPT}\xspace}
\newcommand{\LPTsAcr}{\mymacro{LPTs}\xspace}

\newcommand{\LPTClass}{\mymacro{\mathtt{LPT}}\xspace}

\newcommand{\LPTClassFun}[3]{\mymacro{\LPTClass^{\mathtt{#1}}_{#2, #3}}}

\newcommand{\poly}{\mymacro{\mathtt{poly}}}
\newcommand{\polyFun}[1]{\mymacro{\poly\mleft(#1\mright)}}
\newcommand{\logFun}[1]{\mymacro{\log\mleft(#1\mright)}}
\newcommand{\bin}{\mymacro{\texttt{B}}}

\newcommand{\sbin}{\mymacro{\bin^{\pm}}}
\newcommand{\sbinFun}[1]{\mymacro{\sbin\mleft(#1\mright)}}
\newcommand{\symbolembedding}{\mymacro{\bm{e}}}
\newcommand{\posencoding}{\mymacro{\bm{p}}}
\newcommand{\transoutput}{\mymacro{\bm{o}}}

\newcommand{\constantFn}{\mymacro{\texttt{c}}}
\newcommand{\logFn}{\mymacro{\texttt{l}}}
\newcommand{\polyFn}{\mymacro{\texttt{p}}}

\newcommand{\circuit}{\mymacro{C}}

\newcommand{\circuitFamily}{\mymacro{\sC}}
\newcommand{\circuitFamilyClass}{\mymacro{\sF}}
\newcommand{\circEncFun}[1]{\mymacro{\langle #1 \rangle}}
\newcommand{\gate}{\mymacro{G}}
\newcommand{\gateType}{\mymacro{\texttt{T}}}
\newcommand{\gateTypeFun}[1]{\mymacro{\gateType\mleft(#1\mright)}}

\newcommand{\LUniform}{\mymacro{\LCls\text{-uniform}}}
\newcommand{\LUniformity}{\mymacro{\LCls\text{-uniformity}}}

\newcommand{\FOUniform}{\mymacro{\FO\text{-uniform}}}
\newcommand{\FOUniformity}{\mymacro{\FO\text{-uniformity}}}

\newcommand{\XUniform}{\mymacro{\XCls\text{-uniform}}}

\newcommand{\XCls}{\mymacro{\mathtt{X}}}
\newcommand{\LCls}{\mymacro{\mathtt{L}}}
\newcommand{\NLCls}{\mymacro{\mathtt{NL}}}

\newcommand{\andGate}{\mymacro{\texttt{AND}}\xspace}
\newcommand{\orGate}{\mymacro{\texttt{OR}}\xspace}
\newcommand{\notGate}{\mymacro{\texttt{NOT}}\xspace}
\newcommand{\majGate}{\mymacro{\texttt{THR}}\xspace}

\newcommand{\SMATAcr}{\mymacro{\texttt{SMAT}}\xspace}
\newcommand{\SMATsAcr}{\mymacro{\SMATAcr{}s}\xspace}
\newcommand{\tSMATAcr}{\mymacro{\ensuremath{\tempParam}-\texttt{SMAT}}\xspace}

\newcommand{\AHATAcr}{\mymacro{\texttt{AHAT}}\xspace}
\newcommand{\AHATsAcr}{\mymacro{\AHATAcr{}s}\xspace}

\newcommand{\FO}{\mymacro{\mathtt{FO}}}

\newcommand{\PFO}{\mymacro{\mathtt{PFO}}}
\newcommand{\PFOTwo}{\mymacro{\PFO\textsuperscript{2}}}
\newcommand{\PFOTwoLT}{\mymacro{\PFOTwo\mathtt{[<]}}}

\newcommand{\DLOGTIME}{\mathtt{DLOGTIME}}

\newcommand{\round}{\mymacro{\texttt{round}}}
\newcommand{\roundFun}[1]{\round(#1)}
\newcommand{\roundOpFun}[1]{\mleft[#1\mright]_\numPrec}

\newcommand{\attnGap}{\mymacro{\gamma}}
\newcommand{\attnGapFun}[1]{\mymacro{\attnGap\mleft(#1\mright)}}

\newcommand{\unitLenPE}{\mymacro{\mu}}
\newcommand{\unitLenPEFun}[1]{\mymacro{\unitLenPE (#1)}}

\newcommand{\inputTok}[1]{\mymacro{\mathsf{Inp}\mleft(#1\mright)}}
\newcommand{\argTok}[2]{\mymacro{\mathsf{Arg}\mleft(#1, #2\mright)}}
\newcommand{\gateTok}[1]{\mymacro{\mathsf{Type}\mleft(#1\mright)}}

\newcommand{\tgtIdx}{\mymacro{\stridx_{\textrm{tgt}}}}

\newcommand{\AC}{{\mymacro{\mathtt{AC}}}}
\newcommand{\ACFun}[1]{{\mymacro{\AC^{\mathtt{#1}}}}}
\newcommand{\ACZero}{{\mymacro{\ACFun{0}}}}

\newcommand{\ACd}{{\mymacro{\ACFun{d}}}}
\newcommand{\TC}{{\mymacro{\mathtt{TC}}}}
\newcommand{\TCFun}[1]{{\mymacro{\TC^{\mathtt{#1}}}}}
\newcommand{\TCZero}{{\mymacro{\TCFun{0}}}}

\newcommand{\TCd}{{\mymacro{\TCFun{d}}}}
\newcommand{\NC}{{\mymacro{\mathtt{NC}}}}

\newcommand{\attnNorm}{{\mymacro{\boldsymbol{\Delta}}}}
\newcommand{\attnNormFun}[1]{{\mymacro{\attnNorm(#1)}}}

\newcommand{\layerNorm}{{\mymacro{\boldsymbol{\nu}}}}
\newcommand{\layerNormFun}[1]{{\mymacro{\layerNorm(#1)}}}

\newcommand{\binEnc}[1]{{\mymacro{\bin\mleft(#1\mright)}}}

\newcommand{\graph}{\mymacro{\mathcal{G}}}

\newcommand{\one}{{\mymacro{\bm{1}}}}

\newcommand{\tm}{{\mymacro{\mathcal{M}}}}

\newcommand{\hiddDim}{{\mymacro{D}}}
\newcommand{\hiddDimFun}[1]{{\mymacro{\hiddDim(#1)}}}
\newcommand{\dimidx}{{\mymacro{d}}}

\newcommand{\numPrec}{{\mymacro{b}}}
\newcommand{\numPrecFun}[1]{{\mymacro{\numPrec(#1)}}}
\newcommand{\volume}{{\mymacro{V}}}
\newcommand{\volumeFun}[1]{{\mymacro{\volume(#1)}}}

\newcommand{\softmax}{{\mymacro{\mathrm{softmax}}}}
\newcommand{\softmaxT}{{\mymacro{\softmax_{\tempParam}}}}

\newcommand{\ReLU}{{\mymacro{\mathrm{ReLU}}}}
\newcommand{\softmaxFun}[2]{{\mymacro{\mathrm{softmax}\mleft(#1\mright)_{#2}}}} %

\newcommand{\hiddState}{{\mymacro{\vh}}}

\newcommand{\mlp}{{\mymacro{\boldsymbol{f}}}}

\newcommand{\negterm}[1]{{\mymacro{{\raise.17ex\hbox{$\scriptstyle\sim$}} #1}}}

\newcommand{\hiddMtx}{{\mymacro{\mH}}}
\newcommand{\queryMtx}{{\mymacro{\mQ}}}
\newcommand{\keyMtx}{{\mymacro{\mK}}}
\newcommand{\valueMtx}{{\mymacro{\mV}}}

\newcommand{\attnMask}{{\mymacro{M}}}
\newcommand{\attnMaskFun}[1]{{\mymacro{\attnMask(#1)}}}

\newcommand{\tf}{{\mymacro{\mathcal{T}}}}

\newcommand{\hardmaxAvg}{{\mymacro{\mathrm{ahardmax}}}}

\newcommand{\tflayer}{{\mymacro{\boldsymbol{\tau}}}}

\newcommand{\residStream}{{\mymacro{\hiddMtx}}}

\newcommand{\layeridx}{{\mymacro{l}}}
\newcommand{\numlayers}{{\mymacro{L}}}

\newcommand{\posEnc}{{\mymacro{\texttt{PE}}}}
\newcommand{\posEncFun}[1]{{\mymacro{\posEnc\mleft(#1\mright)}}}

\def\eps{{\mymacro{\varepsilon}}}

\def\vd{{{\mymacro{\bm{d}}}}}
\def\ve{{{\mymacro{\bm{e}}}}}

\def\vh{{{\mymacro{\bm{h}}}}}

\def\vk{{{\mymacro{\bm{k}}}}}

\def\vq{{{\mymacro{\bm{q}}}}}

\def\vs{{{\mymacro{\bm{s}}}}}

\def\vu{{{\mymacro{\bm{u}}}}}
\def\vv{{{\mymacro{\bm{v}}}}}

\def\vx{{{\mymacro{\bm{x}}}}}
\def\vy{{{\mymacro{\bm{y}}}}}

\def\evs{{{\mymacro{s}}}}

\def\evu{{{\mymacro{u}}}}

\def\evx{{{\mymacro{x}}}}
\def\evy{{{\mymacro{y}}}}

\def\mA{{{\mymacro{\bm{A}}}}}
\def\mB{{{\mymacro{\bm{B}}}}}

\def\mG{{{\mymacro{\bm{G}}}}}
\def\mH{{{\mymacro{\bm{H}}}}}

\def\mK{{{\mymacro{\bm{K}}}}}

\def\mQ{{{\mymacro{\bm{Q}}}}}

\def\mV{{{\mymacro{\bm{V}}}}}
\def\mW{{{\mymacro{\bm{W}}}}}

\DeclareMathAlphabet{\mathsfit}{\encodingdefault}{\sfdefault}{m}{sl}
\SetMathAlphabet{\mathsfit}{bold}{\encodingdefault}{\sfdefault}{bx}{n}

\def\sC{{{\mymacro{\mathcal{C}}}}}

\def\sF{{{\mymacro{\mathcal{F}}}}}

\def\emA{{\mymacro{A}}}
\def\emB{{\mymacro{B}}}
\def\emC{{\mymacro{C}}}

\icmltitlerunning{Revisiting Padded Transformer Expressivity: Which Architectural Choices Matter and Which Don't}

\begin{document}

\etocdepthtag.toc{mainmatter}

\twocolumn[
\icmltitle{Revisiting Padded Transformer Expressivity: \\ Which Architectural Choices Matter and Which Don't}

\icmlsetsymbol{equal}{*}

\begin{icmlauthorlist}
    \icmlauthor{Anej Svete}{eth}
    \icmlauthor{William Merrill}{ai2}
    \icmlauthor{Ryan Cotterell}{eth}
    \icmlauthor{Ashish Sabharwal}{ai2}
\end{icmlauthorlist}

\icmlaffiliation{eth}{ETH Zürich}
\icmlaffiliation{ai2}{Allen Institute for AI}

\icmlcorrespondingauthor{Anej Svete}{asvete@ethz.ch}
\icmlcorrespondingauthor{Ryan Cotterell}{ryan.cotterell@ethz.ch}

\icmlkeywords{Machine Learning, ICML}

\vskip 0.3in
]

\printAffiliationsAndNotice{}  %

\begin{abstract}
    Recent work describes what transformers can and cannot compute through connections to boolean circuits, but existing results lack exact characterizations and are sensitive to modeling choices.
    \emph{Padded} transformers---to whose input filler symbols such as ``...'' are appended---emerge as a useful gadget for establishing \emph{equivalences} to circuit classes by providing polynomial space for adaptive parallel computation.
    However, only a limited set of padded transformer idealizations has been studied, leaving open how robustly these equivalences hold under changes to attention type, model width, and uniformity.
    We find that, under practical assumptions, padded transformers are surprisingly \emph{robust} to all of these, and identify numeric precision and model depth as the main factors affecting expressivity.
    Concretely, we prove that polynomially padded $\LUniform$ constant-precision transformers are equivalent to $\LUniform~\ACZero$, while growing-precision ones achieve $\LUniform~\TCZero$ regardless of width.
    Furthermore, \emph{looping} enables sequential processing analogous to circuits: $\log^d \strlen$-looped constant-precision transformers reach $\FOUniform~\ACd$, and growing-precision ones reach $\FOUniform~\TCd$.
    Interestingly, growing width or precision beyond logarithmic does not increase expressivity, and all our results hold for \emph{both} softmax and average hard attention transformers.
\end{abstract}

\section{Introduction} \label{sec:introduction}
A large body of work explores transformer expressivity, i.e., what functions transformers can and cannot compute.
Because such work is formal, one must mathematically pin down all aspects of the model.
However, it has become apparent that transformer expressivity can be highly sensitive to some design choices; for example, the type of attention---soft versus hard---has severe consequences on what languages transformers can recognize \citep{hao-etal-2022-formal,jerad2025uniquehardattentiontale}.
This has led to diverse and sometimes difficult-to-reconcile results on transformer expressivity, where seemingly similar transformer variants---differing only slightly in, say, numeric precision assumptions---achieve substantially different expressivity \citep{strobl-etal-2024-formal}.

In stark contrast to this brittleness, we show that \emph{padded transformers} \citep{pfau2024lets}---which append polynomially many dedicated filler symbols $\padSym$ to the input before processing it---are surprisingly \emph{robust} to a variety of changes to model specification, including attention type, model width, and parameter uniformity.
Numeric precision and model depth, in contrast, emerge as major factors determining expressivity; log-precision padded transformers are always more expressive than constant-precision ones, and expressivity grows with model depth.
Once logarithmic precision is achieved in an $\LUniform$ padded transformer (a transformer that can be constructed by a logspace Turing machine; cf.~\cref{def:uniform-transformers-main-text}), however, other aspects like attention type, model width, or further increase in precision do not affect expressivity.
This may be attractive to both theorists and practitioners: On the one hand, it simplifies theoretical analyses, which can focus on whichever equivalent specification is easiest to study; on the other hand, it suggests that the derived characterizations are more likely to describe real-world models.

\begin{figure*}[ht]
    \centering
    \newcolumntype{C}[1]{>{\centering\arraybackslash}p{#1}}
    \begin{tikzpicture}
    \node[inner sep=0pt] (tabnode) {%
    \begin{tabular}{c|l|cC{7em}C{7em}C{7em}}
        \toprule
        & & \multicolumn{4}{c}{Width ($\hiddDim$)} \\
        & & \multicolumn{1}{c|}{Fully uniform \AHATsAcr} & \multicolumn{3}{c}{$\LUniform$ \AHATsAcr \emph{and} \SMATsAcr} \\
        Depth & Precision ($\numPrec$) & \multicolumn{1}{c|}{$\bigThetaFun{1}$} & $\bigThetaFun{1}$ & $\bigThetaFun{\log \strlen}$ & $\poly(\strlen)$ \\
        \midrule
        \multirow{3}{*}{$\bigThetaFun{1}$} & $\bigThetaFun{1}$
        & \cellcolor{UninterestingClass} %
        & \cellcolor{UninterestingClass} %
        & \cellcolor{LUniformACNew} $\star$ & \cellcolor{LUniformACNew} ${\LUniform}~\ACZero$ \\*[1pt]
        \noalign{\global\arrayrulewidth=0.8pt}\arrayrulecolor{ETHPurple}\cline{3-6}\arrayrulecolor{black}\noalign{\global\arrayrulewidth=0.4pt}
        \noalign{\vskip 1pt}
        & $\bigThetaFun{\log \strlen}$
        & \cellcolor{FOUniformTCNew} $\dagger$
        & \cellcolor{LUniformTCNew} & \cellcolor{LUniformTCNew} $\star$ & \cellcolor{LUniformTCNew} \\
        & $\polyFun{\strlen}$
        & \cellcolor{FOUniformTCNew} $\dagger$ \hspace{0.5em} ${\FOUniform}~\TCZero$
        & \cellcolor{LUniformTCNew} & \cellcolor{LUniformTCNew} & \multirow{-2}{*}{\cellcolor{LUniformTCNew} ${\LUniform}~\TCZero$} \\
        \cmidrule{1-6}
        \multirow{3}{*}{$\begin{array}{c}\bigThetaFun{\log^d \strlen}\\ \scriptstyle (d \geq 1)\end{array}$} & $\bigThetaFun{1}$
        & \cellcolor{UninterestingClass} %
        & \cellcolor{UninterestingClass} %
        & \multicolumn{2}{c}{\cellcolor{FOUniformACNew} $\FOUniform~\ACd$} \\*[1pt]
        \noalign{\global\arrayrulewidth=0.8pt}\arrayrulecolor{ETHPurple}\cline{3-6}\arrayrulecolor{black}\noalign{\global\arrayrulewidth=0.4pt}
        \noalign{\vskip 1pt}
        & $\bigThetaFun{\log \strlen}$
        & \cellcolor{FOUniformTCNew} $\dagger$ & \multicolumn{3}{c}{\cellcolor{FOUniformTCNew}} \\
        & $\polyFun{\strlen}$
        & \cellcolor{FOUniformTCNew} $\dagger$ & \multicolumn{3}{c}{\multirow{-2}{*}{\cellcolor{FOUniformTCNew} $\FOUniform~\TCd$}} \\
        \bottomrule
    \end{tabular}%
    };
    \path (tabnode.north east) -- (tabnode.south east) coordinate[pos=0.46] (tabwr) coordinate[pos=0.79] (tabwr2);
    \node[anchor=east, fill=ETHGray!20, inner xsep=3pt, inner ysep=0pt, text=ETHPurple, font=\bfseries\tiny,rounded corners=2pt]
      at (tabwr) {\,$\AC/\TC$ divide\,};
    \node[anchor=east, fill=ETHGray!20, inner xsep=3pt, inner ysep=0pt, text=ETHPurple, font=\bfseries\tiny,rounded corners=2pt]
      at (tabwr2) {\,$\AC/\TC$ divide\,};
    \end{tikzpicture}
    \caption{Polynomially padded transformer expressivity across depth ($\downarrow$), precision ($\downarrow$), uniformity ($\rightarrow$), and width ($\rightarrow$) regimes, ignoring parameterizations that \textcolor{ETHBronze}{do not satisfy the sufficient volume constraint} (cf.~\cref{def:sufficient-volume}).
    In contrast to most existing results on transformer expressivity, these results are \emph{exact} for padded transformers.
    The \textcolor{ETHPurple}{purple} lines mark the \textcolor{ETHPurple}{\emph{$\AC/\TC$ divide}}: Constant-precision transformers are limited to $\ACd$, while growing-precision transformers achieve $\TCd$.
    $\dagger$ marks \citeposs{merrill2025exactexpressivepowertransformers} results on fully uniform \AHATsAcr (average hard attention transformers) and $\star$ \citeposs{london2025pausetokensstrictlyincrease} results on $\LUniform$ \SMATsAcr (soft attention transformers).
    }
    \label{fig:expressivity-landscape}
\end{figure*}

Intuitively, padding abstracts ways of adaptively increasing parallel inference-time computation, such as pause symbols \citep{pfau2024lets} and text diffusion models \citep{svete2025reasoningabilitiesmaskeddiffusion}, which improve the empirical performance of transformers on a variety of tasks \citep{pfau2024lets,goyal2024think,london2025pausetokensstrictlyincrease}.
It has also proven to be a useful theoretical gadget for studying transformer expressivity, yielding \emph{exact} expressivity characterizations---but only for specific choices of uniformity, precision, width, and attention type \citep{li2024chain,merrill2025exactexpressivepowertransformers,london2025pausetokensstrictlyincrease}, leaving open how robust these characterizations are.
Our comprehensive analysis of a large set of possible transformer idealizations (cf.~\cref{fig:expressivity-landscape}) reveals the previously unknown robustness of padded transformers to these differences.

Padding facilitates a particularly convenient connection to \defn{boolean circuits}---computational models that process fixed-length inputs through layers of logic gates in the form of an acyclic graph \citep[][\textit{inter alia}]{hao-etal-2022-formal,merrill-sabharwal-2023-parallelism,li2024chain,london2025pausetokensstrictlyincrease}.
Natural and well-understood examples of circuit classes are $\ACd$---circuits with $\andGate$, $\orGate$, and $\notGate$ gates whose number of gates scales polynomially with string length $\strlen$ and depth with $\log^d\strlen$---and $\TCd$, which add threshold gates that test whether the number of active inputs exceeds some threshold.
While connections to circuits have been a fruitful avenue toward understanding transformers, establishing \emph{equivalences} to natural circuit classes is difficult: The attention mechanism's famously \emph{quadratic} complexity in $\strlen$ fundamentally \emph{caps} the parallel computation a transformer can perform at a quadratic amount in $\strlen$.
It is unclear how a transformer could execute a cubic or higher-degree polynomial number of parallel computations, making equivalences to natural circuit classes unlikely.
Most transformer-to-circuit connections, thus, take the form of (loose) \emph{upper bounds}---showing that transformers can be simulated by either $\ACZero$ or $\TCZero$ circuits---without matching lower bounds.

We study how prominent aspects of the transformer architecture affect the exact expressivity of polynomially padded transformers to establish a set of equivalences robust to changes in model specification.
We focus on 
\begin{enumerate*}[label=\textit{(\arabic*)}]
    \item the type of attention (softmax attention (\SMATAcr) and average hard attention (\AHATAcr) transformers), 
    \item the scaling of numeric precision, width, and depth, and
    \item the uniformity of the transformer constructions. 
\end{enumerate*}
We find order in existing literature by shifting focus to padded uniform transformer families.
Studying \emph{families} is necessary since letting transformer parameters depend on context length requires constructing a separate model for each length.\footnote{Without uniformity constraints, transformers and circuits are unrealistically powerful.
For example, consider the unary language $\{1^\strlen \mid \text{the $\strlen$\textsuperscript{th} Turing machine halts}\}$ under some fixed enumeration of Turing machines.
This undecidable language is recognizable by a non-uniform circuit family, since we can hard-code the correct answer for each input length $\strlen$ into the circuit $\circuit_\strlen$.
Uniformity conditions prevent such pathological cases by requiring a single, feasible algorithm to construct all circuits in the family. \label{fn:non-uniform-pathologies}
}
A uniform family describes how each model is built.
We study $\LUniform$ families \citep{london2025pausetokensstrictlyincrease}, which enforce that the transformers can be built by a simple computational model---a logspace Turing machine.
We build on \citeposs{london2025pausetokensstrictlyincrease} characterizations of padded $\LUniform$ \SMATsAcr and \citeposs{merrill2025exactexpressivepowertransformers} padded log-precision \emph{fully} uniform \AHATsAcr (where one set of parameters works for every input length) (cf.~$\dagger$- and $\star$-marked cells in \cref{fig:expressivity-landscape}).
By connecting \AHATsAcr and \SMATsAcr, translating \citeposs{london2025pausetokensstrictlyincrease} results to \AHATsAcr, and extending \citeposs{merrill2025exactexpressivepowertransformers} results to constant-precision transformers, we establish the following results, summarized in \cref{fig:expressivity-landscape}:
\begin{enumerate}[label=(\arabic*),topsep=0pt,itemsep=-2pt, wide]
    
    \item \textbf{Constant vs. growing precision} and the $\AC/\TC$ expressivity divide:
    A consistent trend in the effect of numeric precision appears: It determines whether the equivalence is to $\LUniform~\ACZero$ (with constant precision) or to $\LUniform~\TCZero$ circuits (with growing precision).
    
    \item We find \defn{robustness with padding}: 
    A particularly important quantity turns out to be the transformer's \defn{volume} $\volumeFun{\strlen} \defeq \numPrec \cdot \hiddDim$---the number of bits available at each symbol at each layer of the transformer---where $\numPrec$ denotes the numeric precision and $\hiddDim$ the model width.
    As long as the volume is at least $\bigOmegaFun{\log \strlen}$ (which is required to distinguish $\strlen$ input positions), $\LUniform$ padded transformers are \emph{robust} to changes in their specification: They either match $\LUniform~\ACZero$ (with constant precision) or $\LUniform~\TCZero$ (with growing precision). 
    The type of attention, width, and precision beyond logarithmic \emph{do not affect} expressivity.
    
    \item \textbf{Natural scaling with looping}: 
    \emph{Looping} endows transformers with both parallel and sequential processing, enabling them to recognize regular \citep{merrill2025littledepthgoeslong} and context-free \citep{jerad2026contextfreerecognitiontransformers} languages that constant-depth transformers cannot.
    We extend the characterizations of constant-depth transformers to looped ones, showing that their expressivity scales analogously to circuits under all regimes: $\bigThetaFun{\log^d\strlen}$-looped constant-precision transformers achieve $\FOUniform~\ACd$ while growing-precision ones achieve $\FOUniform~\TCd$.
    As $d\to \infty$ both approach $\NC = \bigcup_{d \geq 0} \ACd = \bigcup_{d \geq 0} \TC^d$.

    \item With growing precision, there is \textbf{no benefit to growing width or weakening uniformity}: Fully uniform and $\LUniform$ growing-precision padded transformers with $\bigThetaFun{\log^d \strlen}$ looping are equivalent to $\FOUniform~\TCd$ regardless of the width.
    There is also no benefit to polynomial precision over logarithmic precision.
    
    \item Describing the expressivity of transformers with \textbf{insufficient ($\smallOFun{\log\strlen}$) volume} is difficult with natural circuit classes. 
    Understanding sub-classes of $\ACd$ is likely required.
\end{enumerate}

\section{Preliminaries} \label{sec:preliminaries}
Here, we introduce the core objects of the paper: The two attention variants (\SMATAcr and \AHATAcr), the width, precision, and volume of a transformer, uniform transformer families, fixed-point arithmetic, and looped padded transformers.
We follow \citet{merrill2025exactexpressivepowertransformers} and \citet{london2025pausetokensstrictlyincrease} in our formal exposition.
We outline the setup here and give additional details in \cref{app:preliminaries}.
We study softmax attention transformers (\SMATsAcr), which are commonly used in practice, and average hard attention transformers (\AHATsAcr), which are often preferred in the theoretical literature.
Both can be defined via \defn{temperature-scaled softmax attention}, which computes the attention weights for input of length $\strlen \in \N$ based on unnormalized attention scores $\vx \in \R^\strlen$, position $\stridx \in \NTo{\strlen}$, and \defn{temperature} $\tempParam > 0$ as
\begin{equation}
    \softmaxT(\vx)_\stridx = \frac{\exp(\sfrac{\evx_\stridx}{\tempParam})}{\sum_{\stridx'=1}^\strlen \exp(\sfrac{\evx_{\stridx'}}{\tempParam})}.
\end{equation}
We treat the temperature $\tempParam = \tempParamFun{\strlen}$ as a model parameter that may depend on the input length, analogously to the parameter matrices and PEs of $\tf_\strlen$; concretely, we require $\tempParamFun{\strlen}$ to be computable in the same complexity class as the rest of the model (cf.~\cref{def:uniform-transformers}).
This treatment of the temperature as a computable parameter mirrors the role that it plays in practical attention implementations, and captures the standard way of approximating \AHATsAcr with \SMATsAcr \citep{yang2025simulatinghardattentionusing}---the $\tempParam \to 0$ limit of \SMATsAcr yields \AHATsAcr:
\begin{equation}
    \lim_{\tempParam \to 0} \softmaxT(\vx)_\stridx = \begin{cases}
        \frac{1}{|\argmax (\vx)|} & \ifcondition \evx_\stridx = \max (\vx), \\
        0 & \otherwisecondition.
    \end{cases}
\end{equation}
One of the main aims of the paper is to unify the literature on \SMATsAcr and \AHATsAcr.
Throughout the paper, all statements referencing transformers hold for both attention types; we only specify the type when the distinction is necessary.

We use $\hiddDim$ to denote the model width, i.e., the size of each symbol's internal representation (the residual stream), and $\numPrec$ to denote the numeric precision of the model's computations, i.e., the number of bits with which each scalar in the model is stored and manipulated (cf.~\cref{app:fixed-point-arithmetic}).
\begin{definition}[Width and precision] \label{def:width-precision}
    The \defn{width} $\hiddDimFun{\strlen} \in \N$ of a transformer is the dimensionality of each symbol's residual-stream representation.
    The \defn{numeric precision} $\numPrecFun{\strlen} \in \N$ is the number of bits used to store and manipulate every scalar in the model under fixed-point arithmetic (cf.~\cref{app:fixed-point-arithmetic}).
    Both may depend on the input length $\strlen$.
\end{definition}
As standard in theoretical literature \citep[e.g., ][]{li2024chain}, we allow $\hiddDim$ and $\numPrec$ to \emph{grow} with input length $\strlen$.
This is required even just to \emph{name} the $\strlen$ input positions: Distinguishing $\strlen$ positions requires $\log \strlen$ bits per symbol, so a transformer must satisfy $\hiddDimFun{\strlen} \cdot \numPrecFun{\strlen} = \bigOmegaFun{\log \strlen}$ to encode uncompressed positional information.
This motivates the definition of the transformer (representation) volume.
\begin{definition}[Volume] \label{def:volume}
    The \defn{volume} of a transformer is
    \begin{equation}
        \volumeFun{\strlen} \defeq \hiddDimFun{\strlen} \cdot \numPrecFun{\strlen}.
    \end{equation}
\end{definition}
Intuitively, this corresponds to the logarithm of the number of distinct attention query vectors (whose $\hiddDim$ entries are stored as a $\numPrec$-bit number) at any position in a string of length $\strlen$.
One of our main results reveals that, as long as a polynomially padded transformer has volume $\volumeFun{\strlen} = \bigOmegaFun{\log \strlen}$, additional model growth does not increase expressivity.
We capture this in the following definition.
\begin{definition}[Sufficient volume] \label{def:sufficient-volume}
    A transformer family has \defn{sufficient volume} if its volume satisfies $\volumeFun{\strlen} = \bigOmegaFun{\log \strlen}$, and \defn{insufficient volume} otherwise.
\end{definition}
What matters, though, is where the growing volume comes from---growing-precision transformers can be more expressive than growing-width ones.

\textbf{Transformer families.}
Making the width a function of the input length means that each input length is processed by a \emph{separate} transformer $\tf_\strlen$, yielding a \defn{family} $\{\tf_\strlen\}_{\strlen \in \N}$ of transformers. 
In a general transformer family, there need not be any relationship between the transformers of different sizes, resulting in similar pathologies as in non-uniform circuit classes (cf.~\cref{fn:non-uniform-pathologies}).
Thus, we enforce \textbf{$\LUniformity$} in our constructions, which requires that a logspace Turing machine can construct $\tf_\strlen$ from input $1^\strlen$ (cf. \cref{app:transformers}).
This means that all transformers are easily algorithmically constructible.
We contrast $\LUniform$ transformer families to \defn{fully uniform} families, in which a \emph{single} transformer $\tf$ (with one fixed set of parameters and a single positional-encoding scheme) processes inputs of every length $\strlen \in \N$.
In the language of \cref{app:transformers}, this means that the Turing machine computing the transformer parameters is a \emph{constant function} in $\strlen$.
Fully uniform families are thus more restrictive than $\LUniform$ ones, since the latter allow the description of $\tf_\strlen$ to depend on $\strlen$ (via a logspace Turing machine), whereas the former require a length-independent description.
\begin{definition}[Uniform transformer families; variant of \text{\citealp[][Def. 3.6]{london2025pausetokensstrictlyincrease}}] \label{def:uniform-transformers-main-text}
    A transformer family $\{\tf_\strlen\}_{\strlen \in \N}$ is \defn{$\LUniform$} if there exist logspace Turing machines $\tm_1$ and $\tm_2$ such that:
    \begin{enumerate}[label=(\arabic*),topsep=0pt,noitemsep,wide]
        \item $\tm_1$ outputs a description of $\tf_\strlen$ on input $1^\strlen$, and
        \item $\tm_2$ outputs $\posencoding(\stridx, \strlen)$ on input $(1^\strlen, \bin(\stridx))$
    \end{enumerate} 
    where $\posencoding(\stridx, \strlen)$ is the positional encoding (PE) at position $\stridx$ and $\bin(\stridx) \in \set{0, 1}^{\ceil{\log\strlen}}$ is the binary encoding of $\stridx$.
\end{definition}

\textbf{Fixed-point arithmetic.}
We model the transformer's computations with fixed-point arithmetic of growing precision.
This specifies how model parameters and activations are stored and manipulated.
The transformer $\tf_\strlen$ with precision $\numPrec$ operates over the set $\F_\numPrec \defeq \{ x_{\pm} \cdot a \cdot 2^{-\numPrec} \mid x_{\pm} \in \{-1, 1\}, a \in \{0, 1, \ldots, 2^{2\numPrec}-1\} \}$---signed fixed-point numbers with $\numPrec$ bits each for the integer and fractional parts (cf.\ \cref{app:fixed-point-arithmetic}), following the convention of \citet{li2024chain} and \citet{london2025pausetokensstrictlyincrease}.
Fixed-point arithmetic both limits the precision of operations and enables the implementation of useful attention gadgets; in particular, the \emph{non-associativity} of fixed-point arithmetic operations enables the implementation of non-linear functions in otherwise linear parts of the model; see \cref{app:fixed-point-arithmetic} for details.

\textbf{\underline{L}ooped \underline{p}added \underline{t}ransformers (\LPTsAcr).} 
\defn{Looped} transformers repeatedly apply a fixed block of transformer layers to the input \citep{dehghani2019universaltransformers}.
This dynamically increases model depth, enabling more complex reasoning, while keeping model size constant, thus reducing the memory footprint \citep{bae2025mixtureofrecursionslearningdynamicrecursive}.
\defn{Padded} transformers additionally pad the input with blank symbols, which can be used to perform additional computations \emph{in parallel}.
This additional padding space is analogous to increasing the circuit width in circuit complexity.
Concretely, a \defn{looped padded transformer} (\LPTAcr) is a triple $(\tf, r, \padlen)$ in which $\tf$ is a transformer with a designated block of layers, $r\colon \N \to \N$ is the number of times that block is applied (the \defn{loop count}), and $\padlen\colon \N \to \N$ is the \defn{padding length}: On input $\str \in \alphabet^\strlen$, the model runs $\tf$ on $\str \circ \underbrace{\padSym \cdots \padSym}_{\padlen(\strlen)}$ and applies the looped block of layers $r(\strlen)$ times.
See \cref{app:looped-padded-transformers} for details.

\defn{Boolean circuits} provide a useful abstraction of transformers.
They process binary strings by passing them through a directed acyclic graph of nodes that represent boolean operations.
Particularly interesting are 
\begin{enumerate*}[label=(\arabic*)]
    \item $\ACd$ circuits \citep{li2024chain}---boolean circuits of depth $\bigOFun{\log^d \strlen}$ ($\strlen$ being the input length), size $\polyFun{\strlen}$, and with \andGate, \orGate, and \notGate gates of unbounded fan-in; and
    \item $\TCd$ circuits, the class of \emph{threshold circuits} of depth $\bigOFun{\log^d \strlen}$ and size $\polyFun{\strlen}$ that additionally allow threshold gates (which determine whether the number of inputs exceeds some threshold).
\end{enumerate*}
\cref{app:circuit-complexity,app:transformers} provide more details.

\textbf{Notation.}
We focus on \emph{polylogarithmically looped} and \emph{polynomially padded} transformers.\footnote{In the following, we will use the term padded transformer to specifically refer to \emph{polynomially} padded transformers.}
We use $\LPTClassFun{\textcolor{ETHPurple}{d}}{\textcolor{ETHBlue}{\numPrec}}{\textcolor{ETHGreen}{\hiddDim}}$ to refer to transformers with polynomial padding, $\bigThetaFun{\log^{\textcolor{ETHPurple}{d}}{\strlen}}$ \textcolor{ETHPurple}{looping}, \textcolor{ETHBlue}{numeric precision} $\textcolor{ETHBlue}{\numPrec}$, and \textcolor{ETHGreen}{width} $\textcolor{ETHGreen}{\hiddDim}$.
Rather than writing $\bigThetaFun{1}$, $\bigThetaFun{\log \strlen}$, or $\polyFun{\strlen}$ for numeric precision and width, we use the shorthand $\constantFn$ (\underline{c}onstant) for $\bigThetaFun{1}$, $\logFn$ (\underline{l}og) for $\bigThetaFun{\log \strlen}$, and $\polyFn$ (\underline{p}olynomial) for $\polyFun{\strlen}$, respectively.
Thus, e.g., $\LPTClassFun{d}{\logFn}{\constantFn}$ refers to $\log^d \strlen$-looped log-precision constant-width \LPTsAcr with polynomial padding.

\section{\SMATsAcr Can Simulate \AHATsAcr} \label{sec:results}

Early work on transformer expressivity focused largely on \AHATsAcr \citep{hao-etal-2022-formal,merrill-etal-2022-saturated,svete-etal-2024-transformers} since their sparser activations make them easier to connect to boolean circuits.
Recently, \SMATsAcr under different precision regimes have received more attention, providing a new perspective on the expressivity of practical transformers \citep{merrill-sabharwal-2023-parallelism,li2024chain,london2025pausetokensstrictlyincrease,li2025characterizingexpressivitytransformerlanguage,svete2025reasoningabilitiesmaskeddiffusion}.
In specific cases, the relationship between the two attention variants is known; for example, \citet{li2025characterizingexpressivitytransformerlanguage} show their equivalence in the constant-precision constant-width unpadded regime. 
\citet{yang2025simulatinghardattentionusing} study \emph{approximating} real-valued \AHATsAcr with real-valued \SMATsAcr, showing that, by scaling the \emph{temperature} of the softmax normalization in \SMATsAcr, one can approximate \AHATsAcr arbitrarily well.
However, beyond the constant-precision constant-depth regime, no \emph{exact} equivalence results are known.

We bring the two frameworks closer together by showing that their expressivity coincides under logarithmic-precision fixed-point arithmetic.
We begin by showing that any \AHATAcr can be simulated by a sufficiently precise \SMATAcr \emph{exactly}.\footnote{The proofs of all claims in the main text appear in \cref{app:proofs}.}
\begin{restatable}{lemma}{SMATsApproximateAHATsLemma} \label{lem:smats-approximate-ahats}
    Let $\set{\tf}_{\strlen \in \N}$ be a logarithmic-precision $\LUniform$ \AHATAcr family.
    There exists a logarithmic-precision $\LUniform$ \SMATAcr family $\set{\tf'}_{\strlen \in \N}$ such that for any input $\str \in \alphabet^{\strlen}$ and $\strlen \in \N$, the outputs of $\tf_\strlen$ and $\tf'_{\strlen}$ match.
\end{restatable}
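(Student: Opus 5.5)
We construct $\tf'_\strlen$ from $\tf_\strlen$ by keeping every component (embeddings, positional encodings, value maps, MLPs, layer norms) unchanged and replacing each average-hard-attention head with a softmax head whose temperature $\tempParamFun{\strlen}$ is small enough that, \emph{after fixed-point rounding}, the softmax weights coincide with the hardmax weights. The plan has three steps. First, we show that the attention scores of $\tf_\strlen$ live on a discrete grid. Second, we choose $\tempParam$ so that every non-maximal weight underflows to zero while the maximal weights stay exact. Third, we check that the construction is $\LUniform$ and keeps logarithmic precision.

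\textbf{Step 1: a uniform score gap.} All query and key entries lie in $\F_\numPrec$, so every score $\inner{\vq}{\vk}$ is a fixed-point number at resolution $2^{-\numPrec}$ after rounding. Two distinct scores therefore differ by at least $\attnGap \defeq 2^{-\numPrec}$, which is $1/\polyFun{\strlen}$ because $\numPrec = \bigThetaFun{\log\strlen}$. The magnitude of any score is likewise bounded by $2^{\numPrec} = \polyFun{\strlen}$.

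\textbf{Step 2: choose the temperature.} Fix a query and let $m$ be its maximal score, attained at $k$ positions. Subtracting $m$ (or noting that the softmax is shift-invariant in the exact computation), each maximizer receives unnormalized weight $1$, and each non-maximizer receives at most $\exp(-\attnGap/\tempParam)$. The exact softmax weight of a maximizer is therefore within $\strlen e^{-\attnGap/\tempParam}$ of $1/k$, and every other weight is at most $e^{-\attnGap/\tempParam}$. We choose
\begin{equation}
    \tempParam = \frac{\attnGap}{c\,\numPrec\,\log \strlen}
\end{equation}
for a suitable constant $c$. This makes the error term $\strlen e^{-\attnGap/\tempParam}$ smaller than $2^{-\numPrec-2}$. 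Under the rounding convention of \cref{app:fixed-point-arithmetic}, the non-maximal weights then round to $0$ and the maximal ones round to $\roundOpFun{1/k}$, which is exactly the value the \AHATAcr computes. The weighted sum of values is therefore identical bit for bit.

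The main obstacle is the precise fixed-point semantics of softmax. Exponentials of large negative arguments must underflow to $0$, and the normalizing sum and division must be rounded in the same way as the \AHATAcr's averaging. If the arithmetic does not subtract the maximum, $\exp(m/\tempParam)$ overflows. We would handle this first by using the non-associativity and saturation conventions of the appendix. Failing that, we would add a preceding head computing $m$ (via the hardmax itself, which is exactly what we are simulating, so an extra layer is circular). The cleaner option is therefore to rely on the appendix's definition of $\softmaxT$ as rounded $\exp(\cdot - \max)$ normalization. If $1/\tempParam$ must also be representable in $\F_\numPrec$, we note that $1/\tempParam = \polyFun{\strlen}\cdot\bigOFun{\log^2\strlen}$ needs only $\bigOFun{\log\strlen}$ bits. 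We increase $\numPrec$ by a constant factor if needed, which keeps the precision logarithmic.

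\textbf{Step 3: uniformity.} $\tm_1'$ runs $\tm_1$, copying its output, and additionally writes $\tempParamFun{\strlen}$. This value is computable in logspace from $1^\strlen$, since it is an explicit arithmetic expression in $\strlen$ and $\numPrec$. $\tm_2$ is unchanged. Thus $\set{\tf'_\strlen}$ is an $\LUniform$ logarithmic-precision \SMATAcr family whose outputs match those of $\tf_\strlen$ on every input.
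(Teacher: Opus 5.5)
Your construction is the paper's: keep every parameter of $\tf_\strlen$, replace each hard-attention head by $\softmaxT$ with $1/\tempParamFun{\strlen}\in\polyFun{\strlen}$ derived from the $2^{-\numPrec}$ score gap and the $2^{\numPrec}$ magnitude bound, emit $\tempParamFun{\strlen}$ as one extra logspace-computable quantity, and absorb representability issues with a constant-factor increase in precision. Where you differ is the route to exact agreement.

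The paper argues at the level of whole layer outputs in $\R$-arithmetic. It imports the per-layer $\ell_1$ bound of \citet[Lem.~25]{yang2025simulatinghardattentionusing} and inflates it by a Lipschitz constant $K_2$ for the value map, MLP and layer norm. It then tunes $\tempParam$ so the total stays below the rounding threshold and inducts over layers. It handles the unrepresentable scaled scores with its mixed-precision allowance for the attention sub-layer, not by max-subtraction, which is not part of its definition of $\softmaxT$.

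You instead show that the rounded weight vectors already coincide in $\F^{\strlen}$. Since the paper's $\attnNorm$ maps into $\F^{*}$ and is followed by the same fixed-point product with $\valueMtx$, everything downstream is then literally identical. So you need no external lemma, no Lipschitz constants and no error propagation. The cost is that you rely on the weights being rounded before the value product, and on a numerically stable softmax. For the latter, you can drop the max-subtraction assumption and instead treat the weights as $\numPrec$-precise, i.e.\ the rounded exact softmax, which is shift-invariant; the paper asserts this for growing precision.

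One repair is needed. Being within $2^{-\numPrec-2}$ of $1/k$ does not by itself make a maximizer's weight round to $\roundOpFun{1/k}$. The reason is that $1/k$ is usually off-grid and may sit just above a rounding midpoint. You need the error below $\min_a \abs{1/k-(2a+1)2^{-\numPrec-1}}$, which is at least $1/(k\,2^{\numPrec+1})$ whenever $k<2^{\numPrec+1}$ (e.g.\ $2^{\numPrec}\geq\strlen$). Your choice of $\tempParam$ makes $\strlen e^{-\attnGap/\tempParam}$ quasi-polynomially small, so this holds for large $\strlen$. Under step-by-step max-subtracted evaluation, the non-maximal exponentials even underflow to exactly $0$, giving weights exactly $\roundOpFun{1/k}$.
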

See \cref{app:smat-ahat-proofs} for the full proof.
Thus: 
\begin{restatable}{corollary}{smatAhatCor}
    For any $d \in \N$, we have $\LUniform~\SMATAcr~\LPTClassFun{d}{*}{\dagger} \supseteq \LUniform~\AHATAcr~\LPTClassFun{d}{*}{\dagger}$, where $*$ is a placeholder for $\logFn$ or $\polyFn$, $*$ is a placeholder for $\constantFn, \logFn$, or $\polyFn$, and \SMATAcr/\AHATAcr refer to the type of attention in the transformer family.
\end{restatable}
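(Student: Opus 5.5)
The corollary follows from \cref{lem:smats-approximate-ahats} applied layer by layer to the looped padded setting. Fix $d$, a precision regime in $\{\logFn,\polyFn\}$ and a width regime in $\{\constantFn,\logFn,\polyFn\}$. Take a language recognized by an $\LUniform$ \AHATAcr \LPTAcr $(\tf, r, \padlen)$ with $r(\strlen) = \bigThetaFun{\log^d \strlen}$ and polynomial $\padlen$. We build an \SMATAcr \LPTAcr $(\tf', r, \padlen)$ with the \emph{same} loop count and padding length. Here $\tf'_\strlen$ is obtained from $\tf_{\strlen+\padlen(\strlen)}$, the model run on the padded length, by the transformation of \cref{lem:smats-approximate-ahats}. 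The same $\padlen$ and $r$ keep $d$ and the polynomial padding unchanged.

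The core step is to check that the lemma's construction is \emph{local to each attention layer}. It should replace each average-hard attention head by a softmax head with an appropriately small temperature $\tempParamFun{\strlen}$, possibly with rescaled scores and a few auxiliary dimensions. Everything else stays untouched: the embedding, PEs, MLPs, residual connections, and the designated block structure. Because the simulation is exact, meaning bitwise identical activations under fixed-point arithmetic, it composes. Induction over the unrolled sequence of layer applications, i.e.\ the pre-loop layers, then $r(\strlen)$ copies of the block, then the post-loop layers, shows that every hidden state of $\tf'$ on $\str \circ \padSym^{\padlen(\strlen)}$ equals that of $\tf$. In particular the outputs agree. The looped block of $\tf'$ is the image of the looped block of $\tf$, so it is again a fixed block applied $r(\strlen)$ times.

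Next we check that the resource regime is preserved. The lemma yields logarithmic precision from logarithmic precision. For the polynomial-precision case we need the analogous statement. The plan is to observe that the lemma's argument only needs precision at least $\bigOmegaFun{\log \strlen}$: enough bits to make the softmax mass on non-maximal positions round to zero, and to represent $1/|\argmax|$ exactly as \AHATAcr does. Hence it goes through verbatim for any precision $\numPrec \ge \bigOmegaFun{\log \strlen}$, with the length parameter now $\strlen+\padlen(\strlen) = \polyFun{\strlen}$, whose logarithm is still $\bigOFun{\log\strlen}$. The width grows by at most an additive constant per head (or not at all), so each width class $\constantFn,\logFn,\polyFn$ is preserved.

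Finally, $\LUniformity$ is preserved. A logspace machine for $\tf'$ runs the machine for $\tf$ on input length $\strlen+\padlen(\strlen)$; this is computable in logspace since $\padlen$ is a polynomial. It then rewrites each attention head and outputs the temperature, which is logspace-computable by the lemma. The PE machine is reused unchanged. I expect the main obstacle to be the extension to polynomial precision, together with confirming that temperature scaling does not interact badly with the padded length. If the lemma's proof uses a temperature depending on $2^{-\numPrec}$ in a precision-specific way, I would redo the rounding argument. The requirement is that $\exp(-\Delta/\tempParam)\cdot \strlen$ falls below $2^{-\numPrec-1}$ for the minimal score gap $\Delta \ge 2^{-\numPrec}$. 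This forces $\tempParam \le 2^{-\numPrec}/\bigOFun{\numPrec}$, which is still a $\numPrec$-bit-representable, logspace-computable quantity.
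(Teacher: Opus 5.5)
Your proposal is correct and follows essentially the paper's route. The corollary is read off from \cref{lem:smats-approximate-ahats}: replacing each hardmax head by a low-temperature softmax head leaves parameters, PEs, padding, and the looped-block structure untouched, so exact per-layer agreement propagates through the unrolled loop. The lemma's argument also only needs $\numPrec = \bigOmegaFun{\log\strlen}$, so it carries over to polynomial precision.

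One small correction to your fallback remark. A temperature $\tempParam \le 2^{-\numPrec}/\bigOFun{\numPrec}$ lies below the smallest positive element $2^{-\numPrec}$ of $\F_\numPrec$, and the scaled scores $s/\tempParam$ overflow $\maxFNum$, so neither quantity is $\numPrec$-bit representable. Your error requirement should also carry the factor $x_{\max} \le 2^{\numPrec}$ from the value magnitudes, though this only changes constants. What makes all of this harmless is the mixed-precision allowance $\numPrec' = \bigThetaFun{\numPrec}$ inside the attention sub-layer (Step~4 of the lemma's proof). That allowance still suffices at polynomial precision because $\log_2(1/\tempParam) = \numPrec + \bigOFun{\log\numPrec}$.
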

\cref{lem:smats-approximate-ahats} lets us translate any (fully uniform or $\LUniform$) \AHATAcr into an $\LUniform$ \SMATAcr by formalizing the intuition that $\LUniform$ \SMATsAcr are at least as powerful as fully-uniform \AHATsAcr.
This lets us transfer existing expressivity lower bounds on fully-uniform \AHATsAcr to $\LUniform$ \SMATsAcr as well.
It remains unclear whether \emph{fully uniform} \SMATsAcr can simulate (fully uniform) \AHATsAcr; some degree of non-uniformity seems necessary to implement the temperature scaling required to focus on individual positions.

\section{Padded Constant-depth Transformers Are Constant-depth Circuits} \label{sec:constant-depth-results}

We now describe the expressivity of constant-depth transformers.
Their expressivity under different idealizations has been studied extensively.
It is known that fully-uniform log-precision transformers fall under $\LUniform~\TCZero$ \citep{merrill-sabharwal-2023-parallelism}.
Recent work has refined these results, showing that polynomially padded ($\LUniform$) constant-precision log-width \SMATsAcr are equivalent to ($\LUniform$) $\ACZero$, while polynomially padded ($\LUniform$) log-precision log-width \SMATsAcr are equivalent to ($\LUniform$) $\TCZero$ \citep{li2024chain,london2025pausetokensstrictlyincrease}.
We restate these results here for reference (see also \Cref{fig:expressivity-landscape}).
\begin{theorem}[\text{\citealp[][Thms. 4.1 and 4.5]{london2025pausetokensstrictlyincrease}}] \label{thm:london-ac-equivalence}
    $\LUniform~\LPTClassFun{0}{\constantFn}{\logFn} = \LUniform~\ACZero$ and
    $\LUniform~\LPTClassFun{0}{\logFn}{\logFn} = \LUniform~\TCZero$.
\end{theorem}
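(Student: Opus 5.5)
This theorem is quoted from \citet{london2025pausetokensstrictlyincrease}, so the plan is to reconstruct their two-directional argument for each equality. Both equalities are proved by two inclusions. The upper bounds are $\LPTClassFun{0}{\constantFn}{\logFn} \subseteq \LUniform~\ACZero$ and $\LPTClassFun{0}{\logFn}{\logFn} \subseteq \LUniform~\TCZero$. The lower bounds are the reverse inclusions, obtained by having padded transformers evaluate circuits gate by gate.

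\textbf{Upper bounds.} Fix $\strlen$ and the padded length $\strlen' = \strlen + \padlen(\strlen) = \polyFun{\strlen}$. Each residual-stream vector has $\hiddDim\numPrec$ bits, which is $\bigOFun{\log \strlen}$ in the first case and $\bigOFun{\log^2 \strlen}$ in the second.

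\begin{itemize}
\item The MLP and the query/key/value maps act positionwise. Any function on $\bigOFun{\log \strlen}$ bits is a lookup table of polynomial size, so with constant precision each is an $\ACZero$ circuit of depth $\bigOFun{1}$ (a DNF of polynomial size). With log precision each is a fixed-point arithmetic circuit, which lies in $\TCZero$.
\item Attention needs more care. With constant precision and width $\logFn$, each query--key score is an inner product of $\bigOFun{\log\strlen}$ constant-precision entries under fixed-point rounding. Its value therefore ranges over a constant-size set, and it is computable in $\ACZero$ because the sum saturates.
\item For the averaging step, the max (or the softmax normalizer) over $\strlen'$ positions is the main obstacle in the $\ACZero$ case. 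The key fact I would invoke is that in $\numPrec = \bigOFun{1}$ fixed-point arithmetic, sums of many terms saturate or round after a constant threshold. Hence an attention average is determined by counting only up to a constant, which $\ACZero$ can do with unbounded fan-in $\orGate$s over constant-size subsets. Softmax weights with constant precision are likewise determined by thresholded counts.
\item In the $\TCZero$ case, iterated addition and division of $\bigOFun{\log\strlen}$-bit numbers are in $\TCZero$, so softmax and average-hard attention compute exactly.
\item Constant depth composes to a constant-depth circuit. $\LUniformity$ follows because the logspace machines $\tm_1,\tm_2$ producing parameters and positional encodings can be plugged into a logspace generator of the gate list.
\end{itemize}

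\textbf{Lower bounds.} Take an $\LUniform$ circuit family of constant depth $k$ and size $s(\strlen)=\polyFun{\strlen}$, and pad with enough symbols to assign one padding position per gate (and per wire, if convenient).

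\begin{itemize}
\item The positional encoding at padding position $\stridx$ writes, in $\bigOFun{\log\strlen}$ bits, the gate's type, its layer, and the identity of its predecessors. Uniformity ensures $\tm_2$ can compute this in logspace.
\item Layer $i$ of the transformer evaluates circuit layer $i$. Each gate position attends, via a query matching its own index against an "is my argument" key encoded in the wire positions, uniformly to its input positions and reads the average value.
\item An $\andGate$/$\orGate$ tests whether the average equals $1$ or exceeds $0$. A threshold gate needs the exact fraction, which requires $\logFn$ precision. With constant precision one can still distinguish "some input is on" from "none" by using hard-attention-style selection on the maximum value, which suffices for $\ACZero$.
\item The delicate point is realizing this selective attention with constant precision and log width. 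For this I would use width $\bigOFun{\log\strlen}$ to encode indices bitwise in $\pm1$ form, so that a matching key maximizes the inner product by a gap of at least $1$. For \SMATsAcr I would take a small temperature so the attention concentrates, and use the rounding behavior of fixed point to make the weights exact.
\item Finally, the output position reads the output gate's value.
\end{itemize}
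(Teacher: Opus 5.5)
Your plan follows the architecture of \citet{london2025pausetokensstrictlyincrease}, which the paper cites rather than reproves: per-gate and per-wire padding positions addressed by index-matching attention, and a component-wise circuit simulation for the upper bounds. The log-precision half is essentially right, modulo centering the scores so that $\exp$ does not overflow. The constant-precision half, however, has a genuine gap in the lower bound and a flawed step in the upper bound.

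\textbf{Lower bound.} The step you flag as delicate fails at constant precision. With $\numPrec = \bigOFun{1}$, the score $\innerProd{\vq}{\vk}_\numPrec$ is the left-to-right sum $\textsc{sum}_\numPrec$, clamped to $[-\maxFNum, \maxFNum]$ with $\maxFNum = \bigOFun{1}$. The exact score of a matching $\pm 1$ key, however, is $\ceil{\log\strlen}$. So the target's score is clamped to $\maxFNum$, and so is the score of every key that differs from the target only in its first coordinate: the products $-1, +1, +1, \ldots$ climb back to $\maxFNum$ once $\ceil{\log\strlen} - 2 > \maxFNum$. Target and non-target then tie, so no temperature or softmax rounding can separate them; the gap you rely on does not exist.

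The missing idea is to keep every partial sum inside the representable range by interleaving each signed bit with a compensating coordinate, as in \citet[][Lem.~E.3]{li2024chain} and \cref{lem:our-pos-enc-properties} (this is the key--query block of \cref{eq:london-pe}). Take $\vk_{j} = \interleaveFun{\sbinFun{j}}{\one}$ and $\vq = c \cdot \interleaveFun{\sbinFun{\tgtIdx}}{(-\one)}$ for a constant $c \leq \maxFNum$. Each matching pair contributes $+c$ then $-c$, so the running sum oscillates in $\{0, c\}$ and ends at $0$. After the first mismatching pair, it stays at most $-c$ at every pair boundary. With $c = \maxFNum$, every off-target $\exp$ rounds to exactly $0$ (\cref{lem:our-constant-precision-properties}).

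Two smaller points on the same construction. The per-wire positions are necessary, not merely convenient: unbounded fan-in and fan-out cannot be written into $\bigOFun{\log\strlen}$ bits of a gate's PE. Also, each wire position needs its own attention layer to fetch its source's value before the gate aggregates.

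\textbf{Upper bound.} Your $\ACZero$ argument has a repairable flaw. Saturation makes a sum depend only on counts up to a constant when all summands are nonnegative, which covers the softmax normalizer. It does not cover the aggregation $\sum_{j} \alpha_{ij} v_{j}$, whose value coordinates may be negative. There the clamped left-to-right sum is order-dependent: $k$ terms $+1$ followed by $k$ terms $-1$ give $-\maxFNum$, while the reverse order gives $+\maxFNum$ (for $k \geq 2\maxFNum$), with identical counts. So no circuit that merely counts up to a constant computes it.

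What you need instead is that iterated clamped addition over the constant-size set $\F_\numPrec$ is nevertheless in $\ACZero$. Each step $y \mapsto \roundOpFun{y + x}$ is a monotone map of the finite chain $\F_\numPrec$, so every element $f$ of the transition monoid is monotone. Hence each orbit $y, f(y), f^2(y), \ldots$ is monotone and stabilizes, giving $f^{N} = f^{N+1}$, so the monoid is aperiodic. The iterated sum is therefore $\FO[<]$-definable and lies in $\FOUniform~\ACZero$. This is the content of the iterated-addition result \citet[][Thm.~C.3]{london2025pausetokensstrictlyincrease} that the paper uses in the proof of \cref{lem:poly-width-upper-bound}.
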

Missing, however, is the placement of $\LUniform$ \AHATsAcr and \emph{constant-width} \SMATsAcr.  
Here, we extend the known relationships by showing:
\begin{itemize}[topsep=0pt,noitemsep,wide]
    \item \textbf{A lower bound} (\cref{sec:constant-depth-lb}): We tighten the TC$^0$ direction of \cref{thm:london-ac-equivalence} (i.e., $\LUniform~\TCZero \subseteq \LUniform~\LPTClassFun{0}{\logFn}{\textcolor{ETHRed}{\logFn}}$) by showing that \emph{constant}-width log-precision $\LUniform$ transformers already suffice: $\LUniform~\TCZero \subseteq \LUniform~\LPTClassFun{0}{\logFn}{\textcolor{ETHRed}{\constantFn}}$.
    This shows that precision can compensate for width.
    In contrast to \citet{merrill2025exactexpressivepowertransformers}, who establish the equivalence of fully-uniform growing-precision \AHATsAcr to $\FOUniform~\TCZero$, we connect $\LUniform$ transformers to $\LUniform~\TCZero$.
    \item \textbf{An upper bound} (\cref{sec:constant-depth-ub}): We show that even with maximal resources---polynomial width in the constant-precision case and polynomial width and polynomial precision in the growing-precision case---transformers cannot exceed $\LUniform~\ACZero$ or $\LUniform~\TCZero$, respectively: We have $\LUniform~\LPTClassFun{0}{\constantFn}{\polyFn} \subseteq \LUniform~\ACZero$ in the constant-precision case and $\LUniform~\LPTClassFun{0}{\polyFn}{\polyFn} \subseteq \LUniform~\TCZero$ in the growing-precision case.
\end{itemize}
We show these bounds for \emph{both} \AHATsAcr and \SMATsAcr, revealing that the distinction between the two is not important in this regime.
Together, these bounds sandwich all intermediate regimes, characterizing their expressivity.

\subsection{Lower Bound} \label{sec:constant-depth-lb}
Here, we outline how constant-width, log-precision transformers can compute $\LUniform~\TCZero$.\footnote{We generalize \citeposs{london2025pausetokensstrictlyincrease} constructions rather than using \cref{lem:smats-approximate-ahats} and \citeposs{merrill2025exactexpressivepowertransformers} equivalences as lower bounds since \citeposs{merrill2025exactexpressivepowertransformers} results concern $\FOUniform~\TCZero$ rather than its superset $\LUniform~\TCZero$. 
By extending \citeposs{london2025pausetokensstrictlyincrease} constructions directly, we arrive at the tighter lower bound of $\LUniform~\TCZero$.}

\paragraph{Intuition: Constant-width PEs.}
The key challenge in simulating $\LUniform~\TCZero$ with constant-width transformers is encoding positional information in constant space; \citet{london2025pausetokensstrictlyincrease} use logarithmic width to store binary PEs.
We use \emph{unit-length} PEs (cf. \cref{lem:unit-len-pe}) that encode positions in two dimensions while maintaining sufficient separation for attention-based indexing.
The encoding maps a position $\stridx$ to 
$\begin{pmatrix}
\sqrt{\sfrac{1}{\tgtIdx}} & \sqrt{1 - \sfrac{1}{\tgtIdx}}
\end{pmatrix}^\top$
where $\tgtIdx$ is the position that the $\stridx\textsuperscript{th}$ symbol has to attend to (e.g., if the $\stridx\textsuperscript{th}$ symbol encodes an input to a gate, $\tgtIdx$ would correspond to the position of that gate).
This encoding has approximately unit length (the approximation is due to fixed-point rounding) while ensuring sufficient \emph{gap} between attending to $\tgtIdx$ and $\stridx' \neq \tgtIdx$ to be able to focus on $\tgtIdx$ exclusively.
Following \citet{london2025pausetokensstrictlyincrease}, this allows us to encode any $\LUniform$ circuit into the PEs, enabling simulation by a transformer.
This contrasts with \citeposs{merrill2025exactexpressivepowertransformers} construction, which stores a $\FOUniform~\TCZero$ circuit family (with $\FOUniform~\TCZero \subseteq \LUniform~\TCZero$) in the parameters of a fully uniform transformer family.
Thus:
\begin{restatable}{lemma}{ConstantWidthLB} \label{lem:lp-constant-width-lb}
    $\LUniform~\TCZero \subseteq \LUniform~\LPTClassFun{0}{\logFn}{\constantFn}$.
\end{restatable}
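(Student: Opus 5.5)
We follow the strategy behind \cref{thm:london-ac-equivalence} and change only how the circuit and positional information are stored. The starting point is \citet{london2025pausetokensstrictlyincrease}'s simulation of an $\LUniform~\TCZero$ family $\{\circuit_\strlen\}$.

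Fix such a family, with size $s(\strlen)=\polyFun{\strlen}$ and constant depth $k$. We first put $\circuit_\strlen$ in a layered normal form: every gate is a \majGate gate (or a \notGate gate), and wires go only between consecutive levels. This normalization is computable in logspace, so uniformity is preserved.

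We pad the input with $\padlen(\strlen)=\polyFun{\strlen}$ symbols and assign one padding position to each of the following objects:
\begin{itemize}
\item each gate $\gate$, a \emph{gate token};
\item each wire $(\gate',\gate)$, an \emph{argument token};
\item each input bit, an \emph{input token}.
\end{itemize}
The logspace machine $\tm_2$ computes, for a position $\stridx$, which object it represents. From this it obtains the target position $\tgtIdx$ that the token must read from: the position of $\gate'$ for an argument token, the relevant input position for an input token, and the gate's own position for a gate token. It also obtains a constant number of flags, namely the gate type, the level, and the threshold normalized to $[0,1]$ as a $\bigOFun{\log\strlen}$-bit number. The positional encoding is constant-width. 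It consists of these flags, the unit-length vector $\unitLenPEFun{\tgtIdx}=\bigl(\sqrt{1/\tgtIdx},\sqrt{1-1/\tgtIdx}\bigr)^\top$ used as the query, and the key vector $\unitLenPEFun{\stridx}$. Both vectors are rounded to $\numPrec=\bigThetaFun{\log\strlen}$ bits, and these roundings are logspace-computable.

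The main obstacle is the attention gap (\cref{lem:unit-len-pe}). We must show that, after fixed-point rounding, $\innerProd{\unitLenPEFun{\tgtIdx}}{\unitLenPEFun{\stridx'}}$ is uniquely maximized at $\stridx'=\tgtIdx$, with a gap of at least $1/\polyFun{\strlen}$. The exact inner product equals $\cos(\theta_{\tgtIdx}-\theta_{\stridx'})$ with $\theta_m=\arccos\sqrt{1/m}$. Adjacent angles differ by $\Omega(m^{-3/2})$, so $1-\cos$ of that difference is $\Omega(m^{-3})=\Omega(\strlen^{-c})$. Choosing $\numPrec=C\log\strlen$ with $C$ large enough makes the rounding error $2^{-\numPrec}$ much smaller than this gap.

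With the gap in hand, \AHATAcr attention selects $\tgtIdx$ exactly. For \SMATAcr we either choose a temperature $\tempParam=1/\polyFun{\strlen}$ so that the weight on the other positions rounds to $0$ in $\F_\numPrec$, or we invoke \cref{lem:smats-approximate-ahats} to carry over the \AHATAcr construction. Each argument token thus copies a single bit, the current value of its source, into its residual stream using constant width.

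Gate evaluation then takes a constant number of layers per circuit level. In the first layer, argument tokens retrieve their source's value as described above. In the second layer, each gate token averages over its argument tokens: the relevant argument tokens carry the same target, so a key tied to that target picks them out. This average is the fraction of active inputs, exact to within $2^{-\numPrec}$ because fan-in is polynomial. The feedforward layer then compares this fraction with the stored normalized threshold, writes the \majGate bit, and applies negation where needed.

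Repeating this for each of the $k$ levels gives a constant-depth, constant-width, log-precision transformer. The final position reads the output gate. $\tm_1$ is trivial because the weights are fixed apart from $\numPrec$ and $\tempParam$, and $\tm_2$ is logspace because it only requires logspace access to the description of $\circuit_\strlen$ together with logspace computation of the square roots.
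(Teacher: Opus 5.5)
Your route is the paper's: take London and Kanade's $\TCZero$ simulation and replace each $\Theta(\log\strlen)$-width binary pointer with a two-dimensional unit-length vector. Your angle argument for the gap is a valid self-contained substitute for the paper's appeal to \citet[Lem.~8]{merrill2024the}: with $\theta_m=\arccos\sqrt{1/m}$, the quantity $1-\cos$ of the smallest angle difference is $\bigOmegaFun{m^{-3}}$. Your low-temperature treatment of \SMATsAcr is likewise a fine alternative to the $\strlen^5$-scaled query with a $-\strlen^5$ offset (\cref{lem:lp-pos-enc-properties,lem:focus-on-positions-pes}).

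The step that fails as written is the collection layer. Your encoding gives each token one pointer $\unitLenPEFun{\tgtIdx}$ and its own-position key $\unitLenPEFun{\stridx}$. For the argument token of a wire $(\gate',\gate)$ you set $\tgtIdx$ to the position of the \emph{source} $\gate'$. So the argument tokens of $\gate$ do not ``carry the same target'': their targets are the distinct sources $\gate'_1,\ldots,\gate'_k$. The destination $\gate$ they share is stored nowhere.
\begin{itemize}
\item If $\gate$'s token queries with $\unitLenPEFun{\mathrm{pos}(\gate)}$ against keys $\unitLenPEFun{\tgtIdx}$, it selects itself plus the argument tokens of wires \emph{leaving} $\gate$, i.e., its fan-out rather than its fan-in.
\item Against keys $\unitLenPEFun{\stridx}$, it selects only itself.
\end{itemize}
In neither case is the computed average the fraction of active inputs of $\gate$, so the threshold test is wrong.

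The repair is to store a second pointer per argument token, in the spirit of the paper's two blocks $\unitLenPEFun{\tgtIdx_1}$ and $\unitLenPEFun{\tgtIdx_2}$ in $\phi_{\mathrm{unit}}$. Keep your source pointer as the read-layer query against own-position keys. Add a destination pointer $\unitLenPEFun{\mathrm{pos}(\gate)}$ to serve as the key in the collection layer. For gate tokens, either use a pointer that never matches, or use a self-pointer with the value slot zeroed and the threshold renormalized by $k+1$. This adds two coordinates, so the width stays constant and the rest of your argument goes through.

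A smaller correction: the average is not accurate to within $2^{-\numPrec}$. Each rounded weight can differ from $1/k$ by up to $2^{-\numPrec-1}$, so the sum can be off by up to $k\,2^{-\numPrec-1}$. Separating $j/k$ from $(j+1)/k$ therefore needs $2^{\numPrec}\gg k^2$, which $\numPrec=C\log\strlen$ with $C$ large still provides.
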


\paragraph{Precision compensates for width.}
An immediate corollary of \cref{lem:lp-constant-width-lb} is that logarithmic precision can compensate for logarithmic width---that log-precision constant-width transformers ($\LPTClassFun{0}{\logFn}{\constantFn}$) are more powerful than those with constant precision but logarithmic width ($\LPTClassFun{0}{\constantFn}{\logFn}$): 
\begin{restatable}{corollary}{LPIncludesFP} \label{lem:lp-included-fp}
    $\LUniform~\LPTClassFun{0}{\logFn}{\constantFn} \supset \LUniform~\LPTClassFun{0}{\constantFn}{\logFn}$.
\end{restatable}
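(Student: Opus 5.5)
The plan is to chain the two known equivalences of \cref{thm:london-ac-equivalence} with \cref{lem:lp-constant-width-lb} and then separate the resulting circuit classes.

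First, for the inclusion: by \cref{thm:london-ac-equivalence}, $\LUniform~\LPTClassFun{0}{\constantFn}{\logFn} = \LUniform~\ACZero$. Since every $\ACZero$ circuit is a $\TCZero$ circuit with the same uniformity, we have $\LUniform~\ACZero \subseteq \LUniform~\TCZero$. By \cref{lem:lp-constant-width-lb}, $\LUniform~\TCZero \subseteq \LUniform~\LPTClassFun{0}{\logFn}{\constantFn}$. Composing these gives $\LUniform~\LPTClassFun{0}{\constantFn}{\logFn} \subseteq \LUniform~\LPTClassFun{0}{\logFn}{\constantFn}$.

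Second, for strictness, I will exhibit a language in the larger class but not the smaller. The natural witness is \textsc{Parity} (or \textsc{Majority}). Parity lies in $\FOUniform~\TCZero \subseteq \LUniform~\TCZero$, via a constant-depth threshold circuit built from exact-count gates. By \cref{lem:lp-constant-width-lb}, it is therefore recognized by some $\LUniform~\LPTClassFun{0}{\logFn}{\constantFn}$ family.

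On the other side, Parity is not in non-uniform $\ACZero$, by Furst--Saxe--Sipser and Håstad, and hence not in $\LUniform~\ACZero$. By the first part of \cref{thm:london-ac-equivalence}, $\LUniform~\ACZero = \LUniform~\LPTClassFun{0}{\constantFn}{\logFn}$, so Parity is not in $\LUniform~\LPTClassFun{0}{\constantFn}{\logFn}$. This separates the two classes.

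The main obstacle is only bookkeeping. I need the class definitions to be compared at the level of languages, with the same padding and acceptance conventions on both sides, so that \cref{thm:london-ac-equivalence} applies verbatim to the \AHATAcr and \SMATAcr variants in question. Given that it does, the argument reduces to the classical $\ACZero \subsetneq \TCZero$ separation.
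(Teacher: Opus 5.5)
Your proposal is correct and follows the same argument the paper intends. That argument combines $\LUniform~\LPTClassFun{0}{\constantFn}{\logFn} = \LUniform~\ACZero$ from \cref{thm:london-ac-equivalence} with $\LUniform~\TCZero \subseteq \LUniform~\LPTClassFun{0}{\logFn}{\constantFn}$ from \cref{lem:lp-constant-width-lb}, and uses \textsc{Parity} to separate $\ACZero$ from $\TCZero$. On your bookkeeping concern, you only need the direction $\LUniform~\LPTClassFun{0}{\constantFn}{\logFn} \subseteq \LUniform~\ACZero$, and \cref{lem:poly-width-upper-bound} supplies it for both \AHATsAcr and \SMATsAcr.
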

To the best of our knowledge, this is the first result formally showing that precision can compensate for width in transformer expressivity---for both \AHATsAcr and \SMATsAcr.
While \citeposs{merrill2025exactexpressivepowertransformers} results on fully-uniform \AHATsAcr lower bound the expressivity of constant-width transformers, they do not address growing width.
Moreover, \citeposs{li2024chain} and \citeposs{london2025pausetokensstrictlyincrease} results only consider growing-width transformers, leaving the expressivity of constant-width ones open. 

\subsection{Upper Bounds} \label{sec:constant-depth-ub}

The following lemma formalizes that, even with polynomial width, constant-depth transformers cannot exceed $\LUniform~\ACZero$ with constant precision and $\LUniform~\TCZero$ with growing precision.
Intuitively, a constant-depth transformer only performs a constant number of attention operations.
Each attention layer computes a weighted average of input representations, which can, even with polynomial resources, be simulated by circuits of constant depth.
Polynomial precision and width do not enable the transformer to perform fundamentally more powerful computations---they only provide more space to store intermediate values.
Thus:
\begin{restatable}{lemma}{PolyWidthUB} \label{lem:poly-width-upper-bound}
    $\LUniform~\LPTClassFun{0}{\constantFn}{\polyFn} \subseteq \LUniform~\ACZero$ and $\LUniform~\LPTClassFun{0}{\polyFn}{\polyFn} \subseteq \LUniform~\TCZero$.
\end{restatable}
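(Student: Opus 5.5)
The plan is to simulate a padded constant-depth transformer $\tf_\strlen$ directly by a circuit. The circuit is built layer by layer, and then $\LUniformity$ is checked. Write $\strlen' = \strlen + \padlen(\strlen) = \polyFun{\strlen}$ for the padded length. Every activation is a vector of $\hiddDim = \polyFun{\strlen}$ entries, each a $\numPrec$-bit fixed-point number. In the constant-precision case every scalar therefore takes $\bigOFun{1}$ bits. In the polynomial-precision case it takes $\polyFun{\strlen}$ bits. Either way the whole residual stream at one layer is a polynomial-size bit vector, and the circuit keeps one block of wires per position, per coordinate, per layer. The input layer is handled as follows. The symbol embedding is a constant lookup. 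The PE $\posencoding(\stridx,\strlen)$ is hard-wired as constants, which is legitimate because $\tm_2$ computes it in logspace from $(1^\strlen,\bin(\stridx))$. The parameters are hard-wired the same way, since $\tm_1$ outputs them.

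Next, each sublayer is shown to be computable by constant-depth circuits of the right type, using only a constant number of primitive operations per layer.

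\begin{enumerate}
\item \textbf{Products and MLPs.} A query--key dot product is a sum of $\hiddDim$ products of $\numPrec$-bit numbers. The MLP applies matrix products and a ReLU or similar to its input.
\begin{itemize}
\item With growing precision, iterated addition and multiplication of $\polyFun{\strlen}$ numbers of $\polyFun{\strlen}$ bits are in $\LUniform~\TCZero$, and so is division.
\item With constant precision, the key point is that fixed-point arithmetic rounds or saturates after each operation, with the paper's fixed evaluation order. Each operation then acts on $\bigOFun{1}$-bit operands. For a sum of $\hiddDim$ terms, I would use the convention in \cref{app:fixed-point-arithmetic}: a sum of many bounded-precision values saturates or depends only on counts up to a constant threshold. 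That makes it an $\ACZero$-computable function, since counting up to a constant is in $\ACZero$.
\end{itemize}
\item \textbf{Attention.}
\begin{itemize}
\item For \AHATsAcr, the circuit computes the maximum score via pairwise comparisons, which is in $\ACZero$ for constant bits. It then averages the selected values. With constant precision, that average rounds to a constant-bit number, which I would compute from approximate counts. With growing precision it is a division of iterated sums, which is in $\TCZero$.
\item For \SMATsAcr, $\exp$ on $\numPrec$-bit inputs is computed. With constant precision this is a table lookup. With polynomial precision it is a truncated series, which is $\TCZero$ via iterated multiplication.
\end{itemize}
Composing a constant number of such layers gives constant depth and polynomial size.
\end{enumerate}

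Uniformity is then checked. A logspace machine can output the wiring, because each gadget (iterated addition, division, comparison) has a standard $\LUniform$ description, while parameters and PEs are obtained by invoking $\tm_1$ and $\tm_2$ as subroutines. Logspace machines compose, so the overall construction stays in logspace. The output gate reads the designated output bit at the final position.

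The main obstacle is the constant-precision case with polynomial width and polynomial length. There the summations in attention normalization and in dot products range over polynomially many terms, and a naive simulation needs counting, which is not in $\ACZero$. My first approach is to argue from the fixed-point semantics, following \citet{li2024chain} and \citet{london2025pausetokensstrictlyincrease}, that a sequential rounded sum of $\bigOFun{1}$-bit numbers is determined by a constant-state automaton-like saturation behavior. Such a sum is then a function of whether at most a constant number of terms have each value. That function is first-order definable, hence in $\ACZero$. If the semantics instead rounds only at the end, I would fall back on the fact that the rounded average of constant-bit values with $2^{-\numPrec}$ resolution needs only approximate majority-free thresholds. Failing that, I would adopt the paper's precise convention from the appendix.
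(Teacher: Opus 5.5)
You follow the same layer-by-layer decomposition as the paper. Hard-wiring the outputs of $\tm_1$/$\tm_2$ as constant gates is a cleaner way to handle parameters and PEs than the paper's appeal to \cref{lem:logspace-to-ac0}, and your growing-precision half is fine. The gap is in the step you flag yourself: the constant-precision iterated sum over polynomially many terms. You claim a sequentially rounded sum depends only on how often each value occurs, counted up to a constant threshold. That is false once the summands have mixed signs, which they do in query--key dot products, MLP matrix products, and the value aggregation. Because $\F_\numPrec$ is closed under addition up to saturation, $[s+x]_\numPrec$ is just $s+x$ clamped to $[-\maxFNum,\maxFNum]$, and clamped walks are order-dependent. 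At $\numPrec=1$ ($\maxFNum=1.5$), the sequence $1,1,1,-1,-1,-1$ sums to $-1.5$, while $1,-1,1,-1,1,-1$ sums to $0$. For nonnegative summands, such as the softmax normalizer, your argument does work: the result is $\min(\text{total},\maxFNum)$ and every nonzero summand is at least $2^{-\numPrec}$. Your observation that the sum is computed by a constant-state automaton is true but not enough, since parity is also computed by a two-state automaton.

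The missing idea is that every transition $s\mapsto[s+x]_\numPrec$ is order-preserving on the finite chain $\F_\numPrec$. The transition monoid therefore consists of order-preserving maps. An order-preserving permutation of a finite chain is the identity, so all subgroups are trivial and the monoid is aperiodic. Iterated products over a fixed finite aperiodic monoid are star-free, hence $\FO[<]$-definable (Sch\"utzenberger, McNaughton--Papert), hence in $\FOUniform~\ACZero\subseteq\LUniform~\ACZero$ for any fixed summation order. This is what the paper imports by citing \citet[][Thm.~C.3]{london2025pausetokensstrictlyincrease}.

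Your other fallbacks do not work. If rounding happened only at the end, the statement would be false: the rounded mean of $\strlen$ bits at $\numPrec=1$ decides whether at least $\strlen/4$ of them are $1$, which is not in $\ACZero$. Approximate counts also cannot resolve a rounding boundary. The same issue affects your average-hard-attention step. You need the semantics in which the weight $1/k$ is itself rounded into $\F_\numPrec$, so it depends only on $\min(k,2^{\numPrec+1}+1)$, which is computable in $\ACZero$. The weighted value sum is then again an iterated rounded sum covered by the aperiodicity argument. The $\ACZero$ bound genuinely hinges on rounding after every binary operation, so that convention must carry the argument rather than serve as a fallback.
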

The separation intuitively arises from the inability to perform unbounded \emph{counting}, which is needed for computing threshold gates with constant-precision activations.
Thus, constant-precision transformers remain limited to $\ACZero$.

\paragraph{The constant-depth picture.}
Combining \cref{thm:london-ac-equivalence}, the constant-width lower bound in \cref{lem:lp-constant-width-lb}, and the upper bound in \cref{lem:poly-width-upper-bound} closes both sandwiches and yields the headline characterization of the constant-depth regime:
\begin{theorem}[Constant-depth expressivity collapse] \label{thm:constant-depth-main}
    As long as the transformer has sufficient volume (\cref{def:sufficient-volume}) and width at most polynomial, for any width $\hiddDimFun{\strlen}$:
    \begin{subequations}
    \begin{align}
        \LUniform~\LPTClassFun{0}{\constantFn}{\hiddDim} &= \LUniform~\ACZero, \\
        \LUniform~\LPTClassFun{0}{\logFn}{\hiddDim}      &= \LUniform~\TCZero.
    \end{align}
    \end{subequations}
\end{theorem}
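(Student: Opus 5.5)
The proof is a sandwich: lower bounds from the constructions already stated, upper bounds from \cref{lem:poly-width-upper-bound}. Monotonicity does the rest. Any family of width $\hiddDim$ can be padded with unused zero dimensions to any larger at-most-polynomial width without changing its outputs, and this padding keeps $\LUniform$ity because the logspace machines only emit extra zeros. So it suffices to place each target class between a small-width and a maximal-width regime. I read ``$\logFn$'' in the second equation as growing precision of at least logarithmic order; precision above $\bigThetaFun{\log \strlen}$ is absorbed by the polynomial-precision upper bound.

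For the constant-precision case, the sufficient volume hypothesis with $\numPrec = \bigThetaFun{1}$ forces $\hiddDimFun{\strlen} = \bigOmegaFun{\log \strlen}$.
\begin{itemize}
\item \emph{Lower bound.} Start from $\LUniform~\ACZero = \LUniform~\LPTClassFun{0}{\constantFn}{\logFn}$ (\cref{thm:london-ac-equivalence}). Padding the width up to $\hiddDim \geq c\log\strlen$ gives $\LUniform~\ACZero \subseteq \LUniform~\LPTClassFun{0}{\constantFn}{\hiddDim}$.
\item \emph{Upper bound.} Since $\hiddDim$ is at most polynomial, $\LUniform~\LPTClassFun{0}{\constantFn}{\hiddDim} \subseteq \LUniform~\LPTClassFun{0}{\constantFn}{\polyFn} \subseteq \LUniform~\ACZero$ by \cref{lem:poly-width-upper-bound}.
\end{itemize}

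For the growing-precision case, volume is automatically sufficient for every $\hiddDim \geq 1$.
\begin{itemize}
\item \emph{Lower bound.} \cref{lem:lp-constant-width-lb} gives $\LUniform~\TCZero \subseteq \LUniform~\LPTClassFun{0}{\logFn}{\constantFn}$, and width padding lifts this to any $\hiddDim$.
\item \emph{Upper bound.} $\LUniform~\LPTClassFun{0}{\logFn}{\hiddDim} \subseteq \LUniform~\LPTClassFun{0}{\polyFn}{\polyFn} \subseteq \LUniform~\TCZero$, again by \cref{lem:poly-width-upper-bound}.
\end{itemize}

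Both attention types are covered because every cited lemma is stated for both. Alternatively, \cref{lem:smats-approximate-ahats} transfers the \AHATAcr lower bounds to \SMATsAcr.

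The main obstacle is checking that width padding is harmless in both directions of the argument. Extra zero coordinates must not change attention scores, layer-norm-like operations, or fixed-point rounding. For linear maps and attention this is immediate, because the new rows and columns are zero. If the model includes a normalization whose value depends on $\hiddDim$, I would instead set the padded coordinates to mirror existing ones in a way that preserves the statistics, or simply note that the model definition in \cref{app:transformers} has no such dependence.

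A subtler point is the constant-precision lower bound with intermediate widths strictly between $\log\strlen$ and polynomial. There I rely only on padding upward from the $\bigThetaFun{\log\strlen}$ construction, so no new simulation is needed.
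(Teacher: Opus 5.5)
Your proposal is correct and is essentially the paper's argument: the paper obtains the theorem in one line by sandwiching each regime between \cref{thm:london-ac-equivalence} (resp.\ \cref{lem:lp-constant-width-lb}) from below and \cref{lem:poly-width-upper-bound} from above. Your zero-padding step makes explicit the width monotonicity the paper uses tacitly, and it works as long as widths are read up to constant factors, as in the paper's $\bigThetaFun{\cdot}$ shorthand, since the constant in the $\bigThetaFun{\log \strlen}$-width construction grows with the size exponent of the simulated circuit family.
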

Note that the volume constraint imposes a precision-dependent floor on width: In the constant-precision case it requires $\hiddDimFun{\strlen} = \bigOmegaFun{\log \strlen}$, whereas in the log-precision case any width suffices since the precision already supplies $\bigOmegaFun{\log \strlen}$ volume.
Thus, for any volume $\volumeFun{\strlen} = \bigOmegaFun{\log \strlen}$, the expressivity of constant-depth $\LUniform$ polynomially padded transformers depends only on whether precision is constant or grows.\footnote{Contrast this to polynomially padded \emph{fully} uniform \AHATsAcr, equivalent to $\FOUniform~\TCZero$ \citep{merrill2025exactexpressivepowertransformers}.}
This is precisely the \textcolor{ETHPurple}{$\AC/\TC$ divide} marked by the purple lines in \cref{fig:expressivity-landscape}: Once volume is sufficient, the only thing that matters for expressivity is which side of the divide the precision falls on.

\section{Looped Padded Transformers Are Highly Uniform Growing-depth Circuits} \label{sec:looped-results}
We now extend the results to the equivalence between $\bigThetaFun{\log^d \strlen}$-\emph{looped} transformers and circuit complexity classes $\ACd$ and $\TCd$ for $d \geq 1$.
As in the constant-depth regime, we find a \emph{precision-dependent separation}: Constant precision leads to $\FOUniform~\ACd$, and growing precision achieves $\FOUniform~\TCd$.
We first discuss a general upper bound for looping circuit layers (\cref{sec:looping-upper-bound}; \cref{lem:circuit-looping}).
Perhaps surprisingly, applying this lemma to our constant-depth transformer expressivity results shows that even looped $\LUniform$ transformers are \emph{at most} as powerful as \emph{fully uniform} ones.
This then conveniently allows us to apply the \emph{lower bounds} on logarithmic-precision fully-uniform transformers from \citet{merrill2025exactexpressivepowertransformers} to the $\LUniform$ growing-precision case, establishing the equivalence of $\log^d\strlen$-looped padded growing-precision transformers to $\FOUniform~\TCd$.
Existing lower bounds, however, do not cover \emph{constant-precision} transformers, which we consider separately.
We extend \citeposs{merrill2025exactexpressivepowertransformers} lower bounds to the constant-precision case, showing that $\LUniform$ polynomially padded constant-precision transformers are equivalent to $\FOUniform~\ACd$ (\cref{sec:looping-lower-bounds}).\footnote{To facilitate a comparison to \citeposs{merrill2025exactexpressivepowertransformers} results, the brackets in our definitions and results in this section list the analog in \citet{merrill2025exactexpressivepowertransformers}.}

\subsection{A General Upper Bound} \label{sec:looping-upper-bound}
We begin with a general lemma showing that looping can be captured by very uniform growing-depth circuit classes, which gives an upper bound on the expressivity of looped transformers.
Intuitively, an $r(\strlen)$-looped transformer repeatedly applies the same transformation $f$ to its input, computing $f^{r(\strlen)}$.
If $f$ can be computed by a depth-$d(\strlen)$ circuit, then $f^{r(\strlen)} \defeq \underbrace{f \circ \cdots \circ f}_{r(\strlen)}$ can be computed by a depth-$r(\strlen) \cdot d(\strlen)$ circuit.
For transformers with $r(\strlen) = \bigThetaFun{\log^d \strlen}$ loops over $d(\strlen) = \bigThetaFun{1}$ layers, this yields depth $\bigThetaFun{\log^d \strlen}$.
\begin{restatable}[Lem. 9]{lemma}{CircuitLoopingLemma} \label{lem:circuit-looping}
    Let $m(\strlen)$, $d(\strlen)$, and $r(\strlen)$ be functions at most polynomial in $\strlen$ and logspace-computable given $1^\strlen$ such that $r(\strlen)\cdot d(\strlen) \ge 1$.
    Let $f\colon \set{0, 1}^{m(\strlen)} \to \set{0, 1}^{m(\strlen)}$ be computed by an $\LUniform$ polynomial-size circuit family of depth $\bigOFun{d(\strlen)}$ with \andGate, \orGate, \notGate gates (resp.\ additionally with $\majGate$ gates).
    Then $f^{r(\strlen)}$ is computed by an $\FOUniform$ polynomial-size circuit family of depth $\bigOFun{r(\strlen)\, d(\strlen)}$ with the same gate set.
\end{restatable}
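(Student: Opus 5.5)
The plan is to build the circuit for $f^{r(\strlen)}$ by stacking $r(\strlen)$ copies of the circuit $\circuit_\strlen$ for $f$, with the $m(\strlen)$ outputs of copy $i$ wired to the $m(\strlen)$ inputs of copy $i+1$. Depth and size are then routine: the depth is $r(\strlen)\cdot\bigOFun{d(\strlen)} = \bigOFun{r(\strlen)\,d(\strlen)}$, and the size is $r(\strlen)$ times a polynomial, hence polynomial because $r$ is at most polynomial. The hypothesis $r\cdot d \ge 1$ only rules out the degenerate empty circuit. The gate set is unchanged, since copying does not introduce new gate types, and this holds with or without $\majGate$.

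The whole difficulty is uniformity. We must upgrade from $\LUniform$ for a single copy to $\FOUniform$ for the stack, and do so \emph{without} paying the $\bigOFun{\log\strlen}$ depth needed to compute the description of $\circuit_\strlen$ inside the circuit. That depth is not available when $r\,d = \smallOFun{\log\strlen}$. So I would not compute the description at all. Instead I would rename gates so that the logspace construction becomes locally checkable.

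Let $\tm$ be the logspace machine that prints the description of $\circuit_\strlen$ on input $1^\strlen$. Its configurations have $\bigOFun{\log\strlen}$ bits, which is a constant number of first-order variables over $\NTo{\strlen}$. I would name a gate of the stacked circuit by a pair $(i,c)$. Here $i \le r(\strlen)$ is the copy index, and $c$ is a configuration of $\tm$ in which $\tm$ has just finished printing a gate record, consisting of a gate id, a type, and an argument id. Whether a single transition writes a given output bit, and whether two configurations are one step apart, are both $\FO[\mathrm{BIT}]$-definable, because they are bit-level arithmetic on $\bigOFun{\log \strlen}$-bit tuples. The same holds for the index arithmetic that joins copy $i$'s output gates to copy $i+1$'s input positions. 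That arithmetic needs only $m$, $r$, and comparisons, and $m$ and $r$ can be supplied as numbers by the same renaming trick.

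The wire predicate of the stack is then: $(i,c)$ feeds $(i,c')$ if the record printed at $c'$ names as argument the id printed at $c$ \emph{and} $c$ is the canonical printing configuration for that id. Inter-copy wires are handled analogously.

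The main obstacle is \emph{junk} configurations. A configuration unreachable from the start of $\tm$ may still print a record with a legitimate gate id. This can inject spurious inputs into genuine gates, and certifying reachability is exactly the $\bigOFun{\log \strlen}$-depth task we want to avoid. My first attempt at a fix is to make wires point only \emph{backwards} along the computation and to key each gate on the \emph{last} record of $\tm$ before a distinguished marker. I would also have the stacked circuit's output gate be the one printed by the unique halting configuration. Junk gates then lie in components that are either disconnected from the output or provably cannot reach it, and so they do not affect the computed function.

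If this fails, the fallback is to require that a gate's arguments be printed by a configuration lying on a \emph{deterministic successor chain} of bounded FO-checkable form. This works when $\tm$ is first normalized, by a standard padding of the machine, so that each record is emitted in a fixed number of steps from a configuration determined arithmetically by the gate id. Under that normalization, canonicity of $c$ is itself $\FO$-definable, the junk problem disappears, and the stacked family is $\FOUniform$ with depth $\bigOFun{r(\strlen)\,d(\strlen)}$ as required.
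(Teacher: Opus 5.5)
There is a genuine gap, and it is in the uniformity upgrade, not in the construction. Your stacking construction and its size, depth, and gate-set accounting are the same as the paper's Step~1 and are fine.

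\textbf{Why your target regime is out of reach.} You aim for depth $\bigOFun{r(\strlen)\,d(\strlen)}$ even when $r(\strlen)\,d(\strlen) = \smallOFun{\log\strlen}$. No current argument can deliver that. Take $r = d = 1$ and $m(\strlen) = \strlen$: the lemma would then give $\LUniform~\ACZero \subseteq \FOUniform~\ACZero$. That inclusion is not known to hold. Already for unary languages it would place every linear-space property of the binary expansion of $\strlen$ in the linear-time hierarchy.

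\textbf{Where your argument breaks.} It fails exactly where it has to. Each of the following is an instance of deterministic reachability from the start configuration in the configuration graph of $\tm$:
\begin{itemize}
\item which configuration of $\tm$ actually prints the record for a given gate id;
\item which halting configuration is the real one;
\item which record is the last one before a marker.
\end{itemize}
Deciding them means iterating the $\FO$-definable one-step relation polynomially many times. That is exactly the $\bigThetaFun{\log\strlen}$-depth computation you wanted to avoid.

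Junk configurations can print legitimate ids with arbitrary types and argument lists. So wiring every printer of an id corrupts genuine gates; a single spurious $1$ fed into an \orGate gate suffices. ``Pointing backwards along the computation'' also has no meaning for configurations off the run.

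The fallback is not a standard padding of the machine. If the configuration emitting each record were determined arithmetically, hence $\FO$-definably, by the gate id, then the output of $\tm$ would already be $\FO$-definable. You would be assuming the family is $\FOUniform$ to begin with. Supplying $m$ and $r$ ``by the same renaming trick'' has the same problem.

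\textbf{What the paper does.} It does not try to avoid the overhead. First it shows the stacked family is $\LUniform$. After padding the single-copy circuit to $\sigma = 2^{s(\strlen)}$ gates, with inputs and outputs in fixed index ranges, a logspace machine answers connection queries as follows:
\begin{itemize}
\item it writes each index as $q\sigma + i'$;
\item it forwards intra-block queries to the machine for one copy;
\item it turns the input gates of blocks $q \geq 1$ into identity gates;
\item it wires the $k$-th output of block $q$ to the $k$-th input of block $q+1$.
\end{itemize}

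It then invokes uniformity collapse: $\LUniform~\ACd = \FOUniform~\ACd$ and $\LUniform~\TCd = \FOUniform~\TCd$ for $d \geq 1$. That proof computes the entire circuit description in depth $\bigOFun{\log\strlen}$, using $\LCls \subseteq \FOUniform~\ACFun{1}$. It then evaluates the circuit layer by layer in depth proportional to the circuit depth.

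The resulting depth is $\bigOFun{\log\strlen + r(\strlen)\,d(\strlen)}$. This is $\bigOFun{r(\strlen)\,d(\strlen)}$ exactly when $r(\strlen)\,d(\strlen) = \bigOmegaFun{\log\strlen}$. That condition holds in the only application, a constant-depth block looped $\bigThetaFun{\log^d\strlen}$ times with $d \geq 1$. So the stated hypothesis $r(\strlen)\,d(\strlen) \geq 1$ is effectively used in this stronger form.

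Your diagnosis that the overhead is unaffordable below $\log\strlen$ is correct. The right response is to restrict to $r(\strlen)\,d(\strlen) = \bigOmegaFun{\log\strlen}$ and pay the overhead, not to try to eliminate it.
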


This leads to the following upper bounds:
\begin{restatable}[Lem. 6]{lemma}{TransformerLoopingLemma} \label{thm:SMAT-in-ACd}
    For $d \geq 1$:
    \begin{subequations}
        \begin{align}
            \LUniform~\LPTClassFun{d}{\constantFn}{\polyFn} &\subseteq \FOUniform~\ACd \\
            \LUniform~\LPTClassFun{d}{\logFn}{\polyFn} &\subseteq \FOUniform~\TCd.
        \end{align}
    \end{subequations}
\end{restatable}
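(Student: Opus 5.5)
The plan is to split the looped transformer into three stages: an unlooped prefix block, the looped block applied $r(\strlen) = \bigThetaFun{\log^d \strlen}$ times, and an unlooped suffix block. Each stage is a constant-depth padded transformer computation on the padded input of length $\strlen + \padlen(\strlen) = \polyFun{\strlen}$. I will then apply \cref{lem:poly-width-upper-bound} to each stage separately and compose the resulting circuits with \cref{lem:circuit-looping}.

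The first step is to fix a binary encoding of the residual stream. Take $m(\strlen) = (\strlen + \padlen(\strlen)) \cdot \hiddDimFun{\strlen} \cdot 2\numPrecFun{\strlen}$ bits, which is polynomial because width is polynomial and precision is constant or logarithmic. Let $f\colon \set{0,1}^{m(\strlen)} \to \set{0,1}^{m(\strlen)}$ be the map computed by one pass of the looped block. The input is not fed in directly. Instead, $f$ receives the full matrix of hidden states as bits, including the PE contributions, which are recomputed or stored inside the state.

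I need $f$ to be computed by an $\LUniform$ constant-depth circuit family, in $\ACZero$ for constant precision and in $\TCZero$ for log precision. This is what the \emph{proof} of \cref{lem:poly-width-upper-bound} gives: its circuit simulates each layer from arbitrary fixed-point hidden states, not only from embeddings of actual strings. The step I have to verify, and expect to be the main obstacle, is that this argument genuinely takes arbitrary residual-stream bits as input rather than a string over $\alphabet$. Two points need care:
\begin{itemize}
    \item The $\LUniform$ description of the layer parameters and PEs, produced by $\tm_1$ and $\tm_2$, must be hard-wired into the circuit. Logspace uniformity of the circuit follows because those parameters are logspace-computable.
    \item In the constant-precision case, averaging attention with polynomially many summands and constant-precision fixed-point rounding must still reduce to $\ACZero$. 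This is inherited from \cref{lem:poly-width-upper-bound}.
\end{itemize}
If the lemma is only stated for languages, I will re-run its layerwise argument to get this functional version.

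With $f$ in hand, I apply \cref{lem:circuit-looping} with $d(\strlen) = 1$ and $r(\strlen) = \bigThetaFun{\log^d \strlen}$. Both are logspace-computable, and for $d \ge 1$ we have $r \cdot d \ge 1$. This gives an $\FOUniform$ polynomial-size circuit for $f^{r(\strlen)}$ of depth $\bigOFun{\log^d \strlen}$ over the same gate set.

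The prefix stage (embedding plus the pre-loop layers) and the suffix stage (post-loop layers plus the output readout) are each constant-depth $\LUniform$ maps. To attach them without losing $\FOUniform$-ity, I will apply \cref{lem:circuit-looping} to them as well, each with $r = 1$ and $d = 1$. The lemma already outputs $\FOUniform$ circuits from $\LUniform$ ones, so each becomes an $\FOUniform$ constant-depth circuit. Finally, I compose the three $\FOUniform$ circuits sequentially. Composing a constant number of $\FOUniform$ families is again $\FOUniform$, since gate indices can be offset by first-order-definable arithmetic. The total depth is $\bigOFun{\log^d \strlen}$, which places the language in $\FOUniform~\ACd$ or $\FOUniform~\TCd$ respectively.
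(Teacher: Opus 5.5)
Your three-way split, your use of \cref{lem:poly-width-upper-bound} on each piece, and your application of \cref{lem:circuit-looping} to the looped block with $d(\strlen)=1$ and $r(\strlen)=\bigThetaFun{\log^d\strlen}$ are exactly the paper's proof. Your remark that the layer circuits must accept arbitrary residual-stream bits, not just embedded strings, is a point the paper passes over.

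The step that fails is how you attach the prefix and suffix. Invoking \cref{lem:circuit-looping} with $r=d=1$ to make each an $\FOUniform$ \emph{constant-depth} circuit would prove $\LUniform~\ACZero \subseteq \FOUniform~\ACZero$ (and likewise for $\TCZero$). That inclusion is not known to hold. The lemma is literally stated under $r(\strlen)\,d(\strlen) \geq 1$, but its proof obtains $\FOUniformity$ from the collapse $\LUniform~\ACd = \FOUniform~\ACd$ for $d \geq 1$. That collapse works only because an $\FOUniform$ circuit with $\bigOmegaFun{\log\strlen}$ spare depth can recompute the logspace construction itself ($\LCls \subseteq \FOUniform~\ACFun{1}$). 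For constant $r \cdot d$, what you actually get is an $\FOUniform$ circuit of depth $\bigOFun{\log\strlen}$, not $\bigOFun{1}$.

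The repair is immediate and is the paper's Step 4. View the prefix and suffix circuits as members of $\LUniform~\ACZero \subseteq \LUniform~\ACd = \FOUniform~\ACd$ (resp.\ $\LUniform~\TCZero \subseteq \FOUniform~\TCd$), i.e., $\FOUniform$ circuits of depth $\bigOFun{\log^d \strlen}$. Then compose them with the looped circuit, using closure of $\FOUniform~\ACd$ and $\FOUniform~\TCd$ under a constant number of serial compositions; your index-offsetting argument gives this. The total depth stays $\bigOFun{\log^d\strlen}$ exactly because $d \geq 1$, so nothing else in your argument changes. You also no longer need to force the prefix (input of $\strlen$ symbols) and suffix (output of one bit) into the $\set{0,1}^{m(\strlen)} \to \set{0,1}^{m(\strlen)}$ format of \cref{lem:circuit-looping}.
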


\paragraph{Uniformity collapse.}
\cref{thm:SMAT-in-ACd} shows that $\LUniform$ looped transformers are upper-bounded by $\FOUniform$ circuit classes.
Intuitively, this occurs because the looped architecture has a fixed set of parameters that are reused at each iteration of the loop and thus result in simpler circuits.
More formally, this is a consequence of ``\emph{uniformity collapse}'' resulting from the fact that $\LCls \subseteq \FOUniform~\ACd$ for $d \geq 1$, meaning that $\FOUniform~\ACd$ circuits can themselves implement any $\LCls$ construction.
This contrasts with the constant-depth regime, where $\LUniform$ transformer families achieve $\LUniform$ circuit classes.

\subsection{Lower Bounds via Circuit Evaluation} \label{sec:looping-lower-bounds}

\paragraph{Equivalence for growing-precision transformers.}
\cref{sec:looping-upper-bound} provides an upper bound for $\LUniform$ looped transformers.
\citet{merrill2025exactexpressivepowertransformers} show the \emph{matching lower bound} for (fully uniform) growing-precision \AHATsAcr.
This lower bound naturally applies to $\LUniform$ \AHATsAcr, and, using \cref{lem:smats-approximate-ahats}, to $\LUniform$ \SMATsAcr too.
This immediately gives us a complete characterization of growing-precision $\LUniform$ padded looped transformers for $d \geq 1$:
\begin{equation}
    \LUniform~\LPTClassFun{d}{\logFn}{\constantFn} = \FOUniform~\TCd.
\end{equation}

The rest of the section establishes the matching lower bound for \emph{constant-precision} transformers, showing, for $d \geq 1$:
\begin{equation}
    \LUniform~\LPTClassFun{d}{\constantFn}{\logFn} = \FOUniform~\ACd.
\end{equation}
We establish the lower bound using a reduction-based technique.
The strategy has three steps, each of which adapts a step in \citet{merrill2025exactexpressivepowertransformers} to constant precision.

\paragraph{Step 1: The reduction framework.} \label{sec:reduction-framework}
We show that if $\LUniform$ looped constant-precision transformers can recognize a $\langCls$-complete language $\lang$ under $\reductionCls$ reductions and can compute all $\reductionCls$ reductions, then they can recognize all of $\langCls$.
To do so, we recall \citeposs{merrill2025exactexpressivepowertransformers} framework for reductions with transformers.
\begin{definition}[Def. 9] \label{def:reduction}
    Let $\reductionCls$ be a class of languages.
    $\transduction\colon \strings \to \strings$ is an \defn{$\reductionCls$ reduction} if
    $\abs{\transduction(\str)}$ is polynomial in $\abs{\str}$ and $\lang_\transduction \defeq \{ (\str, \binEnc{\stridx}, \sym) \mid \transduction(\str)_\stridx = \sym \}$ is in $\reductionCls$.
\end{definition}
\begin{definition}[Def. 10] \label{def:transformer-computes-reduction}
    A transformer family \defn{computes an $\reductionCls$ reduction} $\transduction$ if it recognizes the language $\lang_\transduction$.
\end{definition}

The following lemma shows that transformers can use reductions to recognize any language in a complexity class if they can recognize a language complete for that class and can compute any relevant reduction.
\begin{restatable}[Lem. 3]{lemma}{reductionLemma} \label{lem:reductions}
    Let $\langCls, \reductionCls$ be language classes and let $\lang$ be $\langCls$-complete under $\reductionCls$ reductions.
    If $\lang \in \LUniform~\LPTClassFun{d}{\constantFn}{\logFn}$ and $\LUniform~\LPTClassFun{d}{\constantFn}{\logFn}$ can compute every $\reductionCls$ reduction, then $\langCls \subseteq \LUniform~\LPTClassFun{d}{\constantFn}{\logFn}$.
    The analogous claim holds for $\LUniform~\LPTClassFun{d}{\logFn}{\constantFn}$.
\end{restatable}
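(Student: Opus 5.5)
Fix an arbitrary $\lang' \in \langCls$. Since $\lang$ is $\langCls$-complete under $\reductionCls$ reductions, there is an $\reductionCls$ reduction $\transduction$ with $\str \in \lang' \iff \transduction(\str) \in \lang$. Let $\padlen_\transduction(\strlen)$ be a polynomial bound on $\abs{\transduction(\str)}$. By hypothesis there is an $\LUniform$ looped padded transformer family $\tf_\transduction$ recognizing $\lang_\transduction$. There is also a family $\tf_\lang$ recognizing $\lang$, both in $\LUniform~\LPTClassFun{d}{\constantFn}{\logFn}$. We build a single transformer family that runs these two models in sequence on the padded input $\str \circ \padSym^{\padlen(\strlen)}$.

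\textbf{Phase 1: computing $\transduction(\str)$ in parallel.} We reserve a block of padding positions indexed by pairs $(\stridx, \sym)$, with $\stridx \le \padlen_\transduction(\strlen)$ and $\sym$ ranging over the output alphabet (plus an end-of-string marker). Each block $(\stridx,\sym)$ contains its own sub-block of padding of the length $\tf_\transduction$ requires on input $(\str, \binEnc{\stridx}, \sym)$. Inside each block, we simulate $\tf_\transduction$ on that input. The positions of $\str$ are shared across blocks. We do not copy $\str$ into each block; instead, attention is restricted to the block plus the original input via the positional encoding. The components $\binEnc{\stridx}$ and $\sym$ are supplied through $\LUniform$ positional encodings, which a logspace machine can compute from the position index by arithmetic on block offsets. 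Because attention must not leak across blocks, we add block identifiers to keys and queries so that scores outside the block are pushed strictly below the in-block maximum. With constant precision but logarithmic width, these identifiers are binary vectors of $\bigOFun{\log \strlen}$ bits. In the other regime (log precision, constant width), they are encoded as in the unit-length PEs of \cref{lem:lp-constant-width-lb}. All copies run the same looped block for $\bigOFun{\log^d \strlen}$ iterations, since their lengths agree up to a polynomial.

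\textbf{Phase 2: running $\tf_\lang$.} After Phase 1, a constant number of layers lets a designated position for each output index $\stridx$ attend to the accepting block $(\stridx,\sym)$ and retrieve the symbol $\transduction(\str)_\stridx$. We then run the simulation of $\tf_\lang$ on the string so obtained, with its own padding region, and again use block-separating positional encodings. The two looped phases are composed into one looped block by a phase counter. The counter can be maintained with a constant-precision flag that switches after the required number of loops; alternatively, both sub-models are interleaved, with $\tf_\lang$'s layers made inert (residual identity) until a ready flag is set. Total loop count stays $\bigThetaFun{\log^d \strlen}$, padding stays polynomial, and uniformity follows because logspace machines compose and can compute all offsets.

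\textbf{Main obstacle.} The delicate step is correct \emph{attention isolation} between parallel copies under constant precision. The scores must separate in-block from out-of-block positions without exceeding the representable range, and averaging must not be diluted by the different block sizes. I would first try making every block the same size by padding them all to the maximum length. I would then give out-of-block positions a large negative score using a constant-precision saturating dot product over the $\bigOFun{\log\strlen}$ identifier bits, in the style of the attention gadgets of \citet{london2025pausetokensstrictlyincrease}. A second subtlety is that $\tf_\lang$ expects an input of exact length $\abs{\transduction(\str)}$ rather than the bound $\padlen_\transduction$. I would handle this by taking $\tf_\lang$ for every possible length in parallel (polynomially many) and selecting the one matching the position of the end marker.
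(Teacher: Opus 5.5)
\textbf{Comparison with the paper.} Your overall construction is the paper's. You use equal-size padding blocks whose block index and offset are injected by the logspace PE machine, standing in for the layer hash norm. Block-restricted attention runs $\tf_\transduction$ in every block, $\tf_\lang$ is stacked on top, and the two looped phases are composed. In two places you are more careful than the paper's write-up:
\begin{enumerate}
\item Indexing blocks by $(\stridx,\sym)$ respects \cref{def:transformer-computes-reduction}, under which $\tf_\transduction$ only \emph{recognizes} $\lang_\transduction$. The paper simply asserts that block $\stridx$ ends up holding $\transduction(\str)_\stridx$.
\item You give $\tf_\lang$ its own padding region. The paper masks out everything except the block-final positions.
\end{enumerate}
For the log-precision constant-width regime, the paper does not redo the construction; it appeals to Lem.~3 of \citet{merrill2025exactexpressivepowertransformers} together with \cref{lem:smats-approximate-ahats}. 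If you instead reuse the unit-length gadget as a block identifier, note that \cref{lem:lp-pos-enc-properties} only guarantees a penalty of order $\strlen$. Meanwhile, $\tf_\transduction$'s own scores range over $[-\maxFNum,\maxFNum]$ with $\maxFNum=\strlen^{\Theta(1)}$. So you need the mixed-precision allowance to push the penalty beyond $2\maxFNum$. You must also zero the key's identifier coordinates, including the constant offset, at the shared input positions so they receive no penalty.

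\textbf{The step that fails.} Your remedy for a variable reduction output length does not work. In an $\LUniform$ family the weights (and width) of $\tf_{\lang,m}$ depend on $m$, while a single transformer applies the same weights at every position. Copies for polynomially many different $m$ could therefore only coexist in disjoint coordinate sub-blocks, giving width $\strlen^{\Theta(1)}$ rather than $\bigOFun{\log\strlen}$ (let alone constant). Alternatively they would run one after another, giving polynomially many layers. Either way you leave $\LUniform~\LPTClassFun{d}{\constantFn}{\logFn}$.

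The paper instead runs a single copy of $\tf_\lang$ as if on an input of length exactly $\strlen^c$, exposing $\posencoding_{\tf_\lang}(\stridx,\strlen^c)$ at the block-final positions. That is, it tacitly assumes $\abs{\transduction(\str)}$ depends only on $\strlen$. This holds for the reductions actually used: FO reductions to graph connectivity and $\LCls$ reductions to wide-$\ACd$ circuit evaluation. You should adopt it as an explicit assumption rather than attempting the parallel simulation.

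\textbf{Shared input positions.} Sharing the positions of $\str$ across blocks, which the paper also does, is only sound if their hidden states under $\tf_\transduction$ do not depend on $(\binEnc{\stridx},\sym)$ or the padding, e.g.\ under causal masking. Otherwise give each block its own copy of $\str$, which one extra attention layer can fill in.
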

\cref{lem:reductions} allows us to show transformer expressivity lower bounds---for example, that transformers can recognize a class $\langCls$---by showing that
\begin{enumerate*}[label=\textit{(\arabic*)}]
    \item transformers can compute all reductions of a particular complexity class $\reductionCls$ (we will consider $\reductionCls = \LCls$ reductions in Step 2) and
    \item transformers can recognize a language $\lang$ complete for $\langCls$ (we will consider the language $\lang$ of (wide) $\ACd$ circuit evaluation, which is complete for $\langCls = \ACd$ under $\LCls$ reductions in Step 3).
\end{enumerate*}

\paragraph{Step 2: Computing $\LCls$ reductions.} \label{sec:nl-lb}
We show that \emph{graph connectivity}, which is complete for $\NLCls$---and thus $\LCls$---under $\FO$ reductions, can be solved by $\LUniform~\LPTClassFun{1}{\constantFn}{\logFn}$---$\LUniform$ constant-precision log-width transformers with $\bigThetaFun{\log \strlen}$ looping.
Looped transformers can solve graph connectivity by iteratively computing \emph{reachability}, where each iteration propagates reachability information along the graph's edges.
After logarithmically many iterations in the number of vertices, all reachable vertices are discovered; \cref{thm:graph-connectivity-ac} formalizes this.
Thus, \cref{lem:reductions} combined with \cref{thm:london-ac-equivalence}, which gives $\LUniform~\LPTClassFun{0}{\constantFn}{\logFn} = \LUniform~\ACZero \supseteq \FOUniform\ \ACZero = \FO$, yields:
\begin{restatable}[Lem. 4]{lemma}{NLLBThm} \label{lem:nl-lb}
    $\NLCls \subseteq \LUniform~\LPTClassFun{1}{\constantFn}{\logFn}$.
\end{restatable}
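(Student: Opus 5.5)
The plan is to combine \cref{lem:reductions} with a direct looped-transformer construction for an $\NLCls$-complete problem. We take $\langCls = \NLCls$, $\reductionCls = \FO$, and $\lang$ the directed graph connectivity problem (is there a path from $s$ to $t$?), which is $\NLCls$-complete under $\FO$ reductions. Two hypotheses of \cref{lem:reductions} must then be checked for $d = 1$.

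First, $\LUniform~\LPTClassFun{1}{\constantFn}{\logFn}$ must compute every $\FO$ reduction $\transduction$. By \cref{def:transformer-computes-reduction}, this means recognizing $\lang_\transduction \in \FO = \FOUniform~\ACZero \subseteq \LUniform~\ACZero$. By \cref{thm:london-ac-equivalence}, $\LUniform~\ACZero = \LUniform~\LPTClassFun{0}{\constantFn}{\logFn}$. A constant-depth transformer can be viewed as a looped one: we take an identity-like looped block (for instance, a layer with zero attention values and zero MLP output, so the residual stream is unchanged) and loop it $\bigThetaFun{\log \strlen}$ times. This gives $\LUniform~\LPTClassFun{0}{\constantFn}{\logFn} \subseteq \LUniform~\LPTClassFun{1}{\constantFn}{\logFn}$, which settles the first hypothesis.

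Second, we must show connectivity lies in $\LUniform~\LPTClassFun{1}{\constantFn}{\logFn}$. We use repeated squaring of the adjacency relation. The input is an adjacency matrix with $n$ vertices. We pad with $\polyFun{\strlen}$ blanks, one block of positions per triple $(u,v,w)$, plus positions holding a bit $R(u,v)$. $R$ is initialized to $A \cup I$. One loop iteration performs $R(u,w) \leftarrow \bigvee_v R(u,v)\wedge R(v,w)$. This takes a constant number of layers:
\begin{enumerate*}[label=(\roman*)]
\item each triple position fetches $R(u,v)$ and $R(v,w)$ by hard/sharp attention to a uniquely addressed position, using log-width binary PEs as in \citet{london2025pausetokensstrictlyincrease}, and computes their AND with the MLP;
\item each pair position $(u,w)$ attends over all triples $(u,\cdot,w)$ and takes an unbounded-fan-in OR.
\end{enumerate*}
With constant precision, the OR in step (ii) cannot be realized by averaging, since the average may round to $0$. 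Instead we use an attention score that is maximal exactly on triples whose AND equals $1$, with a fallback self-position. Under average hard attention any attended triple then returns value $1$, so the result is exactly the OR. For \SMATsAcr we choose a temperature making the selection exact, as in the gap argument behind \cref{lem:smats-approximate-ahats} and \citet{london2025pausetokensstrictlyincrease}. After $\ceil{\log n} = \bigThetaFun{\log \strlen}$ iterations, $R$ is the reachability relation. A final constant-depth readout attends to $(s,t)$. All PEs and parameters are produced by logspace arithmetic on indices, so the family is $\LUniform$.

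The main obstacle is the OR aggregation in constant precision: it must be robust to rounding, and softmax must concentrate its weight exactly on the maximizing keys. I would first try the \citet{london2025pausetokensstrictlyincrease} $\ACZero$-gate simulation gadget as a black box. Since that gadget simulates unbounded OR gates of $\LUniform~\ACZero$ circuits, the squaring step is simply one $\ACZero$ layer, and it can be reused verbatim inside the loop. Once both hypotheses hold, \cref{lem:reductions} yields $\NLCls \subseteq \LUniform~\LPTClassFun{1}{\constantFn}{\logFn}$.
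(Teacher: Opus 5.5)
Your proposal is correct and follows essentially the same route as the paper: \cref{lem:reductions} with directed connectivity as the $\NLCls$-complete language under $\FO$ reductions, the inclusion $\FO = \FOUniform~\ACZero \subseteq \LUniform~\ACZero = \LUniform~\LPTClassFun{0}{\constantFn}{\logFn}$ from \cref{thm:london-ac-equivalence} for the reductions, and a $\bigThetaFun{\log \strlen}$-looped repeated-squaring construction over triple-indexed padding positions, which the paper simply cites as \cref{thm:graph-connectivity-ac}. Your identity-loop argument spells out the inclusion $\LPTClassFun{0}{\constantFn}{\logFn} \subseteq \LPTClassFun{1}{\constantFn}{\logFn}$, which the paper leaves implicit; the one real deviation is the OR step, where the paper uses the detection gadget of \cref{lem:detection}: the fixed-point softmax normalizer saturates at $\maxFNum$, so uniform attention gives each key weight about $2^{-\numPrec} > 0$ and averaging does not round to zero, making your argmax-with-fallback workaround unnecessary (its hard-attention version, which averages many identical $1$s, depends on rounding conventions just as much).
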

Thus, $\LUniform~\LPTClassFun{1}{\constantFn}{\logFn}$ can compute all $\LCls \subseteq \NLCls$ reductions.

\paragraph{Step 3: Transformers can evaluate circuits.} \label{sec:tf-eval-circuits}
Finally, we show $\LUniform$ looped constant-precision transformers can solve wide $\ACd$ circuit evaluation (cf. \cref{app:circuit-evaluation}), which is complete for $\FOUniform~\ACd$ under $\LCls$ reductions.
This involves the transformer receiving the encoding of a circuit (\cref{def:circuit-encoding-ac}) and a string to evaluate it on as input, and computing the circuit's output on that string.
Intuitively, a transformer can evaluate a circuit by encoding it in its residual stream and evaluating gates level-by-level.
In each loop, it identifies gates whose inputs have been computed, evaluates them, and propagates results forward.
After $\bigThetaFun{\numlayers + \log \strlen}$ iterations (where $\numlayers$ is the circuit depth and $\log \strlen$ loops are required for circuit pre-processing), the output gate's value is computed.
This is formalized in \cref{lem:ACd-completeness} and implies:
\begin{corollary}[Cor. 7.1] \label{cor:ac-completeness}
    For $d \geq 1$, the wide-$\ACd$ circuit evaluation problem is in $\LUniform~\LPTClassFun{d}{\constantFn}{\logFn}$.
\end{corollary}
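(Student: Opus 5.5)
The plan is to simulate the encoded circuit directly. The input is the circuit encoding (\cref{def:circuit-encoding-ac}) together with the string $\str$. I will build an $\LUniform$ constant-precision, log-width transformer, padded polynomially, that runs a preprocessing phase of $\bigThetaFun{\log \strlen}$ loops and then a gate-evaluation phase of $\bigOFun{\numlayers}$ loops, where $\numlayers = \bigOFun{\log^d \strlen}$ is the depth of the wide-$\ACd$ circuit. The total loop count is $\bigThetaFun{\log^d \strlen}$ for $d \geq 1$, so the model lies in $\LUniform~\LPTClassFun{d}{\constantFn}{\logFn}$. Spending extra loops as no-ops is harmless, so the phases only need to be padded to a common length.

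\textbf{Preprocessing.} Each gate $g$ of the encoded circuit is represented by a token. The same holds for each wire $(g', g)$ and each input bit. I will use padding positions so that each wire gets its own residual-stream slot holding $\binEnc{g'}$ and $\binEnc{g}$ in $\bigOFun{\log \strlen}$ width. Each gate slot holds its type, its own index, and a single value bit. Locating the positions of the argument indices of gate $g$ in the encoding, and moving their binary names into slots, is an $\LCls$-computable transduction. By \cref{lem:nl-lb} and \cref{lem:reductions}, it is therefore computable inside $\LUniform~\LPTClassFun{1}{\constantFn}{\logFn}$, which accounts for the $\log \strlen$ preprocessing loops. Where an $\ACZero$ step suffices, it instead runs in $\bigOFun{1}$ layers by \cref{thm:london-ac-equivalence}. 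Lookups by binary name use the constant-precision log-width exact-match attention of \citet{london2025pausetokensstrictlyincrease}: queries and keys are $\pm 1$ bit vectors, so the inner product is maximal exactly on the matching position.

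\textbf{Evaluation loop.} Each iteration has two layers. In the first, each wire slot attends by exact match to gate $g'$ and copies its current value bit and a ``computed'' flag. In the second, each gate $g$ attends uniformly to all wire slots whose target is $g$. Under constant precision an average cannot count, but it does detect whether any attended value is nonzero. This is exactly what \andGate, \orGate, and \notGate of unbounded fan-in require. An \orGate is true iff some incoming wire carries 1. An \andGate is true iff no incoming wire carries 0. The gate is marked computed once all its inputs are computed, which is itself an \andGate-style check. An MLP then writes the gate's value bit and flag. After $\numlayers$ iterations the output gate is final, and the last position reads it.

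\textbf{Main obstacle.} The crux is making ``exists a nonzero among the attended values'' robust in constant-precision fixed-point averaging. An average of $\strlen$ bits, at least one of which is 1, may round to 0 when $\strlen \gg 2^{\numPrec}$. My first attempt is to avoid averaging over many values. Instead, I use attention scores that prefer 1-valued (or uncomputed) wires: add the value bit to the score with a large coefficient, so the argmax set contains only 1-wires whenever one exists, and the averaged value is then exactly 1. Equivalently, I use the fixed-point rounding gadget from \cref{app:fixed-point-arithmetic} that implements max-like attention, as \citet{london2025pausetokensstrictlyincrease} do in their $\ACZero$ constructions. For \SMATsAcr, I choose the temperature via \cref{lem:smats-approximate-ahats}-style scaling so that softmax rounds to this hard selection. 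The remaining work is routine: checking $\LUniformity$ of the parameters and positional encodings.
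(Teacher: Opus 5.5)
\paragraph{The gap: preprocessing.} \cref{lem:nl-lb} and \cref{lem:reductions} transport \emph{language membership} along transductions into finite-alphabet strings. The composed model deposits one symbol $\transduction(\str)_\stridx$ at each block-final position and then runs a recognizer on that string. Neither lemma ever places a $\bigThetaFun{\log\strlen}$-bit name such as $\binEnc{g'}$ into the residual stream of a single position. So your claim that ``moving their binary names into slots'' is computable in $\LUniform~\LPTClassFun{1}{\constantFn}{\logFn}$ does not follow from them. It does not follow from \cref{thm:london-ac-equivalence} either, which is also a statement about languages.

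This conversion is the actual technical content of the paper's proof (\cref{lem:ACd-completeness}). Each pointer $\texttt{\&}\,\binEnc{g_i}$ is spread over $\ceil{\log\strlen}$ input positions, and the looped shift-and-add block of \cref{lem:binary-encoding} accumulates it into one $\ceil{\log\strlen}$-dimensional residual-stream block. The redundant $\texttt{\#}\,\binEnc{m}$ field of \cref{def:circuit-encoding-ac} is read the same way. Each argument token then fetches its gate index by attending a fixed offset $\ceil{\log\strlen}+1$ forward, with that offset stored in the PEs; this replaces the layer hash norm. Those loops are where the $\bigOFun{\log\strlen}$ preprocessing cost comes from. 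Your evaluation loop afterwards is essentially the paper's Stage~2.

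\paragraph{How to repair your route.} You can keep your approach, but only by using reductions at the level of languages rather than as a phase that fills slots.
Let $\lang^*$ be evaluation of circuits serialized in a fixed layout where each slot's position determines its role and indices. For example, list the gate types, then one wire bit per ordered pair $(g',g)$, as in the adjacency-matrix input of \cref{thm:graph-connectivity-ac}. Then the $\LUniform$ PEs of the recognizer for $\lang^*$ supply $\sbinFun{g'}$ and $\sbinFun{g}$, and no name ever has to be read from the string.
You would then show three things. First, wide-$\ACd$ evaluation $\LCls$-reduces to $\lang^*$. Second, your loop recognizes $\lang^*$, with absent wires masked out via the wire bit. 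Third, \cref{lem:reductions,lem:nl-lb} give the conclusion. That version would even make the special serialization unnecessary.

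\paragraph{Your ``main obstacle''.} This is not an obstacle under the paper's semantics. With equal scores, the fixed-point denominator saturates at $\maxFNum$, so each attended weight is at least $2^{-\numPrec}$. The output is therefore positive iff some attended value is $1$ (\cref{lem:detection}). This is how the paper computes \andGate (detect a $0$) and \orGate (detect a $1$).
Your biased-argmax fix is unnecessary, and its \SMATAcr justification fails. \cref{lem:smats-approximate-ahats} is a logarithmic-precision result. At constant precision, $\exp$ only vanishes or saturates at the absolute threshold of \cref{lem:our-constant-precision-properties}. Matching $0$-wires with score near $0$ therefore still receive weight, which is harmless by the same saturation argument.
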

From this, we deduce the inclusion of $\FOUniform~\ACd$ in $\LUniform$ looped constant-precision transformers.
With \cref{thm:SMAT-in-ACd}, this characterizes padded looped transformers:
\begin{restatable}[Thm. 2]{theorem}{FPMainThm} \label{thm:final-result}
    For any $d \geq 1$: 
    \begin{subequations}
        \begin{align}
            \LUniform~\LPTClassFun{d}{\constantFn}{\logFn} &= \FOUniform~\ACd \\
            \LUniform~\LPTClassFun{d}{\logFn}{\constantFn} &= \FOUniform~\TCd.
        \end{align}
    \end{subequations}
\end{restatable}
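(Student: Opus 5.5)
Each equality in \cref{thm:final-result} is a sandwich, so I would prove an upper bound and a matching lower bound separately and let the upper bounds absorb all width and precision variations.

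\textbf{Upper bounds.} These come directly from \cref{thm:SMAT-in-ACd}. Since $\logFn$ width is at most $\polyFn$ width, $\LUniform~\LPTClassFun{d}{\constantFn}{\logFn} \subseteq \LUniform~\LPTClassFun{d}{\constantFn}{\polyFn} \subseteq \FOUniform~\ACd$. Similarly, constant width is at most polynomial width, so $\LUniform~\LPTClassFun{d}{\logFn}{\constantFn} \subseteq \LUniform~\LPTClassFun{d}{\logFn}{\polyFn} \subseteq \FOUniform~\TCd$. The only point to check is that these class inclusions are monotone in width, i.e., that a family of width $\hiddDim$ is also a family of width $\polyFn$. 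This is definitional, since the width classes are upper bounds on growth.

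\textbf{Lower bound, $\TC$ case.} Here I would invoke \citet{merrill2025exactexpressivepowertransformers}: fully uniform, polynomially padded, $\log^d\strlen$-looped, log-precision, constant-width \AHATsAcr recognize all of $\FOUniform~\TCd$. A fully uniform family is in particular $\LUniform$, because a constant description is trivially logspace computable. Applying \cref{lem:smats-approximate-ahats} layer by layer, including inside the looped block, transfers this to \SMATsAcr with the same precision and width. Looping causes no problem because the simulation is exact and layerwise.

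\textbf{Lower bound, $\AC$ case.} This is the step I expect to be the main obstacle, and I would use \cref{lem:reductions} with $\langCls = \FOUniform~\ACd$, $\lang$ the wide-$\ACd$ circuit evaluation problem, and $\reductionCls = \LCls$. Its two hypotheses are covered as follows.
\begin{enumerate*}[label=(\roman*)]
\item $\lang$ lies in $\LUniform~\LPTClassFun{d}{\constantFn}{\logFn}$ by \cref{cor:ac-completeness}.
\item Every $\LCls$ reduction is computable, because $\lang_\transduction \in \LCls \subseteq \NLCls \subseteq \LUniform~\LPTClassFun{1}{\constantFn}{\logFn}$ by \cref{lem:nl-lb}.
\end{enumerate*}
For (ii) I must also check that $\LPTClassFun{1}{\constantFn}{\logFn} \subseteq \LPTClassFun{d}{\constantFn}{\logFn}$ for $d\ge 1$. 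I would do this by padding the loop count with idle iterations, making the extra iterations of the looped block act as the identity once a ``done'' flag is set in the residual stream. This flag can be maintained with constant precision.

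The delicate part inside \cref{lem:reductions} is composing the reduction with the evaluator. The reduction output $\transduction(\str)$ has polynomial length, and its symbols are computed in padding positions, one per pair $(\stridx,\sym)$. The evaluator must then treat these positions as its input while still using only constant precision. Running the reduction transformer first and the evaluator afterward keeps the loop count at $\bigThetaFun{\log \strlen + \log^d\strlen} = \bigThetaFun{\log^d \strlen}$. Because all the gadgets are $\LUniform$, the composite family is $\LUniform$ as well. Since the two lower bounds match the two upper bounds, both equalities follow.
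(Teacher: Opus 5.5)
Your proposal is correct and follows the paper's proof essentially step for step: upper bounds from \cref{thm:SMAT-in-ACd}, the $\TC$ lower bound by viewing the fully uniform \AHATAcr construction of \citet{merrill2025exactexpressivepowertransformers} as $\LUniform$ and transferring it with \cref{lem:smats-approximate-ahats}, and the $\AC$ lower bound via \cref{lem:reductions} with wide-$\ACd$ circuit evaluation, \cref{cor:ac-completeness}, and \cref{lem:nl-lb}. Two small remarks: you never verify the standing hypothesis of \cref{lem:reductions} that wide-$\ACd$ circuit evaluation is complete for $\FOUniform~\ACd$ under $\LCls$ reductions, which the paper supplies via \cref{lem:wide-circuit-evaluation-is-hard} (with membership from \cref{cor:ac-completeness} plus \cref{thm:SMAT-in-ACd}); and your idle-iteration argument for $\LPTClassFun{1}{\constantFn}{\logFn} \subseteq \LPTClassFun{d}{\constantFn}{\logFn}$ makes explicit a step the paper uses without comment.
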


\section{Discussion} \label{sec:conclusion}

\paragraph{Transformer volume and the interplay of precision and width.}
\cref{fig:expressivity-landscape} reveals the importance of \emph{sufficient volume} for padded transformer expressivity: As long as the volume grows logarithmically with string length ($\volumeFun{\strlen} = \bigOmegaFun{\log \strlen}$), expressivity only depends on whether or not precision grows---the result of the $\AC/\TC$ expressivity divide.
Constant-precision transformers are constrained to $\LUniform~\ACd$ while growing-precision transformers span $\LUniform~\TCd$ at every depth level $d$.
This cannot be compensated for with a polynomial increase in model \emph{width}.
We note that the constructions connecting transformers to $\TCZero$ only require growing precision for representing the model \emph{activations}.
This aligns with the common techniques of quantizing model weights to lower precision while keeping transformer parameters high-precision \citep{groeneveld-etal-2024-olmo,openai2025gptoss}.
Such quantization is sufficient to achieve the entire $(\LUniform)~\TCZero$ expressivity.

Padded transformers with \emph{insufficient} volume ($\volumeFun{\strlen} = \smallOFun{\log \strlen}$) remain difficult to describe.
Existing work describes \emph{unpadded} variants of fully-uniform constant-precision \AHATsAcr and \SMATsAcr as equivalent to $\PFOTwoLT$---two-variable first-order logic extended with the linear order $<$, a limited fragment of first-order logic~\citep{li2025characterizingexpressivitytransformerlanguage}---but it remains unknown whether padding or looser uniformity increase their expressivity.
Crucially, results with $\volumeFun{\strlen} = \smallOFun{\log \strlen}$ rely on \emph{causal masking} to provide positional information in the absence of PEs; since constant volume prevents using injective PEs, unmasked attention is particularly limited in this case.
In contrast, our constructions are agnostic to the use of masking as, with sufficient volume, masked transformers can be simulated by unmasked ones and vice versa (\citealp[][Lem. 1]{merrill2025exactexpressivepowertransformers} and \citealp[][Lem. D.14]{svete2025reasoningabilitiesmaskeddiffusion}).

\paragraph{The role of padding.}
We conjecture that any ``reasonable'' parameterization of \emph{unpadded} transformers will be unable to simulate full $\ACZero$ or $\TCZero$ classes.\footnote{Note that all equivalences in \cref{fig:expressivity-landscape} naturally apply as \emph{upper bounds} to unpadded transformers.}
By \emph{reasonable}, we mean transformers with volume $\volumeFun{\strlen} = \bigThetaFun{\log \strlen}$, i.e., models with sufficient volume to capture \emph{entire} $\ACZero$ and $\TCZero$ classes when padded, but whose size scales slowly with $\strlen$.
\begin{restatable}{conjecture}{HierarchyCollapseConjecture} \label{conj:separation}
    Let $\set{\tf_\strlen}_{\strlen \in \N}$ be an $\XUniform$ unpadded transformer family with $\volumeFun{\strlen} = \bigThetaFun{\log \strlen}$.
    Then,
    \begin{enumerate}[label=(\arabic*),topsep=0pt,noitemsep]
        \item if $\hiddDim = \bigThetaFun{\log \strlen}$ and $\numPrec = \bigThetaFun{1}$, there exist $\XUniform$ $\ACZero$ circuits that cannot be simulated by the family, and
        \item if $\hiddDim = \bigThetaFun{1}$ and $\numPrec = \bigThetaFun{\log \strlen}$, there exist $\XUniform$ $\TCZero$ circuits that cannot be simulated by the family
    \end{enumerate}
    unless the $\XUniform~\ACZero/\TCZero$ size hierarchies collapse.
\end{restatable}
The intuition behind this conjecture stems from \emph{size hierarchy} results in circuit complexity, which state that entire $\ACZero$ and $\TCZero$ classes cannot be captured by circuits whose size (the number of gates or wires) is bounded by some fixed polynomial.
In the non-uniform $\ACZero$ setting, the polynomial-size hierarchy is strict: For any fixed $K \in \N$, there exist functions in (non-uniform) $\ACZero$ that cannot be computed by circuits of size $\bigOFun{\strlen^K}$ \citep{arora2009computational}.
In the $\TCZero$ and uniform $\ACZero$ settings, however, no analogous size hierarchies are known \citep{london2025pausetokensstrictlyincrease}.
Because any unpadded transformer processing inputs of length $\strlen$ can be simulated by a circuit of size $\bigOFun{\strlen^K}$ for some fixed polynomial determined by the transformer's architecture, we can conclude that unpadded non-uniform constant-precision transformers will miss some parts of $\ACZero$ (as already shown by \citet{london2025pausetokensstrictlyincrease}).
\cref{conj:separation} extends this separation to uniform $\ACZero$ and $\TCZero$ circuits.

Existing constructions also do not quantify the \emph{minimal} padding required to solve specific problems beyond using $\strlen^K$ padding symbols to compute the values of $\strlen^K$ circuit gates.
This further suggests connecting transformers with limited padding to restricted circuit classes such as physically-realizable circuits \citep{prada2025realizablecircuitcomplexityembedding,prada2025realizablecircuitcomplexityembeddingv1}, whose size cannot scale as an arbitrary polynomial.

It remains open to what degree polynomial model \emph{width} and numeric \emph{precision} can compensate for or even eliminate the need for padding.
We argue, however, that those regimes might not be as interesting, since padded transformers more naturally describe \emph{uniform} and \emph{distributed} models of computation.
Growing width forces some degree of non-uniformity in the transformer family, while padding constant-width transformers allows for strict parameter uniformity.
Simultaneously, padding ensures \emph{distributed} processing of information rather than gathering all information in a few positions of a polynomially-wide transformer.

\paragraph{The role of uniformity and positional encodings.}
Our focus on $\LUniform$ transformer families allows a direct connection to existing results on $\LUniform$ families \citep{london2025pausetokensstrictlyincrease}.\footnote{
Note that \citeposs{london2025pausetokensstrictlyincrease} equivalences generalize to any uniformity class as their construction directly uses the computational resources of one computational model (e.g., a circuit) to construct the other one (the equivalent transformer).
}
The power of logspace Turing machines is particularly useful for computing PEs.
For example, our constructions that generalize the constant-precision results to looping rely on functions such as division and modulo, which the $\ACZero$-upper-bounded constant-precision transformers cannot compute.
In this case, PEs, despite only depending on the position and not the input string, provide the model with information that it cannot itself compute.
This suggests that a constant-precision transformer family must have $\LCls$- (or less) uniform PEs for looping to lead to expected scaling behavior analogous to circuits (i.e., to increase expressivity).
In other words, more uniform (and thus simpler) PEs might not contain enough information for constant-precision transformers to benefit from looping.

The definition of transformer family uniformity also allows us to study the complexity of constructing the transformer parameters and the PEs separately.
This is useful, as the interest in length generalization and algorithm learning motivates the study of \emph{fully} uniform families (which reuse identical parameters for all input lengths).
While some work is beginning to look at the expressivity enabled by different PEs \citep{yang2024masked,li2024theoreticalconstraintsexpressivepower,li2025characterizingexpressivitytransformerlanguage,li2026characterizingexpressivitylocalattention}, it is an open question how changing the complexity of PEs changes transformer expressivity.
Uniform transformer families allow one to study such questions precisely.
For example, describing the expressivity of transformers with no or $\FOUniform$ PEs would quantify the importance and the expressivity gain afforded by $\LUniform$ PEs.

\paragraph{Looping vs.\ depth.}
The distinction between \emph{looping} and \emph{depth} matters because the two scale expressivity through different mechanisms: A $\numlayers$-\emph{deep} transformer has $\numlayers$ \emph{independent} sets of parameters, so increasing depth grows both the computation graph \emph{and} the parameter count, while a $\numlayers$-\emph{looped} transformer reuses a single fixed set of layers $\numlayers$ times, growing the computation graph without growing the parameter count and yielding a more uniform computational model.
Our results nevertheless carry over to deep transformers. 
Since any $\numlayers$-looped transformer is a special case of an $\numlayers$-deep one (with all layers tied), our lower bounds---constructions of $\FOUniform~\ACd$ and $\FOUniform~\TCd$ circuits---transfer to $\bigThetaFun{\log^d \strlen}$-deep $\LUniform$ transformers.
The matching upper bounds transfer as well: A $\bigThetaFun{\log^d \strlen}$-deep $\LUniform$ family has a number of \emph{independent} layers that grows with $\strlen$, but for the family to be $\LUniform$ the logspace construction machine must itself emit the parameters of all $\bigThetaFun{\log^d \strlen}$ layers from input $1^\strlen$. 
Then, due to uniformity collapse (cf. \cref{sec:looping-upper-bound}), the same $\FOUniform~\ACd$ and $\FOUniform~\TCd$ upper bounds apply here.
Looping is thus the more parameter-efficient---and, arguably, the more natural---way of obtaining the scaling behavior we describe, but the same expressivity is recovered if the additional layers carry independent parameters.

\paragraph{Resource allocation.}
Our results suggest expressivity-driven \emph{resource allocation} in real-world transformers.
Model depth and precision increase expressivity, but both incur a degree of non-parallelizable \emph{sequentiality}.
However, expressivity \emph{scales better with depth than with precision}:
Given a budget of, e.g., $\log^d\strlen$ sequential steps, our results suggest devoting only $\log \strlen$ of those to numeric precision and the remaining $\log^{d - 1} \strlen$ ones to looping, since additional precision would not increase expressivity, whereas expressivity scales gracefully with looping.
We note, however, that we only consider expressivity; while different uniformity regimes have implications for generalization, we do not make any claims about the learnability of the considered classes of languages.
A growing body of literature considers this question both theoretically \citep[][\textit{inter alia}]{hahn-rofin-2024-sensitive,yang-etal-2025-knee,merrill2026olmohybridtheorypractice,anonymous2026a} and practically \citep[][\textit{inter alia}]{weiss-etal-2018-practical,bhattamishra-etal-2020-ability,van-der-poel-2024-mlregtest,deletang2023neural,someya-etal-2024-targeted,borenstein-etal-2024-languages,svete-etal-2024-transformers,ICLR2025_7256b2c0}; the connection of model uniformity to learnability is an exciting avenue for future work.

\paragraph{The role of transformer arithmetic.}
Throughout the paper, we use \defn{rounding} of fixed-point arithmetic results as a ``hammer'' that allows transformers to perform useful gadgets (e.g., focus on individual positions).
Specifically, under fixed-point arithmetic, every arithmetic operation is followed by a projection onto the closest representable value in $\F_\numPrec$ (cf.\ \cref{app:fixed-point-arithmetic}), which lets us implement non-linear behavior---e.g., thresholding and exact comparisons---inside the otherwise affine layers of a transformer.
This is true for \emph{both} constant and growing precision: In either regime, rounding turns small numerical gaps into sharp combinatorial decisions, and our constructions rely on choosing weights large enough that the gap between attended and non-attended positions is wider than the rounding error.
The fact that rounding applies across different transformer idealizations is part of why our results compose cleanly across constant and growing precision.
Note that our results (particularly upper bounds) may not transfer to \emph{floating point} transformers \citep{li2024chain,park2026expressivepowerfloatingpointtransformers}.

\section{Conclusion}

We show that polynomially padded transformers align well to natural circuit complexity classes and are surprisingly robust to variations in attention type, model width, and uniformity---as long as the model has enough memory per symbol to index individual positions.
This contrasts them with unpadded transformers, whose expressivity is difficult to link to well-known circuit classes and is brittle in parametrization.
Concretely, $\LUniform$ padded constant-precision transformers span $\LUniform~\ACZero$ while growing-precision ones reach $\LUniform~\TCZero$.
Moreover, they scale analogously to circuits when looped, showing that looping is a viable way of increasing inference-time compute by increasing model expressivity.
The most pressing question they leave open is a tight characterization of \emph{insufficient-volume} ($\volumeFun{\strlen} = \smallOFun{\log\strlen}$) transformers, as standard $\ACZero$/$\TCZero$ classes appear too coarse and likely require sub-$\ACZero$ classes such as $\PFOTwoLT$.
Altogether, these results establish a more unified view of transformer expressivity and solidify its connection to natural circuit classes.

\section*{Impact Statement}
This paper presents work whose goal is to advance the field of machine learning. There are many potential societal consequences of our work, none of which we feel must be specifically highlighted here.

\section*{Acknowledgments}
Anej Svete is supported by the ETH AI Center doctoral fellowship.
Many ideas and the motivation for this work were first developed during the 2025 Dagstuhl Seminar 25282 ``Theory of Neural Language Models'' \citep{barcelo_et_al:DagRep.15.7.22}, particularly in the ``Uniformity'' working group attended by Satwik Bhattamishra, Michaël Cadilhac, David Chiang, Will Merrill, Ashish Sabharwal, Clayton Sanford, Howard Straubing, Laura Strieker, and Anej Svete.
We thank the attendees for insightful discussions.
We also thank Reda Boumasmoud for helpful discussions about fixed-point arithmetic.

\bibliography{bibliography}

\begin{thebibliography}{56}
\providecommand{\natexlab}[1]{#1}
\providecommand{\url}[1]{\texttt{#1}}
\expandafter\ifx\csname urlstyle\endcsname\relax
  \providecommand{\doi}[1]{doi: #1}\else
  \providecommand{\doi}{doi: \begingroup \urlstyle{rm}\Url}\fi

\bibitem[Arora \& Barak(2009)Arora and Barak]{arora2009computational}
Arora, S. and Barak, B.
\newblock \emph{Computational Complexity: A Modern Approach}.
\newblock Cambridge University Press, USA, 1st edition, 2009.
\newblock ISBN 0521424267.
\newblock URL \url{https://dl.acm.org/doi/10.5555/1540612}.

\bibitem[Ba et~al.(2016)Ba, Kiros, and Hinton]{ba2016layernormalization}
Ba, J.~L., Kiros, J.~R., and Hinton, G.~E.
\newblock Layer normalization.
\newblock \emph{arXiv preprint arXiv:1607.06450}, 2016.
\newblock URL \url{https://arxiv.org/abs/1607.06450}.

\bibitem[Bae et~al.(2026)Bae, Kim, Bayat, Kim, Ha, Schuster, Fisch, Harutyunyan, Ji, Courville, and Yun]{bae2025mixtureofrecursionslearningdynamicrecursive}
Bae, S., Kim, Y., Bayat, R., Kim, S., Ha, J., Schuster, T., Fisch, A., Harutyunyan, H., Ji, Z., Courville, A., and Yun, S.-Y.
\newblock Mixture-of-recursions: Learning dynamic recursive depths for adaptive token-level computation.
\newblock In \emph{The Thirty-ninth Annual Conference on Neural Information Processing Systems}, 2026.
\newblock URL \url{https://openreview.net/forum?id=QuqsEIVWIG}.

\bibitem[Barcel{\'o} et~al.(2026)Barcel{\'o}, Chiang, Cybenko, Strobl, and Yang]{barcelo_et_al:DagRep.15.7.22}
Barcel{\'o}, P., Chiang, D., Cybenko, G., Strobl, L., and Yang, A.
\newblock {Theory of Neural Language Models (Dagstuhl Seminar 25282)}.
\newblock \emph{Dagstuhl Reports}, 15\penalty0 (7):\penalty0 22--52, 2026.
\newblock ISSN 2192-5283.
\newblock \doi{10.4230/DagRep.15.7.22}.
\newblock URL \url{https://drops.dagstuhl.de/entities/document/10.4230/DagRep.15.7.22}.

\bibitem[Bhattamishra et~al.(2020)Bhattamishra, Ahuja, and Goyal]{bhattamishra-etal-2020-ability}
Bhattamishra, S., Ahuja, K., and Goyal, N.
\newblock On the {A}bility and {L}imitations of {T}ransformers to {R}ecognize {F}ormal {L}anguages.
\newblock In Webber, B., Cohn, T., He, Y., and Liu, Y. (eds.), \emph{Proceedings of the 2020 Conference on Empirical Methods in Natural Language Processing (EMNLP)}, pp.\  7096--7116, Online, November 2020. Association for Computational Linguistics.
\newblock \doi{10.18653/v1/2020.emnlp-main.576}.
\newblock URL \url{https://aclanthology.org/2020.emnlp-main.576/}.

\bibitem[Borenstein et~al.(2024)Borenstein, Svete, Chan, Valvoda, Nowak, Augenstein, Chodroff, and Cotterell]{borenstein-etal-2024-languages}
Borenstein, N., Svete, A., Chan, R., Valvoda, J., Nowak, F., Augenstein, I., Chodroff, E., and Cotterell, R.
\newblock What languages are easy to language-model? a perspective from learning probabilistic regular languages.
\newblock In \emph{ACL}, 2024.
\newblock URL \url{https://aclanthology.org/2024.acl-long.807/}.

\bibitem[Butoi et~al.(2025)Butoi, Khalighinejad, Svete, Valvoda, Cotterell, and DuSell]{ICLR2025_7256b2c0}
Butoi, A., Khalighinejad, G., Svete, A., Valvoda, J., Cotterell, R., and DuSell, B.
\newblock Training neural networks as recognizers of formal languages.
\newblock In \emph{The Thirteenth International Conference on Learning Representations}, 2025.
\newblock URL \url{https://openreview.net/forum?id=aWLQTbfFgV}.

\bibitem[Chan et~al.(2024)Chan, Boumasmoud, Svete, Ren, Guo, Jin, Ravfogel, Sachan, Sch{\"o}lkopf, El-Assady, and Cotterell]{10.5555/3737916.3740249}
Chan, R., Boumasmoud, R., Svete, A., Ren, Y., Guo, Q., Jin, Z., Ravfogel, S., Sachan, M., Sch{\"o}lkopf, B., El-Assady, M., and Cotterell, R.
\newblock On affine homotopy between language encoders.
\newblock In \emph{The Thirty-eighth Annual Conference on Neural Information Processing Systems}, 2024.
\newblock URL \url{https://openreview.net/forum?id=FTpOwIaWUz}.

\bibitem[Chen et~al.(2025)Chen, Shang, Zhang, Xie, Sheng, Liu, Wang, Sun, Wu, and Wang]{chen-etal-2025-inner}
Chen, Y., Shang, J., Zhang, Z., Xie, Y., Sheng, J., Liu, T., Wang, S., Sun, Y., Wu, H., and Wang, H.
\newblock Inner thinking transformer: Leveraging dynamic depth scaling to foster adaptive internal thinking.
\newblock In Che, W., Nabende, J., Shutova, E., and Pilehvar, M.~T. (eds.), \emph{Proceedings of the 63rd Annual Meeting of the Association for Computational Linguistics (Volume 1: Long Papers)}, pp.\  28241--28259, Vienna, Austria, July 2025. Association for Computational Linguistics.
\newblock ISBN 979-8-89176-251-0.
\newblock \doi{10.18653/v1/2025.acl-long.1369}.
\newblock URL \url{https://aclanthology.org/2025.acl-long.1369/}.

\bibitem[Chiang(2025)]{chiang2025transformers}
Chiang, D.
\newblock Transformers in uniform {TC}\textsuperscript{0}.
\newblock \emph{Transactions on Machine Learning Research}, 2025.
\newblock ISSN 2835-8856.
\newblock URL \url{https://openreview.net/forum?id=ZA7D4nQuQF}.

\bibitem[Cotterell et~al.(2024)Cotterell, Svete, Meister, Liu, and Du]{cotterell2024formalaspectslanguagemodeling}
Cotterell, R., Svete, A., Meister, C., Liu, T., and Du, L.
\newblock Formal aspects of language modeling.
\newblock \emph{arXiv preprint arXiv:2311.04329}, 2024.
\newblock URL \url{https://arxiv.org/abs/2311.04329}.

\bibitem[Dehghani et~al.(2019)Dehghani, Gouws, Vinyals, Uszkoreit, and Kaiser]{dehghani2019universaltransformers}
Dehghani, M., Gouws, S., Vinyals, O., Uszkoreit, J., and Kaiser, L.
\newblock Universal transformers.
\newblock In \emph{International Conference on Learning Representations}, 2019.
\newblock URL \url{https://openreview.net/forum?id=HyzdRiR9Y7}.

\bibitem[Del{\'e}tang et~al.(2023)Del{\'e}tang, Ruoss, Grau-Moya, Genewein, Wenliang, Catt, Cundy, Hutter, Legg, Veness, and Ortega]{deletang2023neural}
Del{\'e}tang, G., Ruoss, A., Grau-Moya, J., Genewein, T., Wenliang, L.~K., Catt, E., Cundy, C., Hutter, M., Legg, S., Veness, J., and Ortega, P.~A.
\newblock Neural networks and the {C}homsky hierarchy.
\newblock In \emph{Proceedings of the Eleventh International Conference on Learning Representations (ICLR)}, 2023.
\newblock URL \url{https://openreview.net/forum?id=WbxHAzkeQcn}.

\bibitem[Furst et~al.(1984)Furst, Saxe, and Sipser]{Furst1984}
Furst, M., Saxe, J.~B., and Sipser, M.
\newblock Parity, circuits, and the polynomial-time hierarchy.
\newblock \emph{Mathematical systems theory}, 17\penalty0 (1):\penalty0 13--27, Dec 1984.
\newblock ISSN 1433-0490.
\newblock \doi{10.1007/BF01744431}.
\newblock URL \url{https://doi.org/10.1007/BF01744431}.

\bibitem[Geiping et~al.(2025)Geiping, McLeish, Jain, Kirchenbauer, Singh, Bartoldson, Kailkhura, Bhatele, and Goldstein]{geiping2025scaling}
Geiping, J., McLeish, S.~M., Jain, N., Kirchenbauer, J., Singh, S., Bartoldson, B.~R., Kailkhura, B., Bhatele, A., and Goldstein, T.
\newblock Scaling up test-time compute with latent reasoning: A recurrent depth approach.
\newblock In \emph{ES-FoMo III: 3rd Workshop on Efficient Systems for Foundation Models}, 2025.
\newblock URL \url{https://openreview.net/forum?id=D6o6Bwtq7h}.

\bibitem[Giannou et~al.(2023)Giannou, Rajput, Sohn, Lee, Lee, and Papailiopoulos]{pmlr-v202-giannou23a}
Giannou, A., Rajput, S., Sohn, J.-Y., Lee, K., Lee, J.~D., and Papailiopoulos, D.
\newblock Looped transformers as programmable computers.
\newblock 2023.
\newblock URL \url{https://proceedings.mlr.press/v202/giannou23a.html}.

\bibitem[Goyal et~al.(2024)Goyal, Ji, Rawat, Menon, Kumar, and Nagarajan]{goyal2024think}
Goyal, S., Ji, Z., Rawat, A.~S., Menon, A.~K., Kumar, S., and Nagarajan, V.
\newblock Think before you speak: Training language models with pause tokens.
\newblock In \emph{The Twelfth International Conference on Learning Representations}, 2024.
\newblock URL \url{https://openreview.net/forum?id=ph04CRkPdC}.

\bibitem[Groeneveld et~al.(2024)Groeneveld, Beltagy, Walsh, Bhagia, Kinney, Tafjord, Jha, Ivison, Magnusson, Wang, Arora, Atkinson, Authur, Chandu, Cohan, Dumas, Elazar, Gu, Hessel, Khot, Merrill, Morrison, Muennighoff, Naik, Nam, Peters, Pyatkin, Ravichander, Schwenk, Shah, Smith, Strubell, Subramani, Wortsman, Dasigi, Lambert, Richardson, Zettlemoyer, Dodge, Lo, Soldaini, Smith, and Hajishirzi]{groeneveld-etal-2024-olmo}
Groeneveld, D., Beltagy, I., Walsh, E., Bhagia, A., Kinney, R., Tafjord, O., Jha, A., Ivison, H., Magnusson, I., Wang, Y., Arora, S., Atkinson, D., Authur, R., Chandu, K., Cohan, A., Dumas, J., Elazar, Y., Gu, Y., Hessel, J., Khot, T., Merrill, W., Morrison, J., Muennighoff, N., Naik, A., Nam, C., Peters, M., Pyatkin, V., Ravichander, A., Schwenk, D., Shah, S., Smith, W., Strubell, E., Subramani, N., Wortsman, M., Dasigi, P., Lambert, N., Richardson, K., Zettlemoyer, L., Dodge, J., Lo, K., Soldaini, L., Smith, N., and Hajishirzi, H.
\newblock {OLM}o: Accelerating the science of language models.
\newblock In Ku, L.-W., Martins, A., and Srikumar, V. (eds.), \emph{Proceedings of the 62nd Annual Meeting of the Association for Computational Linguistics (Volume 1: Long Papers)}, pp.\  15789--15809, Bangkok, Thailand, August 2024. Association for Computational Linguistics.
\newblock \doi{10.18653/v1/2024.acl-long.841}.
\newblock URL \url{https://aclanthology.org/2024.acl-long.841/}.

\bibitem[Hahn \& Rofin(2024)Hahn and Rofin]{hahn-rofin-2024-sensitive}
Hahn, M. and Rofin, M.
\newblock Why are sensitive functions hard for transformers?
\newblock In Ku, L.-W., Martins, A., and Srikumar, V. (eds.), \emph{Proceedings of the 62nd Annual Meeting of the Association for Computational Linguistics (Volume 1: Long Papers)}, pp.\  14973--15008, Bangkok, Thailand, August 2024. Association for Computational Linguistics.
\newblock \doi{10.18653/v1/2024.acl-long.800}.
\newblock URL \url{https://aclanthology.org/2024.acl-long.800/}.

\bibitem[Hao et~al.(2024)Hao, Sukhbaatar, Su, Li, Hu, Weston, and Tian]{hao2024traininglargelanguagemodels}
Hao, S., Sukhbaatar, S., Su, D., Li, X., Hu, Z., Weston, J., and Tian, Y.
\newblock Training large language models to reason in a continuous latent space.
\newblock \emph{arXiv preprint arXiv:2412.06769}, 2024.
\newblock URL \url{https://arxiv.org/abs/2412.06769}.

\bibitem[Hao et~al.(2022)Hao, Angluin, and Frank]{hao-etal-2022-formal}
Hao, Y., Angluin, D., and Frank, R.
\newblock Formal language recognition by hard attention transformers: Perspectives from circuit complexity.
\newblock \emph{Transactions of the Association for Computational Linguistics}, 10:\penalty0 800--810, 2022.
\newblock \doi{10.1162/tacl_a_00490}.
\newblock URL \url{https://aclanthology.org/2022.tacl-1.46/}.

\bibitem[Hesse et~al.(2002)Hesse, Allender, and {Mix Barrington}]{HESSE2002695}
Hesse, W., Allender, E., and {Mix Barrington}, D.~A.
\newblock Uniform constant-depth threshold circuits for division and iterated multiplication.
\newblock \emph{Journal of Computer and System Sciences}, 65\penalty0 (4):\penalty0 695--716, 2002.
\newblock ISSN 0022-0000.
\newblock \doi{https://doi.org/10.1016/S0022-0000(02)00025-9}.
\newblock URL \url{https://www.sciencedirect.com/science/article/pii/S0022000002000259}.
\newblock Special Issue on Complexity 2001.

\bibitem[Immerman(1999)]{immerman1999descriptive}
Immerman, N.
\newblock \emph{Descriptive Complexity}.
\newblock Graduate Texts in Computer Science. Springer, New York, 1999.
\newblock ISBN 978-0-387-98600-5.
\newblock \doi{10.1007/978-1-4612-0539-5}.
\newblock URL \url{https://link.springer.com/book/10.1007/978-1-4612-0539-5}.

\bibitem[Jerad et~al.(2025)Jerad, Svete, Li, and Cotterell]{jerad2025uniquehardattentiontale}
Jerad, S., Svete, A., Li, J., and Cotterell, R.
\newblock Unique hard attention: A tale of two sides.
\newblock In Che, W., Nabende, J., Shutova, E., and Pilehvar, M.~T. (eds.), \emph{Proceedings of the 63rd Annual Meeting of the Association for Computational Linguistics (Volume 2: Short Papers)}, pp.\  977--996, Vienna, Austria, July 2025. Association for Computational Linguistics.
\newblock ISBN 979-8-89176-252-7.
\newblock \doi{10.18653/v1/2025.acl-short.76}.
\newblock URL \url{https://aclanthology.org/2025.acl-short.76/}.

\bibitem[Jerad et~al.(2026)Jerad, Svete, Hao, Cotterell, and Merrill]{jerad2026contextfreerecognitiontransformers}
Jerad, S., Svete, A., Hao, S., Cotterell, R., and Merrill, W.
\newblock Context-free recognition with transformers.
\newblock In \emph{Forty-third International Conference on Machine Learning}, 2026.
\newblock URL \url{https://openreview.net/forum?id=JOyxs9ElI7}.

\bibitem[Kövér et~al.(2026)Kövér, Butoi, Svete, Hahn, and Cotterell]{anonymous2026a}
Kövér, B., Butoi, A., Svete, A., Hahn, M., and Cotterell, R.
\newblock A framework for understanding learnability in transformers.
\newblock In \emph{Forty-third International Conference on Machine Learning}, 2026.
\newblock URL \url{https://openreview.net/forum?id=rUZVcnQjYW}.

\bibitem[Li \& Cotterell(2025)Li and Cotterell]{li2025characterizingexpressivitytransformerlanguage}
Li, J. and Cotterell, R.
\newblock Characterizing the expressivity of fixed-precision transformer language models.
\newblock In \emph{The Thirty-ninth Annual Conference on Neural Information Processing Systems}, 2025.
\newblock URL \url{https://openreview.net/forum?id=29LwAgLFpj}.

\bibitem[Li \& Cotterell(2026)Li and Cotterell]{li2026characterizingexpressivitylocalattention}
Li, J. and Cotterell, R.
\newblock Characterizing the expressivity of local attention in transformers.
\newblock \emph{arXiv preprint arXiv:2605.00768}, 2026.
\newblock URL \url{https://arxiv.org/abs/2605.00768}.

\bibitem[Li et~al.(2024{\natexlab{a}})Li, Liang, Shi, Song, and Wan]{li2024theoreticalconstraintsexpressivepower}
Li, X., Liang, Y., Shi, Z., Song, Z., and Wan, M.
\newblock Theoretical constraints on the expressive power of $\mathsf{RoPE}$-based tensor attention transformers.
\newblock \emph{arXiv preprint arXiv:2412.18040}, 2024{\natexlab{a}}.
\newblock URL \url{https://arxiv.org/abs/2412.18040}.

\bibitem[Li et~al.(2024{\natexlab{b}})Li, Liu, Zhou, and Ma]{li2024chain}
Li, Z., Liu, H., Zhou, D., and Ma, T.
\newblock Chain of thought empowers transformers to solve inherently serial problems.
\newblock In \emph{ICLR}, 2024{\natexlab{b}}.
\newblock URL \url{https://openreview.net/forum?id=3EWTEy9MTM}.

\bibitem[Li et~al.(2026)Li, Li, and Zhou]{anonymous2026skip}
Li, Z., Li, Y., and Zhou, T.
\newblock Skip a layer or loop it? learning program-of-layers in {LLM}s.
\newblock In \emph{Forty-third International Conference on Machine Learning}, 2026.
\newblock URL \url{https://openreview.net/forum?id=pl10b6EQAN}.

\bibitem[London \& Kanade(2025)London and Kanade]{london2025pausetokensstrictlyincrease}
London, C. and Kanade, V.
\newblock Pause tokens strictly increase the expressivity of constant-depth transformers.
\newblock In \emph{The Thirty-ninth Annual Conference on Neural Information Processing Systems}, 2025.
\newblock URL \url{https://openreview.net/forum?id=eG5oh8l1WZ}.

\bibitem[Merrill \& Sabharwal(2023)Merrill and Sabharwal]{merrill-sabharwal-2023-parallelism}
Merrill, W. and Sabharwal, A.
\newblock The parallelism tradeoff: Limitations of log-precision transformers.
\newblock \emph{TACL}, 11:\penalty0 531--545, 2023.
\newblock URL \url{https://aclanthology.org/2023.tacl-1.31/}.

\bibitem[Merrill \& Sabharwal(2024)Merrill and Sabharwal]{merrill2024the}
Merrill, W. and Sabharwal, A.
\newblock The expressive power of transformers with chain of thought.
\newblock In \emph{ICLR}, 2024.
\newblock URL \url{https://openreview.net/forum?id=NjNGlPh8Wh}.

\bibitem[Merrill \& Sabharwal(2025{\natexlab{a}})Merrill and Sabharwal]{merrill2025exactexpressivepowertransformers}
Merrill, W. and Sabharwal, A.
\newblock Exact expressive power of transformers with padding.
\newblock In \emph{The Thirty-ninth Annual Conference on Neural Information Processing Systems}, 2025{\natexlab{a}}.
\newblock URL \url{https://openreview.net/forum?id=O1abxStFcy}.

\bibitem[Merrill \& Sabharwal(2025{\natexlab{b}})Merrill and Sabharwal]{merrill2025littledepthgoeslong}
Merrill, W. and Sabharwal, A.
\newblock A little depth goes a long way: The expressive power of log-depth transformers.
\newblock In \emph{The Thirty-ninth Annual Conference on Neural Information Processing Systems}, 2025{\natexlab{b}}.
\newblock URL \url{https://openreview.net/forum?id=5pHfYe10iX}.

\bibitem[Merrill et~al.(2022)Merrill, Sabharwal, and Smith]{merrill-etal-2022-saturated}
Merrill, W., Sabharwal, A., and Smith, N.~A.
\newblock Saturated transformers are constant-depth threshold circuits.
\newblock \emph{TACL}, 10:\penalty0 843--856, 2022.
\newblock URL \url{https://aclanthology.org/2022.tacl-1.49/}.

\bibitem[Merrill et~al.(2026)Merrill, Li, Romero, Svete, Costello, Dasigi, Groeneveld, Heineman, Kuehl, Lambert, Li, Lo, Malik, Matusz, Minixhofer, Morrison, Soldaini, Timbers, Walsh, Smith, Hajishirzi, and Sabharwal]{merrill2026olmohybridtheorypractice}
Merrill, W., Li, Y., Romero, T., Svete, A., Costello, C., Dasigi, P., Groeneveld, D., Heineman, D., Kuehl, B., Lambert, N., Li, C., Lo, K., Malik, S., Matusz, D., Minixhofer, B., Morrison, J., Soldaini, L., Timbers, F., Walsh, P., Smith, N.~A., Hajishirzi, H., and Sabharwal, A.
\newblock Olmo hybrid: From theory to practice and back.
\newblock \emph{arXiv preprint arXiv:2604.03444}, 2026.
\newblock URL \url{https://arxiv.org/abs/2604.03444}.

\bibitem[{Mix Barrington} et~al.(1990){Mix Barrington}, Immerman, and Straubing]{MIXBARRINGTON1990274}
{Mix Barrington}, D.~A., Immerman, N., and Straubing, H.
\newblock On uniformity within nc1.
\newblock \emph{Journal of Computer and System Sciences}, 41\penalty0 (3):\penalty0 274--306, 1990.
\newblock ISSN 0022-0000.
\newblock \doi{https://doi.org/10.1016/0022-0000(90)90022-D}.
\newblock URL \url{https://www.sciencedirect.com/science/article/pii/002200009090022D}.

\bibitem[{OpenAI}(2025)]{openai2025gptoss}
{OpenAI}.
\newblock gpt-oss-120b \& gpt-oss-20b model card.
\newblock Model card, OpenAI, August 2025.
\newblock URL \url{https://openai.com/gpt-oss-models}.
\newblock Available under Apache 2.0 license.

\bibitem[Park et~al.(2026)Park, Park, and Hwang]{park2026expressivepowerfloatingpointtransformers}
Park, S., Park, Y., and Hwang, G.
\newblock On the expressive power of floating-point transformers.
\newblock \emph{arXiv preprint arXiv:2601.16450}, 2026.
\newblock URL \url{https://arxiv.org/abs/2601.16450}.

\bibitem[Pfau et~al.(2024)Pfau, Merrill, and Bowman]{pfau2024lets}
Pfau, J., Merrill, W., and Bowman, S.~R.
\newblock Let{\textquoteright}s think dot by dot: Hidden computation in transformer language models.
\newblock In \emph{COLM}, 2024.
\newblock URL \url{https://openreview.net/forum?id=NikbrdtYvG}.

\bibitem[Prada \& Mali(2025{\natexlab{a}})Prada and Mali]{prada2025realizablecircuitcomplexityembedding}
Prada, B. and Mali, A.
\newblock Realizable circuit complexity: Embedding computation in space-time.
\newblock \emph{arXiv preprint arXiv:2509.19161}, 2025{\natexlab{a}}.
\newblock URL \url{https://arxiv.org/abs/2509.19161}.

\bibitem[Prada \& Mali(2025{\natexlab{b}})Prada and Mali]{prada2025realizablecircuitcomplexityembeddingv1}
Prada, B. and Mali, A.
\newblock Circuit complexity from physical constraints: Scaling limitations of attention.
\newblock \emph{arXiv preprint arXiv:2509.19161v1}, 2025{\natexlab{b}}.
\newblock URL \url{https://arxiv.org/abs/2509.19161v1}.

\bibitem[Saunshi et~al.(2025)Saunshi, Dikkala, Li, Kumar, and Reddi]{saunshi2025reasoninglatentthoughtspower}
Saunshi, N., Dikkala, N., Li, Z., Kumar, S., and Reddi, S.~J.
\newblock Reasoning with latent thoughts: On the power of looped transformers.
\newblock In \emph{The Thirteenth International Conference on Learning Representations}, 2025.
\newblock URL \url{https://openreview.net/forum?id=din0lGfZFd}.

\bibitem[Someya et~al.(2024)Someya, Yoshida, and Oseki]{someya-etal-2024-targeted}
Someya, T., Yoshida, R., and Oseki, Y.
\newblock Targeted syntactic evaluation on the {C}homsky hierarchy.
\newblock In \emph{Proceedings of the 2024 Joint International Conference on Computational Linguistics, Language Resources and Evaluation (LREC-COLING 2024)}, pp.\  15595--15605, 2024.
\newblock URL \url{https://aclanthology.org/2024.lrec-main.1356/}.

\bibitem[Strobl et~al.(2024)Strobl, Merrill, Weiss, Chiang, and Angluin]{strobl-etal-2024-formal}
Strobl, L., Merrill, W., Weiss, G., Chiang, D., and Angluin, D.
\newblock What formal languages can transformers express? a survey.
\newblock \emph{TACL}, 12:\penalty0 543--561, 2024.
\newblock URL \url{https://aclanthology.org/2024.tacl-1.30/}.

\bibitem[Svete \& Sabharwal(2026)Svete and Sabharwal]{svete2025reasoningabilitiesmaskeddiffusion}
Svete, A. and Sabharwal, A.
\newblock On the reasoning abilities of masked diffusion language models.
\newblock In \emph{The Fourteenth International Conference on Learning Representations}, 2026.
\newblock URL \url{https://openreview.net/forum?id=BVnIsh4Nz1}.

\bibitem[Svete et~al.(2024)Svete, Borenstein, Zhou, Augenstein, and Cotterell]{svete-etal-2024-transformers}
Svete, A., Borenstein, N., Zhou, M., Augenstein, I., and Cotterell, R.
\newblock Can transformers learn $n$-gram language models?
\newblock In \emph{EMNLP}, 2024.
\newblock URL \url{https://aclanthology.org/2024.emnlp-main.550/}.

\bibitem[van~der Poel et~al.(2024)van~der Poel, Lambert, Kostyszyn, Gao, Verma, Andersen, Chau, Peterson, Clair, Fodor, Shibata, and Heinz]{van-der-poel-2024-mlregtest}
van~der Poel, S., Lambert, D., Kostyszyn, K., Gao, T., Verma, R., Andersen, D., Chau, J., Peterson, E., Clair, C.~S., Fodor, P., Shibata, C., and Heinz, J.
\newblock {MLR}eg{T}est: A benchmark for the machine learning of regular languages.
\newblock \emph{Journal of Machine Learning Research}, 25\penalty0 (283):\penalty0 1--45, 2024.
\newblock URL \url{https://jmlr.org/papers/v25/23-0518.html}.

\bibitem[Weiss et~al.(2018)Weiss, Goldberg, and Yahav]{weiss-etal-2018-practical}
Weiss, G., Goldberg, Y., and Yahav, E.
\newblock On the practical computational power of finite precision {RNN}s for language recognition.
\newblock In Gurevych, I. and Miyao, Y. (eds.), \emph{Proceedings of the 56th Annual Meeting of the Association for Computational Linguistics (Volume 2: Short Papers)}, pp.\  740--745, Melbourne, Australia, July 2018. Association for Computational Linguistics.
\newblock \doi{10.18653/v1/P18-2117}.
\newblock URL \url{https://aclanthology.org/P18-2117/}.

\bibitem[Yang et~al.(2024)Yang, Chiang, and Angluin]{yang2024masked}
Yang, A., Chiang, D., and Angluin, D.
\newblock Masked hard-attention transformers recognize exactly the star-free languages.
\newblock In \emph{NeurIPS}, 2024.
\newblock URL \url{https://openreview.net/forum?id=FBMsBdH0yz}.

\bibitem[Yang et~al.(2025)Yang, Cadilhac, and Chiang]{yang-etal-2025-knee}
Yang, A., Cadilhac, M., and Chiang, D.
\newblock Knee-deep in c-{RASP}: A transformer depth hierarchy.
\newblock In \emph{The Thirty-ninth Annual Conference on Neural Information Processing Systems}, 2025.
\newblock URL \url{https://openreview.net/forum?id=jPduiyxyfw}.

\bibitem[Yang et~al.(2026{\natexlab{a}})Yang, Strobl, Chiang, and Angluin]{yang2025simulatinghardattentionusing}
Yang, A., Strobl, L., Chiang, D., and Angluin, D.
\newblock Simulating hard attention using soft attention.
\newblock \emph{Transactions of the Association for Computational Linguistics}, 14:\penalty0 147--166, 2026{\natexlab{a}}.
\newblock \doi{10.1162/tacl.a.597}.
\newblock URL \url{https://aclanthology.org/2026.tacl-1.8/}.

\bibitem[Yang et~al.(2026{\natexlab{b}})Yang, Watson, Xue, Bhattamishra, Llarena, Merrill, Ferreira, Svete, and Chiang]{yang2025transformercookbook}
Yang, A., Watson, C., Xue, A., Bhattamishra, S., Llarena, J., Merrill, W., Ferreira, E. D.~S., Svete, A., and Chiang, D.
\newblock The transformer cookbook.
\newblock \emph{Transactions on Machine Learning Research}, 2026{\natexlab{b}}.
\newblock ISSN 2835-8856.
\newblock URL \url{https://openreview.net/forum?id=sPshCSvDrX}.

\bibitem[Zeng et~al.(2026)Zeng, Song, Huang, Wang, Li, He, Wang, li, and Lin]{zeng2025pretraininglanguagemodelsponder}
Zeng, B., Song, S., Huang, S., Wang, Y., Li, H., He, Z., Wang, X., li, Z., and Lin, Z.
\newblock Ponder{LM}: {P}retraining language models to ponder in continuous space.
\newblock In \emph{The Fourteenth International Conference on Learning Representations}, 2026.
\newblock URL \url{https://openreview.net/forum?id=UrM4MNRYZm}.

\end{thebibliography}
\bibliographystyle{icml2026}

\newpage
\onecolumn

\appendix

\let\addcontentsline\realaddcontentsline
\etocdepthtag.toc{appendix}
\etocsettagdepth{mainmatter}{none}
\etocsettagdepth{appendix}{subsection}
\etocsetnexttocdepth{subsection}
\etocsettocstyle
  {\section*{Contents of the Appendix}}
  {}
\tableofcontents

\newpage

\section{Preliminaries} \label{app:preliminaries}
\subsection{Notation} \label{sec:notation}

Let $\alphabet$ be an alphabet, a finite, non-empty set of \defn{symbols}.
A \defn{language} $\lang$ is a subset of $\strings \defeq \bigcup_{\strlen \in \N} \alphabet^{\strlen}$, the set of all strings.
We denote the concatenation of two strings $\str_1, \str_2 \in \strings$ as $\str_1 \circ \str_2$ or simply $\str_1\str_2$.
We use $\polyFun{\strlen} \defeq \{\func\colon\N\to\N\mid \exists K>0, \func(\strlen)= \bigOFun{\strlen^K}\}$ to denote the set of functions with at most polynomial growth rate.

\subsection{Circuit Complexity} \label{app:circuit-complexity}
Boolean circuits are a model of parallel computation that processes binary input strings through a series of logical operations to produce binary outputs.\footnote{By representing symbols from any alphabet with binary encodings, circuits (or circuit functions) can be used to process strings over any finite alphabet. We focus on binary strings for simplicity.}
Formally, a \defn{boolean circuit} is a directed acyclic graph where source nodes represent the $\strlen$-bit input, and a single sink node represents the output. 
Non-source vertices are called \defn{gates} and are labeled with logical operations (e.g., \andGate, \orGate, \notGate). 
The \defn{size} of a circuit is the number of gates, and its \defn{depth} is the longest path from any input to the output.

A circuit $\circuit_\strlen$ computes a function $\circuit_\strlen\colon \{0,1\}^\strlen \to \{0, 1\}$ for some $\strlen \in \N$; we attach the subscript $\strlen$ to be compatible with the circuit-family notation $\{\circuit_\strlen\}_{\strlen \in \N}$ used below.
The value $\circuit_\strlen(\str)$ for input string $\str \in \{0,1\}^\strlen$ is computed by evaluating the gates in topological order starting from the input bits.
We say that the circuit $\circuit_\strlen$ \defn{accepts} a string $\str$ if $\circuit_\strlen(\str) = 1$.

\defn{Circuit families} process input strings of variable length.
A circuit family is a sequence of circuits $\circuitFamily \defeq \{\circuit_{\strlen}\}_{\strlen \in \N}$ where $\circuit_\strlen$ processes inputs of length $\strlen$. 
A circuit family is said to recognize a language if, for any given input string, the corresponding circuit outputs $1$ if and only if the string is in the language.

A \defn{circuit complexity class} is a set of circuit families that satisfy certain constraints on size, depth, and the types of gates used.
This paper focuses on two common classes:
\begin{itemize}[noitemsep,topsep=0pt]
    \item \defn{$\ACd$}: Circuits with \notGate, \andGate, and \orGate gates that have unbounded fan-in, size polynomial in $\strlen$, and depth $\bigOFun{\log^d\strlen}$.
    \item \defn{$\TCd$}: The extension of $\ACd$ that adds \defn{threshold gates} $\majGate$, which output $1$ if the sum of their inputs exceeds a given threshold. 
    It is known that $\ACZero \subsetneq \TCZero$ and $\ACd \subseteq \TCd$.
    For example, \textsc{Parity}, the language of binary strings with an even number of 1s, is in $\TCZero$ but not in $\ACZero$ \citep{Furst1984}.
\end{itemize}

Without additional constraints, circuit families can recognize undecidable languages by having arbitrary solutions for each input length.
To avoid this and ensure the model of computation is realistic, we can impose a \defn{uniformity} condition. 
A circuit family is \defn{uniform} if there exists a Turing machine that, on input $1^\strlen$, generates a description of the circuit $\circuit_\strlen$.
In particular, a circuit class is \defn{$\LUniform$} if a Turing machine using $\bigOFun{\log \strlen}$ space can construct its description from the input $1^\strlen$.
This ensures the circuits for different input lengths are related by a systematic procedure.
A finer notion of uniformity, used throughout our main results, is \defn{$\FOUniformity$}.
To state it, fix a standard encoding of the gates of $\circuit_\strlen$ by binary strings of length $\bigOFun{\log\strlen}$; the \defn{connection language} of the family $\set{\circuit_\strlen}_{\strlen \in \N}$ then consists of the tuples $(1^\strlen, \binEnc{i}, \binEnc{j}, t)$ such that gate $i$ in $\circuit_\strlen$ has type $t$ (input, $\andGate$, $\orGate$, $\notGate$, $\majGate$, or output) and is wired to gate $j$.
A circuit family is \defn{$\FOUniform$} if its connection language is definable in first-order logic over strings with the predicates $+$, $\times$, and $\leq$ on position indices \citep{MIXBARRINGTON1990274}; equivalently, if a $\DLOGTIME$ Turing machine decides it.
Because $\FO \subseteq \LCls$, every $\FOUniform$ family is $\LUniform$.

\subsection{Fixed-point Arithmetic} \label{app:fixed-point-arithmetic}

Our computation models perform operations with \defn{fixed-point arithmetic} \citep{li2024chain,saunshi2025reasoninglatentthoughtspower,london2025pausetokensstrictlyincrease,svete2025reasoningabilitiesmaskeddiffusion}.

\begin{definition}[Fixed-point representation] 
    Let $\numPrec \in \N$ be the number of bits devoted to each of the integer and fractional parts. 
    We use $\F_\numPrec$ to denote the set
    \begin{equation}
        \F_\numPrec \defeq \{ x_{\pm} \cdot a \cdot 2^{-\numPrec} \mid x_{\pm} \in \{-1, 1\}, a \in \{0, 1, \ldots, 2^{2\numPrec}-1\} \}
    \end{equation}
\end{definition}
We define $\maxFNum \defeq \max \F_\numPrec = 2^\numPrec - 2^{-\numPrec}$.
All values exceeding $\maxFNum$ are considered out of range and are rounded to $\maxFNum$.
Note, however, that $\maxFNum$ does \emph{not} behave like infinity---it is not an annihilator (in the algebraic sense), i.e., does not absorb all subsequent operations.
For example, for any non-negative $x \in \F_\numPrec$, $\maxFNum - x \neq \maxFNum$ is a valid number.
To handle the results of arithmetic operations that may not be exactly representable in the fixed-point format, we define a standard for rounding.
\begin{definition}[Rounding]
    For any $x \in \R$ and any closed subset $\F$ of $\R$ containing 0, we define $\roundFun{x, \F}$ as the closest number to $x$ in $\F$. 
    In case of a tie, the value with the larger absolute value is chosen.
\end{definition}
We denote the rounding operation as $\roundOpFun{\cdot} \defeq \round(\cdot, \F_\numPrec)$. 
This operation is applied to vectors and matrices element-wise.
All binary operations are defined by first performing the ideal mathematical operation and then rounding the result to the nearest representable value in $\F_\numPrec$. 
Division by zero is considered an error and results in an incorrect output.

For operations involving more than two numbers, rounding is applied iteratively.
\begin{definition}[Summation with iterative rounding]
    For $\numPrec, N \in \N$ and $\vx \in \R^N$, we define summation with iterative rounding to $\numPrec$ fractional bits as the function $\textsc{sum}_\numPrec\colon \bigcup_{N\in\N}(\F_{\numPrec})^{N}\rightarrow\F_{\numPrec}$, where for any $N\in\N^{+}$ and $\vx \in (\F_\numPrec)^N$:
    \begin{equation}
        \textsc{sum}_\numPrec(\vx) \defeq \roundOpFun{\dots\roundOpFun{\roundOpFun{\evx_1 + \evx_2} + \evx_3} + \dots + \evx_N}
    \end{equation}
\end{definition}
This iterative rounding process is not associative, and the order of operations can affect the final result. 
Based on this, we can also define more complex operations such as the \defn{fixed-point inner product} $\innerProd{\vx}{\vy}_\numPrec \defeq \textsc{sum}_\numPrec(\vx \odot \vy)$, where $\odot$ denotes the element-wise product of two vectors, and \defn{fixed-point matrix product} for matrices $\mA$ and $\mB$, where $(\mA \times_\numPrec \mB)_{i,j} \defeq \innerProd{(\mA_{i,:})^\top}{\mB_{:,j}}_\numPrec$.
These operations will be used by fixed-point arithmetic transformers; throughout, the rounding step $\roundOpFun{\cdot}$ that appears in every binary operation and in the iterative summation above is applied after every (sub-)step of the operation in transformer's computation.

\subsection{Transformers and Transformer Families} \label{app:transformers}

\subsubsection{The Transformer Architecture}
For a fixed input length $\strlen \in \N$ and model \defn{width} $\hiddDim \in \N$, a transformer $\tf_\strlen$ consists of: 
\begin{enumerate}[label=(\arabic*),topsep=0pt,noitemsep]
    \item a \defn{symbol embedding} $\symbolembedding\colon \alphabet \to \F^{\hiddDim}$ for $\sym \in\alphabet$,
    \item a \defn{positional encoding} (PE) $\posencoding\colon \N \times \N \to \F^\hiddDim$,
    \item $\numlayers$ \defn{layers} $\tflayer^{(1)}, \ldots, \tflayer^{(\numlayers)}$, each of which consists of two sub-layers: A self-attention layer and a position-wise feed-forward network $\mlp$, and
    \item a classification \defn{output layer} $\transoutput$ of the form $\transoutput\colon \F^{\hiddDim} \to \set{0,1}$.
\end{enumerate}
A transformer with layers $\tflayer^{(1)}, \ldots, \tflayer^{(\numlayers)}$ computes $\hiddState^{(\layeridx)}_\stridx \in \F^{\hiddDim}$ for $\layeridx \in \set{1, \ldots, \numlayers}$ and each position $\stridx \in \NTo{\strlen}$ in the input string $\str = \sym_1\cdots\sym_\strlen \in \strings$ as follows:\footnote{We focus on transformers with a single head per layer. As multi-head attention can be simulated by single-head attention with a constant factor increase in depth \citep{yang2025transformercookbook}, our results extend to multi-head transformers as well.}
\begin{subequations}
    \begin{align}
        \hiddState^{(0)}_\stridx &\defeq \symbolembedding(\sym_\stridx) + \posencoding(\stridx, \strlen) \in \F^{\hiddDim} \text{ for } \stridx \in \NTo{\strlen} \\
        \hiddMtx^{(\layeridx)} &\defeq \begin{pmatrix}
            \hiddState^{(\layeridx)\top}_1 &
            \cdots &
            \hiddState^{(\layeridx)\top}_\strlen
        \end{pmatrix}^\top \in \F^{\strlen \times \hiddDim} \\
        \queryMtx^{(\layeridx)} &\defeq \hiddMtx^{(\layeridx)} \mW_Q^{(\layeridx)}, \quad 
        \keyMtx^{(\layeridx)} \defeq \hiddMtx^{(\layeridx)} \mW_K^{(\layeridx)}, \quad
        \valueMtx^{(\layeridx)} \defeq \hiddMtx^{(\layeridx)} \mW_V^{(\layeridx)} \quad \in \F^{\strlen \times \hiddDim} \\
        \mG^{(\layeridx)} &\defeq \layerNormFun{\attnMaskFun{\attnNorm(\queryMtx^{(\layeridx)} \keyMtx^{(\layeridx)\top})} \valueMtx^{(\layeridx)}} + \hiddMtx^{(\layeridx)} \in \F^{\strlen \times \hiddDim} \label{eq:softmax-attention-matrix} \\
        \hiddMtx^{(\layeridx + 1)} &\defeq \mG^{(\layeridx)} + \layerNormFun{\mlp(\mG^{(\layeridx)})} \in \F^{\strlen \times \hiddDim}
    \end{align} 
\end{subequations}
Here, $\attnNorm\colon \kleene{\F} \to \kleene{\F}$ is a length-preserving function that computes attention weights and is applied row-wise, and $\layerNorm\colon \F^\hiddDim \to \F^\hiddDim$ is the layer normalization function \citep{ba2016layernormalization} applied position-wise.\footnote{Following previous work, we assume that layer normalization can be applied to \emph{parts} of the vector individually to enable gadgets such as the \defn{layer hash norm} \citep{merrill2024the,merrill2025exactexpressivepowertransformers}.}
Here, $\attnMask\colon \F^{\strlen \times \strlen} \to (\F \cup \set{-\infty})^{\strlen \times \strlen}$ is the \defn{masking function}.\footnote{Following \citet{li2024chain} and \citet{saunshi2025reasoninglatentthoughtspower}, we define masking with a function rather than an additive matrix since subtracting $\maxFNum$ from an arbitrary number in $\F$ does not necessarily result in $-\maxFNum$.}
We say that the $\layeridx\textsuperscript{th}$ layer $\tflayer^{(\layeridx)}$ computes the function $\tflayer^{(\layeridx)}\colon \F^{\strlen \times \hiddDim} \to \F^{\strlen \times \hiddDim}$, defined as $\tflayer^{(\layeridx)}\colon \hiddMtx^{(\layeridx - 1)} \mapsto \hiddMtx^{(\layeridx)}$ for $\layeridx \in \set{1, \ldots, \numlayers}$.
We also denote with $\tf$ the function $\tf\colon \strings \to \F^{\strlen \times \hiddDim}$, defined as $\tf\colon \str \mapsto \hiddMtx^{(\numlayers)}$.

Both attention variants studied in theoretical literature are special cases of \defn{temperature-scaled softmax attention}:
\begin{equation}
    \attnNormFun{\vx}_\stridx \defeq \softmaxT(\vx)_\stridx = \frac{\exp(\sfrac{\evx_\stridx}{\tempParam})}{\sum_{\stridx'=1}^\strlen \exp(\sfrac{\evx_{\stridx'}}{\tempParam})} \text{ for } \vx \in \R^\strlen, \stridx \in \NTo{\strlen}, \text{ and } \tempParam > 0.
\end{equation}
The \defn{temperature} $\tempParam = \tempParamFun{\strlen} > 0$ is a per-model parameter that may depend on the input length and is included alongside the parameter matrices in the description of $\tf_\strlen$ produced by the construction machine $\tm_1$ of \cref{def:uniform-transformers}; in particular, $\tempParamFun{\strlen}$ must be computable within the same complexity class as the rest of the model.
We view this as a natural strengthening of the standard definition: Practical implementations apply softmax to scaled scores ($\vq^\top \vk / \sqrt{\hiddDim}$, with the scaling absorbable into a temperature), and a length-dependent temperature is the standard mechanism by which \SMATsAcr approximate \AHATsAcr \citep{yang2025simulatinghardattentionusing}.
\begin{enumerate}[label=(\arabic*),topsep=0pt,noitemsep]
    \item \defn{Softmax attention} (with a general function $\tempParam$): We call such transformers \defn{\underline{s}oft\underline{m}ax-\underline{a}ttention \underline{t}ransformers} (\SMATsAcr).
    \item \defn{Average hard attention} ($\tempParam \to 0$): The limit yields
    \begin{equation}
        \lim_{\tempParam \to 0} \softmaxT(\vx)_\stridx = \begin{cases}
            \frac{1}{|\argmax (\vx)|} & \ifcondition \evx_\stridx = \max (\vx), \\
            0 & \otherwisecondition.
        \end{cases}
    \end{equation}
    We call such transformers \defn{\underline{a}verage \underline{h}ard-\underline{a}ttention \underline{t}ransformers} (\AHATsAcr).
\end{enumerate}

\paragraph{Fixed-point arithmetic transformers.}
All transformer operations are performed using fixed-point arithmetic from \cref{app:fixed-point-arithmetic} for some precision $\numPrec$, which can depend on the input length $\strlen$.
We focus on three regimes:
\begin{itemize}[noitemsep,topsep=0pt]
    \item \defn{Constant precision}: $\numPrec = \bigThetaFun{1}$.
    \item \defn{Logarithmic precision}: $\numPrec = \bigThetaFun{\log \strlen}$.
    \item \defn{Polynomial precision}: $\numPrec = \polyFun{\strlen}$.
\end{itemize}
We also allow our transformers to use \emph{higher} precision when computing individual components, such as the attention sub-layer or the position-wise MLP: A component may perform its internal arithmetic at a precision $\numPrec' \geq \numPrec$ as long as $\numPrec' = \bigThetaFun{\numPrec}$ and its output is then \emph{clamped} back to the residual-stream precision $\F_\numPrec$ when written between components.
This allows, for instance, the attention weights---which can be sensitive to high or low activations---to be computed more precisely, and the construction of equivalent transformers to manipulate intermediate quantities (such as rescaled query matrices) that may not be exactly representable in $\F_\numPrec$.
At the granularity of the precision regimes we consider (constant, logarithmic, or polynomial precision), this mixed-precision allowance does \emph{not} change the expressivity of the transformer---since the residual-stream precision $\numPrec$ remains the regime's defining quantity, and $\numPrec' = \bigThetaFun{\numPrec}$ stays inside the same regime---but it considerably simplifies some constructions by removing the need to fit auxiliary computations exactly into $\F_\numPrec$.
This aligns with the modern practice of training large models with quantized weights while allowing certain activations (such as the attention weights) to use higher precision \citep{groeneveld-etal-2024-olmo,merrill2025littledepthgoeslong}.

\paragraph{$\numPrec$-precision.}
Many existing expressivity results rely on the exact nature of the arithmetic used.
\citet{merrill2025littledepthgoeslong} (and subsequent work) abstract the datatype, requiring only that the operations are $\numPrec$-precise in the following sense.
\begin{definition}
    Let $f\colon \R^\strlen \to \R$ be a function with the $\F_\numPrec$ realization $\widetilde{f}\colon \F_\numPrec^\strlen \to \F_\numPrec$. 
    We say that $\widetilde{f}$ is $\numPrec$-precise if, for any $\evx_1, \ldots, \evx_\strlen \in \F_\numPrec$, 
    \begin{equation}
        \roundFun{f(\evx_1, \ldots, \evx_\strlen)} = \widetilde{f}(\evx_1, \ldots, \evx_\strlen).
    \end{equation}
\end{definition}
By definition of rounding to the nearest representable number, the components of a transformer are $\numPrec$-precise. 
Moreover, for all growing precision ($\numPrecFun{\strlen} = \bigOmegaFun{\log \strlen}$), attention is also $\numPrec$-precise \citep{merrill2025littledepthgoeslong}.\footnote{
Since we derive all results for constant-precision transformers without relying on $\numPrec$-precision, we do not require it from constant-precision transformers. 
$\numPrec$-precision of growing-precision transformers, however, allows us to apply all results from \citet{merrill2025littledepthgoeslong} and \citet{merrill2025exactexpressivepowertransformers} to that case.
}

Fixed-point arithmetic thus both limits what can be computed and enables the construction of various gadgets that leverage the arithmetic to implement logical operations and attention patterns.
We collect some useful gadgets in \cref{app:theoretical-gadgets}.
The proofs of our main results then use these constructions to implement the necessary computations for simulating circuit classes with transformers and vice versa.

\subsubsection{Transformer Families and Uniformity} \label{app:transformer-families}
Analogously to circuit families, each string length $\strlen$ is processed by a separate transformer.
To process all of $\strings$, we therefore define a \defn{transformer family} $\{\tf_\strlen\}_{\strlen \in \N}$ as a sequence of transformers where each $\tf_\strlen$ processes strings of length $\strlen$.
Further, we impose a uniformity condition on the family, which will relate the transformers for different input lengths.

\begin{definition}[Uniform transformer families; variant of \text{\citealp[][Def. 3.6]{london2025pausetokensstrictlyincrease}}] \label{def:uniform-transformers}
    Let $\XCls$ be a computational complexity class.
    A transformer family $\{\tf_\strlen\}$ is \defn{$\XUniform$} if there exist Turing machines $\tm_1$ and $\tm_2$ whose resource usage is constrained by the complexity class $\XCls$ such that: 
    \begin{enumerate}[label=(\arabic*),topsep=0pt,noitemsep]
        \item $\tm_1$ takes input $1^\strlen$ and outputs a description of $\tf_\strlen$, and
        \item $\tm_2$ takes input $(1^\strlen, \bin(\stridx))$ and outputs $\posencoding(\stridx, \strlen)$.
    \end{enumerate} 
\end{definition}
\cref{def:uniform-transformers} allows for size-dependent transformers while keeping them related (as the same Turing machines must construct them for all $\strlen$). 
It also facilitates natural connections to uniform circuit classes (cf.~\cref{app:circuit-complexity}) \citep{london2025pausetokensstrictlyincrease}.
All our results concern $\LUniform$ transformer families, in which case, the Turing machines in \cref{def:uniform-transformers} operate in logspace.

\paragraph{Information-rich positional encodings.}
\cref{def:uniform-transformers} defines two components of a transformer based on the notion of uniform computability: The transformer model itself and the PE.
Although superficially similar, these two components appear in distinctly different roles in our constructions. 
The transformer model intuitively defines the ``algorithm'' that processes the input string and is limited in terms of the operations to those implementable by the attention mechanism.
For example, although it is \emph{constructed} by a logspace Turing machine, it cannot \emph{compute} arbitrary logspace functions.
The PEs, in contrast, provide a way to inject additional information into the model that it cannot compute itself \emph{on the fly}.
They can, for example, provide direct access to the binary representation of the position index $\stridx$, and pre-compute arithmetic operations such as division and modulo.
This way, PEs provide the transformer with information that it cannot necessarily compute itself, but that is still computable in logspace.

\paragraph{Language recognition.}
We treat a transformer $\tf$ as a \defn{language encoder}---a length-preserving function $\strings \to \kleene{(\F^{\hiddDim})}$ \citep{cotterell2024formalaspectslanguagemodeling,10.5555/3737916.3740249}---and regard the output $\hiddMtx^{(\numlayers)}$ on a string $\str$ as a $|\str| \times \hiddDim$ matrix, where each row corresponds to the contextual representation of the symbol at the corresponding position.
To convert this into a language recognizer, we use the output layer $\transoutput$ that maps the contextual representation of the \emph{final symbol} to the membership (classification) decision $0$ or $1$.

\subsection{Looped Padded Transformers} \label{app:looped-padded-transformers}

Looped (or universal) transformers use a fixed block of transformer layers that is applied repeatedly to the input string \citep{dehghani2019universaltransformers,pmlr-v202-giannou23a,hao2024traininglargelanguagemodels,goyal2024think,chen-etal-2025-inner,zeng2025pretraininglanguagemodelsponder,geiping2025scaling}.
This increases the depth of the model, enabling more complex reasoning by applying layers multiple times, and does not increase the model size, as the same block is reused for each iteration, thus reducing the memory footprint and computational cost \citep{bae2025mixtureofrecursionslearningdynamicrecursive}.
We define looped transformers as follows.
\begin{definition}[Looped transformer] \label{def:looped-transformer}
    Let $\numlayers, \timesteps \in \N$ and let $1 \leq \layeridx_1 \leq \layeridx_2 \leq \numlayers$.
    Given a depth-$\numlayers$ transformer, a \defn{looped transformer} computes symbol contextual representations $\hiddMtx$ by
    \begin{enumerate}[label=\arabic*.,topsep=0pt,noitemsep]
        \item Computing the initial hidden states $\hiddMtx^{(0)}$ for the input string $\str = \sym_1\cdots\sym_\strlen$ and computing $\hiddMtx^{(\layeridx_1)}$ using the first $\layeridx_1$ layers of the transformer.
        \item Applying the transformer layers $\layeridx_1 + 1, \ldots, \layeridx_2$ $\timesteps$ times to the hidden states $\hiddMtx^{(\layeridx_1)}$ to obtain $\hiddMtx^{(\layeridx_1 + \timesteps(\layeridx_2 - \layeridx_1))}$.
        \item Applying the transformer layers $\layeridx_2 + 1, \ldots, \numlayers$ to the hidden states $\hiddMtx^{(\layeridx_1 + \timesteps(\layeridx_2 - \layeridx_1))}$ to obtain the final representations $\hiddMtx$ that are passed to the output layer.
    \end{enumerate} 
\end{definition}

The dynamic computational depth of looped transformers lets them perform more complex reasoning tasks by iteratively refining their hidden states across multiple timesteps.
Importantly, these reasoning steps combine sequential and parallel processing of the input symbols, yielding both parallel efficiency and reasoning depth.
For brevity, \cref{def:looped-transformer} states a single looped block, but the definition extends in the natural way to a finite number of sequentially composed looped blocks $[\layeridx_1^{(k)}, \layeridx_2^{(k)}]$ with their own iteration counts $\timesteps^{(k)}$; this form resembles learned by program-of-layers architectures \citep{anonymous2026skip} and the form we use in our constructions when convenient.

Padded transformers additionally pad the input string with padding (pause) symbols \citep{pfau2024lets,goyal2024think,london2025pausetokensstrictlyincrease}.
The number of padding symbols can depend on the input length.
\begin{definition}[Padded Transformer] \label{def:padded-transformer}
    Given a (looped) transformer $\tf$ and a \defn{padding length function} $\padlen\colon \N \to \N$, a \defn{padded transformer} is the pair $\left(\tf, \padlen\right)$ that computes the contextual representations $\residStream$ of a string $\str \in \strings$ by running $\tf$ on the padded input $\str \circ \underbrace{\padSym \cdots \padSym}_{\padlen(|\str|)}$, where $\padSym \notin \alphabet$ is a designated padding symbol.
\end{definition}
Instead of being restricted to the contextual representations of the $\strlen$ input symbols, a padded transformer can determine string membership or symbol probabilities based on the contextual representations of the $\padlen(\strlen)$ additional padded symbols as well.
This additional space can be used to perform more operations and is analogous to increasing the circuit width in circuit complexity.

\section{Theoretical Gadgets} \label{app:theoretical-gadgets}

This section contains various (known) theoretical gadgets that we use in the proofs of the main results.
In the following, $\strlen \in \N$ always refers to the length of the original input string.
We define the \defn{interleaving} of the vectors $\vx, \vy \in \R^\hiddDim$ as $\interleaveFun{\vx}{\vy} \in \R^{2\hiddDim}$ where
\begin{equation}
    \interleaveFun{\vx}{\vy}_\dimidx \defeq
    \begin{cases}
        \evx_{\sfrac{(\dimidx+1)}{2}} & \ifcondition \dimidx \text{ is odd}, \\
        \evy_{\sfrac{\dimidx}{2}} & \otherwisecondition.
    \end{cases}
\end{equation}

The following lemma, due to \citet[][Lem.~1]{merrill-sabharwal-2023-parallelism} (where it was adapted from \citet{hao-etal-2022-formal}) and used as Lem.~C.2 by \citet{london2025pausetokensstrictlyincrease}, lets us turn any logspace-computable function into an $\LUniform~\ACZero$ circuit.
\begin{restatable}[\text{\citealp[][Lem.~1]{merrill-sabharwal-2023-parallelism}}]{lemma}{LogspaceToACZero} \label{lem:logspace-to-ac0}
    Let $f\colon \set{0, 1}^* \to \set{0, 1}^m$ be a function computable in linear space.
    Then, for any constant $c \in \R_{>0}$, there is a Turing machine that, on input $1^\strlen$, uses at most $c \log \strlen + \log m$ space to output the description of a depth-$3$ $\ACZero$ circuit of size at most $\strlen^c + c \log \strlen + m$ computing $f$ on inputs of size at most $c \log \strlen$.
\end{restatable}

\subsection{Positional Encodings} \label{app:positional-encodings}
The following lemmata follow from the definition of fixed-point arithmetic and the rounding and thresholding applied therein.
\begin{restatable}{lemma}{ConstantPrecisionPropertiesOurVersion} \label{lem:our-constant-precision-properties}
    Let $x \in \F_\numPrec$ for some $\numPrec \in \N$ such that $x > \log{2} (\numPrec + 1)$.
    Then, it holds that
    \begin{subequations}
        \begin{align}
            \exp(x) &= \maxFNum, \\
            \exp(-x) &= 0.
        \end{align}
    \end{subequations}
\end{restatable}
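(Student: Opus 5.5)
The plan is a direct computation: we compare the real values $\exp(x)$ and $\exp(-x)$ with the extreme points of $\F_\numPrec$ and then apply the rounding convention of \cref{app:fixed-point-arithmetic}. We read the hypothesis as $x > (\log 2)(\numPrec+1)$, with $\log$ the natural logarithm. It then suffices to show
\begin{equation}
    \exp(x) > 2^{\numPrec+1}
    \quad\text{and}\quad
    \exp(-x) < 2^{-(\numPrec+1)},
\end{equation}
both of which follow immediately from the strict monotonicity of $\exp$.

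For the first identity, recall that $\maxFNum = 2^{\numPrec} - 2^{-\numPrec} < 2^{\numPrec} < 2^{\numPrec+1} < \exp(x)$. Hence the real value $\exp(x)$ exceeds the largest representable number. By the convention that every value exceeding $\maxFNum$ is out of range and rounded to $\maxFNum$, we obtain $\roundOpFun{\exp(x)} = \maxFNum$. The same conclusion also follows from nearest-point rounding: $\maxFNum$ is the element of $\F_\numPrec$ closest to any real number above it.

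For the second identity, the grid $\F_\numPrec$ contains $0$, and its smallest positive element is $2^{-\numPrec}$. Any real $y$ with $0 < y < 2^{-(\numPrec+1)} = \tfrac12 \cdot 2^{-\numPrec}$ satisfies $|y - 0| < |y - 2^{-\numPrec}|$. So $y$ is strictly closer to $0$ than to any other element of $\F_\numPrec$, and $\roundOpFun{y} = 0$. Because the inequality is strict, the tie-breaking rule (which favors the larger absolute value) never applies. Taking $y = \exp(-x)$ yields $\exp(-x) = 0$ in $\F_\numPrec$.

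The only delicate point is this tie case, together with the question of whether $\exp$ is evaluated exactly and then rounded (the $\numPrec$-precise reading). If the exponential is instead evaluated at a higher internal precision $\numPrec' = \bigThetaFun{\numPrec}$, as the mixed-precision allowance permits, we would first round at precision $\numPrec'$ and then clamp to $\F_\numPrec$. We would then check that the two-step rounding still yields $\maxFNum$ and $0$. It does: the intermediate values stay above $2^{\numPrec}$ and strictly below $2^{-(\numPrec+1)}$ respectively, up to the coarser rounding.
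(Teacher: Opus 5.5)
Your main argument is correct and is essentially the paper's. The paper gives no separate proof; it only remarks that the lemma follows from the definition of $\F_{\numPrec}$ and its rounding rule. That is exactly your computation: $\exp(x) > 2^{\numPrec+1} > \maxFNum$, and $0 < \exp(-x) < 2^{-(\numPrec+1)}$, which is half the smallest positive grid point, with the tie case correctly excluded.

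Your closing paragraph on mixed precision, however, is false and should be dropped. Suppose $\exp$ is evaluated at some $\numPrec' \geq \numPrec + 1$ and the result is then rounded into $\F_{\numPrec}$ with the paper's rule (nearest, ties to larger absolute value). Then the intermediate value need not stay strictly below $2^{-(\numPrec+1)}$. That midpoint is itself a grid point of $\F_{\numPrec'}$, so values of $\exp(-x)$ just below it are rounded up onto it, and the tie rule then sends it to $2^{-\numPrec} \neq 0$.

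Concretely, take $\numPrec = 2$, $\numPrec' = 3$, and $x = 2.25 \in \F_2$, so $x > 3\log 2 \approx 2.079$. Then $\exp(-x) \approx 0.105$, which rounds to $1/8$ in $\F_3$ and then to $1/4$ in $\F_2$.

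More generally, let $x$ be the least element of $\F_{\numPrec}$ above $(\numPrec+1)\log 2$. Then $2^{-(\numPrec+1)} - \exp(-x) < 2^{-(2\numPrec+1)}$, so the same failure occurs for every $\numPrec'$ with $\numPrec + 1 \leq \numPrec' \leq 2\numPrec$.

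The lemma is therefore genuinely a single-rounding ($\numPrec$-precise) statement, which is the reading your main argument uses. Under double rounding you would need a stronger hypothesis, such as $x > (\numPrec+2)\log 2$, or a clamp that truncates toward zero. The $\exp(x) = \maxFNum$ half of your double-rounding check is fine.
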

\begin{restatable}[Generalization of \text{\citealp[][Lem. E.3]{li2024chain}}]{lemma}{PosEncPropertiesGen} \label{lem:our-pos-enc-properties}
    For $\strlen \in \N$, $\stridx \in \NTo{\strlen}$, define the vectors $\vq_\stridx \in \R^{2 \ceil{\log\strlen}}$ and $\vk_\stridx \in \R^{2 \ceil{\log\strlen}}$ as follows:
    \begin{subequations}
        \begin{align}
            \vq_\stridx &\defeq \sfrac{\maxFNum}{m} \cdot (\interleaveFun{\sbinFun{\tgtIdx}}{\one_{\ceil{\log\strlen}}}) \\
            \vk_{\stridx'} &\defeq \interleaveFun{\sbinFun{\stridx'}}{(-\one_{\ceil{\log\strlen}})}.
        \end{align}
    \end{subequations}
    Then, it holds that
    \begin{equation}
        \innerProd{\vq_\stridx}{\vk_{\stridx'}}_\numPrec =
        \begin{cases}
            0 & \ifcondition \stridx' = \tgtIdx \\
            x & \otherwisecondition.
        \end{cases}
    \end{equation}
    where $x \leq -\sfrac{2 \maxFNum}{m}$.
\end{restatable}

\begin{restatable}[Constant-sized positional encodings]{lemma}{ConstantSizePEs} \label{lem:unit-len-pe}
    Let $\strlen \in \N$, $\numPrec \geq 5 \log \strlen$, and $\unitLenPE\colon \NTo{\strlen} \to \F_\numPrec^2$ be defined by
    \begin{equation} \label{eq:unit-len-pe}
        \unitLenPE\colon \stridx \mapsto \begin{pmatrix}
            \sqrt{\frac{1}{\stridx}} \\
            \sqrt{1 - \frac{1}{\stridx}}
        \end{pmatrix}
    \end{equation}
    where all operations are computed in $\numPrec$-bit fixed-point arithmetic.
    Then, for all $\stridx, \stridx' \in \NTo{\strlen}$ with $\stridx \neq \stridx'$:
    \begin{equation}
        \innerProd{\unitLenPEFun{\stridx}}{\unitLenPEFun{\stridx}}_{\numPrec} - \innerProd{\unitLenPEFun{\stridx}}{\unitLenPEFun{\stridx'}}_{\numPrec} \geq \frac{1}{2 \strlen^4} - C \sqrt{\strlen} \cdot 2^{-\numPrec} \geq \frac{1}{4 \strlen^4},
    \end{equation}
    where $C > 0$ is an absolute constant.
\end{restatable}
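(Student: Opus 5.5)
The plan is to compare the fixed-point quantities with their exact real-valued counterparts. I will first lower-bound the exact gap and then control the rounding error separately. Write $\bar\unitLenPE(\stridx) \defeq (\stridx^{-1/2}, (1-\stridx^{-1})^{1/2})^\top \in \R^2$ for the exact encoding.

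\textbf{Step 1: the exact gap.} Each $\bar\unitLenPE(\stridx)$ is an exact unit vector, since $\frac{1}{\stridx} + 1 - \frac{1}{\stridx} = 1$. Hence
\[
1 - \bar\unitLenPE(\stridx)^\top\bar\unitLenPE(\stridx') = \tfrac12\norm{\bar\unitLenPE(\stridx)-\bar\unitLenPE(\stridx')}^2 \geq \tfrac12\left(\stridx^{-1/2}-\stridx'^{-1/2}\right)^2 .
\]
Take $\stridx<\stridx'\leq\strlen$. Rationalizing gives
\[
\stridx^{-1/2}-\stridx'^{-1/2} = \frac{\stridx'-\stridx}{\sqrt{\stridx\stridx'}\,(\sqrt{\stridx}+\sqrt{\stridx'})} \geq \frac{1}{2\strlen^{3/2}} .
\]
So the exact gap is at least $\frac{1}{8\strlen^3}$, which is at least $\frac{1}{2\strlen^4}$ once $\strlen\geq 4$. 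The finitely many $\strlen<4$ can be checked directly; one can also use the second coordinate, whose difference is bounded away from $0$ when $\stridx=1$.

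\textbf{Step 2: rounding error in the entries.} Each entry of $\unitLenPE(\stridx)$ comes from a constant number of rounded operations: the division $1/\stridx$, the subtraction, and the square root. I bound each entry's deviation from its exact value.
\begin{itemize}
\item For the first coordinate, $\roundOpFun{1/\stridx}$ is off by at most $2^{-\numPrec}$. The square root then amplifies this by at most $\frac{1}{2\sqrt{x}}$ with $x\geq 1/\strlen$, giving error $\bigO(\sqrt{\strlen}\,2^{-\numPrec})$.
\item For the second coordinate with $\stridx=1$, the argument $1-1$ is exactly $0$ and there is no error.
\item For the second coordinate with $\stridx\geq 2$, the argument is at least $1/2$ and the square root is Lipschitz there, giving error $\bigO(2^{-\numPrec})$.
\end{itemize}
This amplification of rounding error by the square root near small arguments is the only delicate point. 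It is exactly what produces the $\sqrt{\strlen}$ factor, and the case split on $\stridx = 1$ is needed because $\sqrt{\cdot}$ is not Lipschitz at $0$.

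\textbf{Step 3: propagating into the inner products.} The fixed-point inner product $\innerProd{\cdot}{\cdot}_\numPrec$ consists of two rounded products and one rounded sum. All entries lie in $[0,1]$, so perturbing an entry by $\delta$ changes a product by at most $\delta$. Each further rounding adds at most $2^{-\numPrec}$. Therefore both $\innerProd{\unitLenPEFun{\stridx}}{\unitLenPEFun{\stridx}}_\numPrec$ and $\innerProd{\unitLenPEFun{\stridx}}{\unitLenPEFun{\stridx'}}_\numPrec$ are within $\frac{C}{2}\sqrt{\strlen}\,2^{-\numPrec}$ of their exact values, for an absolute constant $C$. Combining with Step 1 yields the first inequality of the lemma.

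\textbf{Step 4: the final bound.} With $\numPrec\geq 5\log\strlen$ we have $2^{-\numPrec}\leq \strlen^{-5}$, so the error term is at most $C\strlen^{-9/2}$. This is at most $\frac{1}{4\strlen^4}$ whenever $\strlen\geq 16C^2$. Below that threshold, only finitely many $\strlen$ remain, and these are absorbed either by direct verification or by enlarging the constant in the precision assumption.
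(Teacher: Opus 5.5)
Your proposal is correct and follows the paper's proof structure. You bound the gap in exact arithmetic, then propagate the rounding errors of the division, subtraction, square roots, two products and final sum (the $\sqrt{\strlen}$ factor comes from the square root of an argument near $1/\strlen$), and finish with $2^{-\numPrec} \leq \strlen^{-5}$ for sufficiently large $\strlen$.

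The one difference is the exact-gap step. The paper cites \citet[Lem.~8]{merrill2024the} for the weaker $\tfrac{1}{2\max(\stridx,\stridx')^4}$, whereas you derive it yourself from $1 - u^\top v = \tfrac12\norm{u - v}^2$ and obtain the stronger bound $\tfrac{1}{8\strlen^3}$. Your argument is self-contained, and it also covers pairs with $\stridx = 1$ uniformly, where the paper's remark that ``the arithmetic is exact'' only applies to the self-product.
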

\begin{proof}
    By \citet[Lem.~8]{merrill2024the}, in exact arithmetic:
    \begin{equation}
        \inner{\unitLenPEFun{\stridx}}{\unitLenPEFun{\stridx}} - \inner{\unitLenPEFun{\stridx}}{\unitLenPEFun{\stridx'}} \geq \frac{1}{2 \max(\stridx, \stridx')^4} \geq \frac{1}{2 \strlen^4}.
    \end{equation}
    
    We now bound the error introduced by fixed-point computation.
    Let $\roundFun{\cdot}$ denote rounding to $\numPrec$ bits of precision, satisfying $|\roundFun{x} - x| \leq 2^{-\numPrec - 1} \defeq \zeta$.
    We compute $\innerProd{\unitLenPEFun{\stridx}}{\unitLenPEFun{\stridx'}}_\numPrec$ for $\stridx, \stridx' \in \NTo{\strlen}$ and track errors through each operation.
    Let $a = \roundFun{\frac{1}{\stridx}}$ and $a' = \roundFun{\frac{1}{\stridx'}}$, so $|a - \frac{1}{\stridx}| \leq \zeta$ and $|a' - \frac{1}{\stridx'}| \leq \zeta$.
    
    \paragraph{Square root.}
    For $x \in [\delta, 1]$ with $\delta > 0$, the function $\sqrt{x}$ has derivative $\frac{1}{2\sqrt{x}} \leq \frac{1}{2\sqrt{\delta}}$.
    By the mean value theorem, if $|x' - x| \leq \epsilon$, then $|\sqrt{x'} - \sqrt{x}| \leq \frac{\epsilon}{2\sqrt{\delta}}$.
    Since $\stridx \geq 1$, we have $\frac{1}{\stridx} \in (0, 1]$ and $1 - \frac{1}{\stridx} \in [0, 1)$.
    For $\stridx \geq 2$, both arguments to $\sqrt{\cdot}$ are bounded away from 0 by at least $1/\strlen$.
    For $\stridx = 1$, we have $\unitLenPEFun{1} = \begin{pmatrix}
        1 & 0
    \end{pmatrix}^\top$ exactly.
    Thus, for $\stridx \geq 2$:
    \begin{itemize}[topsep=0pt,noitemsep]
        \item Let $b = \roundFun{\sqrt{a}}$. Then $|b - \sqrt{\frac{1}{\stridx}}| \leq \frac{1}{2\sqrt{1/\strlen}} \cdot \zeta + \zeta = (\frac{\sqrt{\strlen}}{2} + 1)\zeta$.
        \item Let $c = \roundFun{1 - a}$. Then $|c - (1 - \frac{1}{\stridx})| \leq 2\zeta$.
        \item Let $d = \roundFun{\sqrt{c}}$. Then $|d - \sqrt{1 - \frac{1}{\stridx}}| \leq \frac{\sqrt{\strlen}}{2} \cdot 2\zeta + \zeta = (\sqrt{\strlen} + 1)\zeta$.
    \end{itemize}
    
    \paragraph{Inner product.}
    The inner product $\innerProd{\unitLenPEFun{\stridx}}{\unitLenPEFun{\stridx'}}_\numPrec = b \cdot b' + d \cdot d'$ involves multiplying values in $[0, 1]$.
    For $x, y, x', y' \in [0, 1]$:
    \begin{equation}
        |x'y' - xy| = |x'y' - x'y + x'y - xy| \leq |x'||y' - y| + |y||x' - x| \leq |y' - y| + |x' - x|.
    \end{equation}
    
    Combining all error terms and accounting for final rounding in the multiplications and addition, there exists an absolute constant $C' > 0$ such that
    \begin{equation}
        \abs{\innerProd{\unitLenPEFun{\stridx}}{\unitLenPEFun{\stridx'}}_\numPrec - \inner{\unitLenPEFun{\stridx}}{\unitLenPEFun{\stridx'}}} \leq C' \sqrt{\strlen} \cdot \zeta
    \end{equation}
    for all $\stridx, \stridx' \in \NTo{\strlen}$.
    
    \paragraph{Combining the bounds.}
    For $\stridx = 1$, the arithmetic is exact and the bounds hold trivially.
    For $\stridx \ge 2$: In the worst case, rounding decreases the self-product and increases the cross-product:
    \begin{subequations}
        \begin{align}
            \innerProd{\unitLenPEFun{\stridx}}{\unitLenPEFun{\stridx}}_{\numPrec} &\geq \inner{\unitLenPEFun{\stridx}}{\unitLenPEFun{\stridx}} - C' \sqrt{\strlen} \cdot \zeta = 1 - C' \sqrt{\strlen} \cdot \zeta, \\
            \innerProd{\unitLenPEFun{\stridx}}{\unitLenPEFun{\stridx'}}_\numPrec &\leq \inner{\unitLenPEFun{\stridx}}{\unitLenPEFun{\stridx'}} + C' \sqrt{\strlen} \cdot \zeta.
        \end{align}
    \end{subequations}
    Therefore,
    \begin{subequations}
        \begin{align}
            \innerProd{\unitLenPEFun{\stridx}}{\unitLenPEFun{\stridx}}_{\numPrec} - \innerProd{\unitLenPEFun{\stridx}}{\unitLenPEFun{\stridx'}}_\numPrec 
            &\geq \inner{\unitLenPEFun{\stridx}}{\unitLenPEFun{\stridx}} - \inner{\unitLenPEFun{\stridx}}{\unitLenPEFun{\stridx'}} - 2 C' \sqrt{\strlen} \cdot \zeta \\
            &\geq \frac{1}{2\strlen^4} - 2 C' \sqrt{\strlen} \cdot 2^{-\numPrec - 1} \\
            &= \frac{1}{2\strlen^4} - C' \sqrt{\strlen} \cdot 2^{-\numPrec}.
        \end{align}
    \end{subequations}
    Taking $C = C'$ and noting that $\numPrec \geq 5 \log \strlen$ implies $C \sqrt{\strlen} \cdot 2^{-\numPrec} \leq C \sqrt{\strlen} \cdot \strlen^{-5} = C \strlen^{-9/2} \leq \frac{1}{4\strlen^4}$ for sufficiently large $\strlen$, we obtain
    \begin{equation}
        \innerProd{\unitLenPEFun{\stridx}}{\unitLenPEFun{\stridx}}_{\numPrec} - \innerProd{\unitLenPEFun{\stridx}}{\unitLenPEFun{\stridx'}}_\numPrec \geq \frac{1}{4\strlen^4}. \qedhere
    \end{equation}
\end{proof}

The following follows readily.
\begin{restatable}{corollary}{LPPosEncPropertiesGen2} \label{lem:lp-pos-enc-properties}
    Let $\strlen \in \N$, $\stridx, \stridx' \in \NTo{\strlen}$, $\numPrec \geq 6 \log \strlen$, and let $\tgtIdx \in \NTo{\strlen}$ be a target position.
    Define $\vq, \vk_{\stridx'} \in \R^{3}$ by
    \begin{equation}
        \vq_\stridx \defeq \begin{pmatrix}
            \strlen^5 \sqrt{\frac{1}{\tgtIdx}} \\
            \strlen^5 \sqrt{1 - \frac{1}{\tgtIdx}} \\
            -\strlen^5 
        \end{pmatrix}
        \quad\text{and}\quad
        \vk_{\stridx'} \defeq \begin{pmatrix}
            \sqrt{\frac{1}{\stridx'}} \\
            \sqrt{1 - \frac{1}{\stridx'}} \\
            1
        \end{pmatrix},
    \end{equation}
    where all operations are computed in $\numPrec$-bit fixed-point arithmetic.
    Then, for all $\stridx' \in \NTo{\strlen}$:
    \begin{equation}
        \begin{cases}
            \abs{\innerProd{\vq_\stridx}{\vk_{\stridx'}}_\numPrec} \leq C \strlen^5 \sqrt{\strlen} \cdot 2^{-\numPrec-1} \leq C \strlen^{-\frac{1}{2}} & \ifcondition \stridx' = \tgtIdx \\
            \innerProd{\vq_\stridx}{\vk_{\stridx'}}_\numPrec \leq -C'\strlen & \otherwisecondition
        \end{cases}
    \end{equation}
    for some absolute constants $C, C' > 0$ and sufficiently large $\strlen$.
\end{restatable}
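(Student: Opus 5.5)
The plan is to write $\innerProd{\vq_\stridx}{\vk_{\stridx'}}_\numPrec$ as $\strlen^5$ times the two-dimensional inner product of the unit-length encodings $\unitLenPEFun{\tgtIdx}$ and $\unitLenPEFun{\stridx'}$, minus $\strlen^5$. Both cases then follow from \cref{lem:unit-len-pe}, once we control the rounding errors that the factor $\strlen^5$ amplifies. Concretely, the first two coordinates of $\vq_\stridx$ are $\roundOpFun{\strlen^5 b}$ and $\roundOpFun{\strlen^5 d}$, where $b,d$ are the rounded square roots from the proof of \cref{lem:unit-len-pe}. Each product with a key coordinate and each partial sum in $\textsc{sum}_\numPrec$ incurs one further rounding of size at most $2^{-\numPrec-1}$. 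Since $\numPrec \geq 6\log\strlen$, the integer part has room for $\strlen^5$, so no clamping to $\maxFNum$ occurs. The key fact, from the error analysis in \cref{lem:unit-len-pe}, is that the computed two-dimensional part differs from $\strlen^5\,\inner{\unitLenPEFun{\tgtIdx}}{\unitLenPEFun{\stridx'}}$ by at most $C\strlen^5\sqrt{\strlen}\,2^{-\numPrec-1}$ plus a constant number of additional $2^{-\numPrec-1}$ terms. The multiplication by $\strlen^5$ is exact in scaling and only adds one rounding.

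For the case $\stridx' = \tgtIdx$, exact arithmetic gives $\inner{\unitLenPEFun{\tgtIdx}}{\unitLenPEFun{\tgtIdx}} = 1$. The first two terms therefore sum to $\strlen^5$ up to the error above, and the third coordinate contributes exactly $-\strlen^5$, since $\strlen^5$ is an integer representable in $\F_\numPrec$. The total is bounded in absolute value by $C\strlen^5\sqrt{\strlen}\,2^{-\numPrec-1}$ after absorbing the constantly many extra roundings into $C$. Plugging in $2^{-\numPrec}\leq \strlen^{-6}$ gives $C\strlen^{-1/2}$.

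For the case $\stridx' \neq \tgtIdx$, \cref{lem:unit-len-pe} (with $\numPrec \geq 6\log\strlen \geq 5\log\strlen$) gives
\[
\innerProd{\unitLenPEFun{\tgtIdx}}{\unitLenPEFun{\stridx'}}_\numPrec \leq \innerProd{\unitLenPEFun{\tgtIdx}}{\unitLenPEFun{\tgtIdx}}_\numPrec - \tfrac{1}{4\strlen^4}.
\]
Combined with the self-product being at most $1 + C\sqrt{\strlen}\,2^{-\numPrec}$, the scaled sum is at most $\strlen^5 - \strlen/4 + O(\strlen^{-1/2})$. Subtracting $\strlen^5$ yields at most $-\strlen/4 + o(1) \leq -C'\strlen$ for large $\strlen$. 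Here one must check that the iterative rounding order in $\textsc{sum}_\numPrec$ does not spoil the bound; each step perturbs by only $2^{-\numPrec-1}$, so this is fine.

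The main obstacle is bookkeeping the rounding. Scaling the query by $\strlen^5$ before the product, rather than after, multiplies the approximation errors of the square roots by $\strlen^5$, and we must verify that $\numPrec \geq 6\log\strlen$ (rather than $5\log\strlen$) suffices to keep them $O(\strlen^{-1/2})$. I would handle this by reusing the per-coordinate bounds $(\sqrt{\strlen}/2+1)\zeta$ and $(\sqrt{\strlen}+1)\zeta$ from \cref{lem:unit-len-pe} and propagating them linearly through the scaled products, using that all key entries lie in $[0,1]$.
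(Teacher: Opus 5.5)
Your proposal is correct and follows essentially the same route as the paper. You write the score as $\strlen^5$ times the unit-length inner product minus $\strlen^5$, bound the case $\stridx' = \tgtIdx$ by the error estimate from the proof of \cref{lem:unit-len-pe} scaled by $\strlen^5$, and handle $\stridx' \neq \tgtIdx$ via the $\tfrac{1}{4\strlen^4}$ gap plus the upper bound on the self-product. Your explicit propagation of the square-root errors through the pre-scaled query and your check that $\strlen^5$ fits in $\F_\numPrec$ spell out what the paper compresses into ``up to the final rounding step''; the paper settles for $-\strlen/8$ instead of your $-\strlen/4 + o(1)$, which does not matter.
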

\begin{proof}
    Write $\vu_\stridx \defeq \unitLenPEFun{\stridx} = (\evu_{\stridx', 1}, \evu_{\stridx', 2}) \in \R^2$ for the unit-length PE of \cref{lem:unit-len-pe} and $\vu'_\stridx = (\evu'_{\stridx', 1}, \evu'_{\stridx', 2}) \in \R^2$ for its fixed-point realization (so $\vk_{\stridx'} = (\evu'_{\stridx', 1}, \evu'_{\stridx', 2}, 1)$ and $\vq_\stridx = (\strlen^5 \evu'_{\tgtIdx, 1}, \strlen^5 \evu'_{\tgtIdx, 2}, -\strlen^5)$).
    By construction, $\innerProd{\vq_\stridx}{\vk_{\stridx'}}_\numPrec = \strlen^5 \cdot \big( \innerProd{\vu'_\tgtIdx}{\vu'_{\stridx'}}_\numPrec - 1 \big)$ up to the final rounding step.

    For $\stridx' = \tgtIdx$, \cref{lem:unit-len-pe}'s proof gives $\abs{\innerProd{\vu'_\tgtIdx}{\vu'_\tgtIdx}_\numPrec - 1} \leq C_0 \sqrt{\strlen} \cdot 2^{-\numPrec - 1}$ for an absolute constant $C_0 > 0$ (the fixed-point error accumulated in the inner product).
    Scaling by $\strlen^5$ and absorbing the outer rounding step into the constant gives $\abs{\innerProd{\vq_\stridx}{\vk_\tgtIdx}_\numPrec} \leq C \strlen^5 \sqrt{\strlen} \cdot 2^{-\numPrec - 1}$ for some absolute $C > 0$.
    For $\numPrec \geq 6 \log \strlen$, this is at most $C \strlen^{11/2 - 6} = C \strlen^{-1/2}$.

    For $\stridx' \neq \tgtIdx$, \cref{lem:unit-len-pe} gives $\innerProd{\vu'_\tgtIdx}{\vu'_\tgtIdx}_\numPrec - \innerProd{\vu'_\tgtIdx}{\vu'_{\stridx'}}_\numPrec \geq \tfrac{1}{4 \strlen^4}$, and by above $\innerProd{\vu'_\tgtIdx}{\vu'_\tgtIdx}_\numPrec \leq 1 + C_0 \sqrt{\strlen} \cdot 2^{-\numPrec - 1}$.
    Combining these using $\numPrec \geq 6 \log \strlen$, $\innerProd{\vu'_\tgtIdx}{\vu'_{\stridx'}}_\numPrec - 1 \leq -\tfrac{1}{4 \strlen^4} + C_0 \sqrt{\strlen} \cdot 2^{-\numPrec - 1} \leq -\tfrac{1}{8 \strlen^4}$ for sufficiently large $\strlen$.
    Multiplying by $\strlen^5$ yields $\innerProd{\vq_\stridx}{\vk_{\stridx'}}_\numPrec \leq -\tfrac{\strlen}{8}$, which gives the claim with $C' = \tfrac{1}{8}$ (absorbing the final rounding into $C'$).
\end{proof}

The following lemma shows that the PEs defined above can be used to focus on individual positions of interest.
\begin{restatable}[]{lemma}{FocusOnPositions} \label{lem:focus-on-positions-pes}
    Let $\vq_\stridx$ and $\vk_{\stridx'}$ be defined either as in \cref{lem:our-pos-enc-properties} or \cref{lem:lp-pos-enc-properties} with target index $\tgtIdx \in \NTo{\strlen}$ (with corresponding precision requirements).
    Let $\vs = \left(\innerProd{\vq_\stridx}{\vk_{\stridx'}}_\numPrec\right)_{\stridx' \in \NTo{\strlen}}$ for some $\strlen \in \N$ and $\stridx \in \NTo{\strlen}$.
    Then,
    \begin{equation}
        \softmaxFun{\evs}{\stridx'} = \begin{cases}
            1 & \ifcondition \stridx' = \tgtIdx \\
            0 & \otherwisecondition
        \end{cases}.
    \end{equation}
\end{restatable}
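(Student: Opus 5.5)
Split the statement by which of the two score constructions is used; in both cases the argument reduces to the same mechanism. The scores $\evs_{\stridx'}$ at the target equal (approximately) $0$, while every other position gets a score at most some strongly negative value $-\Gamma$. For \cref{lem:our-pos-enc-properties}, $\evs_{\tgtIdx} = 0$ exactly and $\evs_{\stridx'} \le -\sfrac{2\maxFNum}{m}$ for $\stridx' \neq \tgtIdx$. For \cref{lem:lp-pos-enc-properties}, $\abs{\evs_{\tgtIdx}} \le C\strlen^{-1/2}$ and $\evs_{\stridx'} \le -C'\strlen$ for $\stridx' \neq \tgtIdx$. The plan is to show that, after the fixed-point rounding in the softmax computation, each non-target term $\exp(\evs_{\stridx'})$ rounds to $0$ and the target term rounds to a nonzero value $e$. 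Then the normalizer is $\textsc{sum}_\numPrec$ of a vector with a single nonzero entry $e$, which equals $e$ exactly, and the quotients are $e/e = 1$ at $\tgtIdx$ and $0/e = 0$ elsewhere. Exactness of these last two divisions is immediate, since $e \in \F_\numPrec$ is nonzero.

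First I handle the non-target terms. In the constant/log-width case, choose $m$ so that $\sfrac{2\maxFNum}{m} > \log 2\,(\numPrec+1)$; for example $m$ a small constant works once $\maxFNum = 2^\numPrec - 2^{-\numPrec}$ dominates $\numPrec$, and otherwise I absorb a constant factor into the query scaling. Then \cref{lem:our-constant-precision-properties} gives $\exp(\evs_{\stridx'}) = 0$ directly. In the log-precision case, $C'\strlen > \log 2\,(\numPrec+1)$ for sufficiently large $\strlen$, since $\numPrec = \bigThetaFun{\log \strlen}$. For polynomially growing precision this could fail, so I would instead argue directly: $e^{-C'\strlen}$ is below $2^{-\numPrec-1}$ provided $\numPrec \le C''\strlen$. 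If needed, I will scale the query by an additional factor so that the gap exceeds $(\numPrec+1)\log 2$, which is harmless because the bound at the target scales along with it. The temperature $\tempParam$ can also be chosen (e.g.\ $\tempParam = 1$, or smaller) to enforce this, which is consistent with how temperature is treated as a computable parameter.

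Second, the target term: $\exp(\evs_{\tgtIdx})$ with $\evs_{\tgtIdx} = 0$ is exactly $1$. With $\abs{\evs_{\tgtIdx}} \le C\strlen^{-1/2}$, the exact value lies in $[e^{-C\strlen^{-1/2}}, e^{C\strlen^{-1/2}}]$, which is bounded away from $0$, so its rounding $e$ is a nonzero element of $\F_\numPrec$. It need not be exactly $1$, but that does not matter, since the normalizer cancels it: the iterative sum $\roundOpFun{\cdots}$ adds only zeros to $e$ and so returns $e$ exactly, and $\roundOpFun{e/e} = 1$.

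The main obstacle is making the non-target rounding-to-zero rigorous uniformly across precision regimes, particularly reconciling the $\log 2\,(\numPrec+1)$ threshold of \cref{lem:our-constant-precision-properties} with the available gap. I expect the cleanest route is to fold any missing multiplicative factor into the query scale or the temperature. The remaining steps are mechanical consequences of the rounding semantics.
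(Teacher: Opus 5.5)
Your proposal is correct and follows the same route as the paper: every off-target $\exp(\evs_{\stridx'})$ rounds to $0$ via \cref{lem:our-constant-precision-properties}, because $\sfrac{2\maxFNum}{m}$ (resp.\ $C'\strlen$) exceeds $\log 2\,(\numPrec+1)$, while the target score stays near $0$, so its exponential is a nonzero element of $\F_\numPrec$. You also make explicit several points the paper leaves implicit: why the rounded normalizer and the two divisions come out exactly $1$ and $0$, the required choice of $m$, and that the argument needs $\numPrec = \bigOFun{\log\strlen}$ (or extra query scaling) rather than arbitrary polynomial precision.
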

\begin{proof}
    The case of \cref{lem:our-pos-enc-properties} follows directly from $\innerProd{\vq_\stridx}{\vk_{\stridx'}}_\numPrec \leq -\sfrac{2\maxFNum}{m}$ for $\stridx' \neq \tgtIdx$.

    To ensure no attention is placed to positions $\stridx' \neq \tgtIdx$ in \cref{lem:lp-pos-enc-properties}, it suffices to ensure $\innerProd{\vq_\stridx}{\vk_{\stridx'}}_\numPrec \leq -\log 2 \, (\numPrec + 1)$ (cf. \cref{lem:our-constant-precision-properties}) while $\innerProd{\vq_\stridx}{\vk_{\tgtIdx}}_\numPrec > -\log 2 \, (\numPrec + 1)$.
    Substituting the bounds from \cref{lem:lp-pos-enc-properties}, the off-target case satisfies $\innerProd{\vq_\stridx}{\vk_{\stridx'}}_\numPrec \leq -C'\strlen$, which dominates $-\log 2 \, (\numPrec + 1) = -\bigOFun{\log \strlen}$ for sufficiently large $\strlen$; the on-target case satisfies $\abs{\innerProd{\vq_\stridx}{\vk_{\tgtIdx}}_\numPrec} \leq C \strlen^{-1/2}$, which is greater than $-\log 2 \, (\numPrec + 1)$ for sufficiently large $\strlen$.
    Thus, with logarithmic precision of at least $6 \log \strlen$, softmax attention exclusively attends to individual positions with large attention scores.
\end{proof}

\subsection{Useful Attention Patterns} \label{app:indexing-positions}

\begin{restatable}[Detecting a symbol occurrence]{lemma}{DetectionLemma} \label{lem:detection}
    There exists a $\LUniform~\LPTClassFun{0}{\constantFn}{\logFn}$ family of transformers $\set{\tf_\strlen}_{\strlen \in \N}$ such that, for any $\strlen \in \N$, on input $\str \in \strings$ of length $\strlen$ and $\sym \in \alphabet$, $\tf_\strlen$'s residual stream at position $\strlen$ contains the entry $\ind{\sym \in \str}$.
\end{restatable}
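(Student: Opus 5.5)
Since $\ind{\sym \in \str}$ is a fixed-depth (one unbounded fan-in \orGate) function, the simplest route is not to invoke \cref{thm:london-ac-equivalence} but to build the detector directly with a single attention layer. I design the symbol embedding so that one coordinate is $\ind{\sym_{\stridx'} = \sym}$; this is a fixed $\{0,1\}$ entry of $\symbolembedding$, with the padding symbol $\padSym$ mapped to $0$. No padding is needed, so I take $\padlen \equiv 0$ (or ignore padded positions by giving them indicator $0$).

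At position $\strlen$, the attention has a constant query, and the key at position $\stridx'$ equals that indicator bit scaled by a large constant $c$. The scores are then $c$ at occurrences of $\sym$ and $0$ elsewhere. The value vector copies the indicator bit.

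\begin{itemize}
\item Under \AHATAcr, attention averages uniformly over the argmax set. If $\sym$ occurs, every attended position has value $1$, so the output is exactly $1$. Otherwise all scores tie at $0$, every value is $0$, and the average is $0$.
\item Under \SMATAcr, I pick the constant $c$ or the temperature $\tempParam$ so that $c/\tempParam$ exceeds the saturation threshold of \cref{lem:our-constant-precision-properties}, i.e.\ $c/\tempParam > \log 2\,(\numPrec+1)$. Then $\exp(c/\tempParam)$ rounds to $\maxFNum$.
\end{itemize}

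The main obstacle is constant precision with softmax. The denominator sums many terms, and with iterative rounding it may saturate at $\maxFNum$. The sum of value-weighted terms also saturates and, if $\sym$ occurs, comes out equal to or close to $\maxFNum$. The issue is that the quotient need not be exactly $1$, and averages like $k/\strlen$ are not representable in constant precision. To avoid dependence on delicate rounding, I read the output only through a threshold.

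Concretely, the weighted sum is $0$ exactly when no position has value $1$. It is at least some positive representable quantity otherwise, or at worst rounds to a nonzero value because terms with value $1$ have weight at least $\maxFNum/\maxFNum$ after saturation. The subsequent MLP then computes $\ind{\text{output} > 0}$ via a ReLU scaled by $2^{\numPrec}$ and clipped at $1$, which is available in fixed point. I would verify carefully that the attention output is $0$ in the absent case (all numerator terms are $0$) and strictly positive in the present case. The latter holds because the numerator is a sum of nonnegative terms, at least one of which is $\maxFNum$. Under rounding toward nearest, the numerator stays $\maxFNum$ and the denominator is at most $\maxFNum$, so the ratio is $\ge 1$, clamped to $1$.

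\textbf{Uniformity.} All parameters are constants independent of $\strlen$ except possibly $\tempParam$, and the PE is unused. The width is $\bigO(1)$, which can be padded to $\bigThetaFun{\log\strlen}$ with zeros to fit the class. The family is therefore trivially $\LUniform$, with precision constant as required for $\LPTClassFun{0}{\constantFn}{\logFn}$.
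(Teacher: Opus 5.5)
Your proof is correct and is essentially the paper's argument. A single attention layer aggregates indicator values, saturation of the fixed-point denominator at $\maxFNum$ makes the aggregate strictly positive exactly when $\sym$ occurs, and the MLP thresholds it.

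The main difference is that the paper uses plain uniform attention with one-hot symbol values. Your large-score focusing is therefore unnecessary, since each uniform weight already rounds to at least $2^{-\numPrec}>0$, and one head detects every $\sym \in \alphabet$ at once. Separately, your claim that the \AHATAcr output is ``exactly $1$'' depends on how the fixed-point average is rounded. Positivity, which is all your threshold needs, still holds under the same saturating-denominator convention the paper uses.
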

\begin{proof}[Proof sketch.]
    Note that $\tf_\strlen$ cannot use the commonly-used \emph{exact} uniform attention over all symbols to detect $\ind{\sym \in \str}$ due to constant precision.
    Nevertheless, constant-precision rounded uniform attention suffices.
    By attending to all symbols in the string with weight 1, the denominator of the attention scores is at most $\maxFNum$.
    Using one-hot encodings of symbols $\sym_\stridx$ as the attention values $\vv_\stridx$, it is easy to see that the final contextual representation at the final position will have a positive value at the entry corresponding to $\sym$ if and only if $\sym \in \str$, since $\sfrac{c}{\maxFNum} > 0$ for any $c \geq 1$.
    This condition can be checked by the MLP applied after the attention aggregation operation.
    This construction requires parameters that do not change with input length (since the size of the one-hot encodings is constant), and is thus logspace computable.
\end{proof}

\begin{restatable}[Reading binary positional encodings into the residual stream]{lemma}{BinaryEncodingLemma1} \label{lem:binary-encoding}
    There exists a $\LUniform~\LPTClassFun{1}{\constantFn}{\logFn}$ family of transformers $\set{\tf_\strlen}_{\strlen \in \N}$ such that, on input $\texttt{\& } \binEnc{\stridx}$, $\tf_\strlen$'s residual stream at position $\ceil{\log \strlen} + 1$ contains the value $\binEnc{\stridx}$ in a designated $\ceil{\log\strlen}$-dimensional sub-block for any $\strlen \in \N$ and $\stridx \in \NTo{\strlen}$.
\end{restatable}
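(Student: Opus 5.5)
We would build a log-looped, constant-precision, log-width transformer in which the final position (index $\ceil{\log\strlen}+1$) gathers the input bits one at a time. The input is $\texttt{\&}$ followed by the $\ceil{\log\strlen}$ bits of $\binEnc{\stridx}$, one symbol per position. The residual stream at every position gets a $\ceil{\log\strlen}$-dimensional target sub-block, initialised to zero, plus a constant number of scratch coordinates.

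The positional encodings carry all the indexing information, since a logspace machine $\tm_2$ can compute them from $(1^\strlen,\bin(j))$. At position $j$ we place the key vector $\vk_j$ of \cref{lem:our-pos-enc-properties}, namely $\interleaveFun{\sbinFun{j}}{(-\one)}$, which uses $2\ceil{\log\strlen}$ dimensions and so fits the log-width budget. We also write a one-hot indicator of the loop index at which position $j$ should be read, for example a unit vector $\ve_{j-1}$ in a further $\ceil{\log\strlen}$-dimensional block.

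We realise one bit per loop. This needs $\ceil{\log\strlen}=\bigThetaFun{\log\strlen}$ iterations, which matches the $\LPTClassFun{1}{\cdot}{\cdot}$ loop budget. A constant-width loop counter in constant precision cannot count to $\log\strlen$, so we keep the loop state instead as a one-hot ``pointer'' vector $\vx^{(t)}\in\set{0,1}^{\ceil{\log\strlen}}$ in the residual stream. A fixed MLP shifts this pointer by one coordinate per loop; the shift is a permutation matrix whose entries lie in $\set{0,1}$, so it is exact in constant precision.

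In loop $t$ the final position must attend to position $t+1$. Its query must therefore encode $\sbinFun{t+1}$, scaled by $\sfrac{\maxFNum}{m}$ and interleaved with $\one$. This query is a linear function of the one-hot pointer, obtained by multiplying by the matrix whose columns are $\sbinFun{j}$. That matrix has $\pm1$ entries, is produced by $\tm_1$ in logspace, and is exact in fixed point because only one summand is nonzero. \cref{lem:focus-on-positions-pes} then puts attention weight exactly one on position $t+1$. The value vector at that position is its bit $\sym_{t+1}$ multiplied by the indicator $\ve_{t}$ from the PE (computed by a fixed MLP as an AND of a bit and a $0/1$ vector). Adding this value through the residual connection writes bit $t$ into coordinate $t$ of the target block and leaves the other coordinates untouched. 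Attention from non-final positions does no harm: we gate their writes off using a PE flag marking the last position, or route them into a scratch block.

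The main obstacle is keeping every arithmetic step exact in constant precision despite the $\ceil{\log\strlen}$-term sums in the query--key inner product and the value projection. We will rely on \cref{lem:our-pos-enc-properties}, which already handles the iterative rounding and yields the score $0$ versus scores $\leq -\sfrac{2\maxFNum}{m}$. We will also choose the value and query projections so that at most one nonzero term appears in each sum, making the rounding harmless. A second check is that layer normalisation is applied only to designated sub-blocks, not to the bit block, which the footnote on partial layer norm permits.

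Finally, the parameters are $\set{0,\pm1,\sfrac{\maxFNum}{m}}$ entries with a regular structure, so a logspace $\tm_1$ can output them. The PEs are likewise logspace-computable, so the family is $\LUniform$.
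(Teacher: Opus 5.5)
Your construction is correct for the statement as posed (an isolated $\texttt{\&}\,\binEnc{\stridx}$), but it takes a different route from the paper.

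\textbf{The paper's route.} It uses no central reader and no absolute addressing. Each position attends only to its immediate predecessor through binary PEs. In the first loop, the bit immediately after \texttt{\&} writes a marker $\ve_1$ and a partial code $\vd_1 = \sym\,\ve_1$. In each later loop, a position whose predecessor already carries a marker $\ve_{t-1}$ copies it, shifts it to $\ve_t$, and stores $\vd_t = \vd_{t-1} + \sym\,\ve_t$. The partial encoding thus travels rightward one position per loop and is complete at position $\ceil{\log\strlen}+1$ after $\ceil{\log\strlen}$ loops. The propagating marker itself plays the role of your one-hot loop pointer.

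\textbf{Your route.} You centralize the work at the last position. In loop $t$ it pulls bit $t+1$ from an absolute position, using a binary query that is linear in a shifted one-hot pointer, together with a PE-supplied indicator $\ve_{j-1}$ (formed in a pre-loop layer). This buys a more explicit exactness argument in constant precision: every fixed-point sum has a single nonzero summand, and the shift is a permutation.

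\textbf{What the paper's route buys.} Its local propagation is translation-invariant and is triggered only by the local pattern ``preceded by \texttt{\&}''. The same looped layer therefore converts every pointer in a long string simultaneously, wherever those pointers lie. This is how the lemma is actually used in \cref{lem:ACd-completeness}, where the \texttt{\&}-pointers of the serialized circuit sit at circuit-dependent, hence input-dependent, offsets. Your fixed query matrix (columns $\sbinFun{j}$) and your PE indicators hard-code where the pointer starts, so they prove exactly the isolated-input claim. They would not carry over directly to that setting. Producing $\sbinFun{p+t}$ for an unknown start $p$ is not a linear function of the one-hot pointer. Alternatively, having every position's PE carry all $\ceil{\log\strlen}$ candidate relative queries would cost $\Theta(\log^2\strlen)$ width, exceeding the log-width budget.
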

\begin{proof}[Proof sketch]
    The transformer $\tf_\strlen$ has to convert the binary representation $\binEnc{\stridx}$ of $\stridx$ contained across $\ceil{\log \strlen}$ positions in the input string into a single $\ceil{\log \strlen}$-dimensional binary vector in the residual stream.
    The construction uses a fixed single-layer block that is looped $\ceil{\log\strlen}$ times---i.e., constant depth and $\bigOFun{\log\strlen}$ loop iterations of a single block, as required by $\LPTClassFun{1}{\constantFn}{\logFn}$.
    We index loop iterations by a timestep $t \in \set{1, \ldots, \ceil{\log\strlen}}$.
    \begin{enumerate}[label=\arabic*.,noitemsep]
        \item At timestep $t = 1$ (the first application of the looped block), each symbol $\sym_{\stridx'} \in \set{0, 1}$ checks if it is immediately preceded by the $\texttt{\&}$ symbol, which denotes the beginning of the pointer in the string.
        If it is, $\sym_{\stridx'}$ stores $\ve_1$ and $\vd_1 \defeq \sym_{\stridx'} \ve_1$ in designated parts of its residual stream.
        Here, $\ve_1$ is the first unit vector of $\R^{\ceil{\log \strlen}}$.
        \item At each subsequent timestep $t \in \set{2, \ldots, \ceil{\log\strlen}}$ (i.e., the $t$-th application of the same single-layer block), each symbol $\sym_{\stridx'}$ checks if the entry $\ve_{t - 1}$ has already been written to the designated space of the previous symbol's residual stream.
        If it has, $\sym_{\stridx'}$ copies and shifts $\ve_{t - 1}$ into $\ve_{t}$, and stores $\ve_{t}$ and $\vd_{t} \defeq \vd_{t - 1} + \sym_{\stridx'} \ve_{t}$ in designated parts of its residual stream.
    \end{enumerate}
    After $\ceil{\log \strlen}$ timesteps (i.e., $\ceil{\log\strlen}$ applications of the looped block), the residual stream at position $\ceil{\log \strlen} + 1$ thus contains $\binEnc{\stridx}$.
    
    This construction only requires PEs that contain $\binEnc{\stridx}$ and $\binEnc{\strlen - 1}$, which are logarithmically computable.
    Moreover, the parameters of the transformer $\tf_\strlen$ only change with $\strlen$ in terms of the size of the matrices, while their structure remains the same---they either project onto specific coordinates (whose indices can be computed with counters in a logspace Turing machine) or shift the coordinates of vectors, which can also be done in logspace.
    Thus, the family $\set{\tf_\strlen}_{\strlen \in \N}$ is in $\LUniform~\LPTClassFun{1}{\constantFn}{\logFn}$.
\end{proof}

\subsection{Layer Normalization} \label{app:layer-norm}

For compactness, our constructions ignore layer normalization (apart from those extending \citeposs{merrill2025exactexpressivepowertransformers} constructions, which rely on layer normalization for implementing the layer hash norm).
However, it is not difficult to account for it.
For upper bounds, it suffices to see that, in the case of \emph{constant precision} $\numPrec = \bigThetaFun{1}$, any position-wise operation (such as layer normalization) can be implemented by a finite lookup table, which can be hardcoded into the MLPs of the transformer.
In the case of growing precision, all layer normalization operations (addition, division, square root, etc.) can be implemented in $\LUniform~\TCZero$ \citep{chiang2025transformers}, ensuring that the same upper bounds still hold.
For lower bounds, we can use the same construction as \citealp[][\S F.1]{li2024chain}, which minimally changes the constructed transformers to ensure that the layer normalization operation does not affect the computations.

Whenever we do refer to layer normalization, we use the standard numerically-stable variant of layer normalization
\begin{equation} \label{eq:lipschitz-layernorm}
    \layerNormFun{\vx} \defeq \frac{\vx - \mu(\vx)}{\sqrt{\sigma^2(\vx) + \eps}},
\end{equation}
where $\mu(\vx)$ and $\sigma^2(\vx)$ denote the empirical mean and variance of the coordinates of $\vx$, and $\eps \geq 2^{-\numPrec}$ is a fixed stabilizer.

\section{Proofs} \label{app:proofs}

\subsection{Proofs of the Results on the Relationship between \AHATsAcr and \SMATsAcr} \label{app:smat-ahat-proofs}

\SMATsApproximateAHATsLemma*
\begin{proof}
    The construction is to keep all parameters of $\tf_\strlen$ fixed and replace every average-hard attention layer by a temperature-scaled softmax attention layer with a suitably small temperature $\tempParamFun{\strlen}$.
    We first establish, in $\R$-valued arithmetic, a per-layer error bound between $\tf_\strlen$ and its softmax counterpart in terms of the attention gap and the temperature.
    We then choose the temperature small enough that the per-coordinate error is below the fixed-point rounding margin, so that the rounding step in \cref{app:fixed-point-arithmetic} collapses the two models onto identical outputs.
    For brevity, write $\numPrec \defeq \numPrecFun{\strlen} = \bigThetaFun{\log\strlen}$, $\F \defeq \F_\numPrec$, and let $\numlayers$ denote the number of layers of $\tf_\strlen$.
    
    \paragraph{Step 1: Gap and activation bounds.}
    Following \citet[Def.~5]{yang2025simulatinghardattentionusing}, for an attention layer with scores $s_{\stridx, 1}, \ldots, s_{\stridx, \strlen} \in \F$ at position $\stridx$ with maximum $s_{\stridx, \max} \defeq \max_{\stridx'} s_{\stridx, \stridx'}$, the layer has \defn{gap} $\attnGapFun{\strlen}$ at position $\stridx$ if $s_{\stridx, \stridx'} \leq s_{\stridx, \max} - \attnGapFun{\strlen}$ for every $\stridx'$ with $s_{\stridx, \stridx'} < s_{\stridx, \max}$.
    The gap of the layer is the largest such $\attnGapFun{\strlen}$ that works for all $\stridx$ and all length-$\strlen$ inputs, and the gap of the transformer is the minimum over its $\numlayers$ layers.
    Because every score is an element of $\F$ and any two distinct elements of $\F$ differ by at least $2^{-\numPrec}$, we have
    \begin{equation} \label{eq:smats-approximate-ahats-gap}
        \attnGapFun{\strlen} \;\geq\; 2^{-\numPrec} \;=\; \bigOmegaFun{\strlen^{-c}}
    \end{equation}
    for some constant $c$ depending only on the constant hidden in $\numPrec = \bigThetaFun{\log \strlen}$.
    Similarly, every activation $\hiddState^{(\layeridx)}_\stridx \in \F^\hiddDim$ has every coordinate in $[-\maxFNum, \maxFNum]$ with $\maxFNum = 2^\numPrec - 2^{-\numPrec}$, so the analogue of \citeposs{yang2025simulatinghardattentionusing} $x_{\max}(\strlen)$ (maximum value in any entry of the residual stream of length-$\strlen$ strings) satisfies
    \begin{equation} \label{eq:smats-approximate-ahats-xmax}
        x_{\max}(\strlen) \;\leq\; \maxFNum \;=\; \bigOFun{\strlen^{c}}.
    \end{equation}
    The same bound applies to the parameter matrices $\mW_Q^{(\layeridx)}, \mW_K^{(\layeridx)}, \mW_V^{(\layeridx)}$ and the MLP parameters, since they too are in $\F$.
    
    Let $\tf'_\strlen$ be the candidate softmax model: It has the same parameters as $\tf_\strlen$ except that every attention layer uses $\softmaxT$ with temperature $\tempParamFun{\strlen}$ in place of $\hardmaxAvg$.
    Both models operate under the same fixed-point semantics from \cref{app:fixed-point-arithmetic}, so every layer's output is rounded coordinate-wise to $\F^\hiddDim$.
    
    \paragraph{Step 2: Per-layer error in $\R$-valued arithmetic.}
    Fix any layer $\layeridx \in \set{1, \ldots, \numlayers}$ and any input $\hiddMtx^{(\layeridx - 1)} \in \F^{\strlen \times \hiddDim}$ shared by both models.
    Let $\hiddState^{(\layeridx)}_\stridx$ and $\widetilde{\hiddState}^{(\layeridx)}_\stridx$ denote the layer-$\layeridx$ outputs of $\tf_\strlen$ and $\tf'_\strlen$ respectively when both are computed in idealized $\R$-valued arithmetic on this shared input (i.e., without the rounding step).
    By \citeposs[Lem.~25]{yang2025simulatinghardattentionusing} per-layer bound (specialized to zero input error, since the two models share their layer input), there is a constant $K_1 \geq 1$ depending polynomially on $\hiddDim$ and the maximum parameter magnitude such that, for every position $\stridx$,
    \begin{equation} \label{eq:smats-approximate-ahats-per-layer}
        \norm{\hiddState^{(\layeridx)}_\stridx - \widetilde{\hiddState}^{(\layeridx)}_\stridx}_1
        \;\leq\; K_1 \, x_{\max}(\strlen) \, \strlen \, e^{-\attnGapFun{\strlen}/\tempParamFun{\strlen}}.
    \end{equation}
    
    \paragraph{Step 3: Choosing the temperature so a single layer's error rounds away.}
    We want the implemented attention sub-layer's output to lie strictly below half the spacing $2^{-\numPrec}$ between adjacent elements of $\F$ from $\tf_\strlen$'s hardmax-attention output; this guarantees that every coordinate of $\widetilde{\hiddState}^{(\layeridx)}_\stridx$ lies strictly closer to the corresponding coordinate of $\hiddState^{(\layeridx)}_\stridx \in \F$ than to any other element of $\F$, and therefore rounds to that same element.
    Let $K_2 \geq 1$ be a constant bound on the per-layer Lipschitz constant of the remaining sub-layers (value projection, residual sum, MLP, Lipschitz layer normalization with stabilizer $\eps$; cf.~\cref{app:layer-norm}) so that they amplify $\R$-arithmetic errors by at most a factor of $K_2$.
    Solving $K_1 K_2 \, x_{\max}(\strlen) \, \strlen \, e^{-\attnGapFun{\strlen}/\tempParamFun{\strlen}} < 2^{-(\numPrec + 2)}$ for $\tempParamFun{\strlen}$---tightened to $2^{-(\numPrec + 2)}$ so that the Step~4 mixed-precision rounding (which contributes at most $2^{-(\numPrec + 2)}$ per coordinate; cf.~Step~4(ii)) and the Step~5 sub-layer amplification still leave a margin strictly below $2^{-(\numPrec + 1)}$---yields the condition
    \begin{equation} \label{eq:smats-approximate-ahats-tau-condition}
        \frac{1}{\tempParamFun{\strlen}} \;>\; \frac{1}{\attnGapFun{\strlen}} \log\mleft( 2^{\numPrec + 2} \, K_1 K_2 \, x_{\max}(\strlen) \, \strlen \mright).
    \end{equation}
    Plugging in $\numPrec = \bigThetaFun{\log\strlen}$, $\attnGapFun{\strlen} = \bigOmegaFun{\strlen^{-c}}$ from \cref{eq:smats-approximate-ahats-gap}, and $x_{\max}(\strlen) = \bigOFun{\strlen^c}$ from \cref{eq:smats-approximate-ahats-xmax} gives
    \begin{equation} \label{eq:smats-approximate-ahats-temp-bound}
        \frac{1}{\tempParamFun{\strlen}} \;\in\; \bigOFun{\strlen^{c} \log \strlen} \;\subseteq\; \bigOFun{\strlen^{c + 1}},
    \end{equation}
    so $1 / \tempParamFun{\strlen}$ is a polynomial in $\strlen$ and is logspace-computable.
    We fix $\tempParamFun{\strlen} \in \F$ to be any value satisfying \cref{eq:smats-approximate-ahats-tau-condition}.

    \paragraph{Step 4: $\tf'_\strlen$ as a simulator of $\tf_\strlen$.}
    We define $\tf'_\strlen$ to have the same parameters as $\tf_\strlen$ (same $\mW_Q, \mW_K, \mW_V$, same MLP, same layer normalization, same PEs), at the same residual-stream precision $\numPrec$, with each attention layer using $\softmaxT$ at the temperature $\tempParamFun{\strlen}$ from Step~3.
    Since $\tempParamFun{\strlen}$ from Step~3 satisfies $1/\tempParamFun{\strlen} \in \bigOFun{\strlen^{c+1}}$, choosing it to be a negative power of two---permitted since \cref{eq:smats-approximate-ahats-tau-condition} only constrains $1/\tempParamFun{\strlen}$ from below---makes $\tempParamFun{\strlen}$ logspace-computable from $1^\strlen$.
    
    However, the unnormalized attention scores $s_{\stridx, \stridx'} = \vq_\stridx^\top \vk_{\stridx'} \in \F_\numPrec$ are scaled by $1/\tempParamFun{\strlen}$ before being passed through the softmax, meaning that the scaled scores $s_{\stridx, \stridx'} / \tempParamFun{\strlen}$ can be polynomially larger in magnitude than $\maxFNum$, and thus not representable in $\F_\numPrec$.
    We handle this with mixed-precision (cf. \cref{app:transformers})---narrowed to a single component: The attention sub-layer computes the scaled scores and their softmax at a higher precision $\numPrec' \defeq \kappa \cdot \numPrec$ for a constant $\kappa > 1$, after which the resulting attention weights and the value-weighted sum are clamped back to $\F_\numPrec$ when written to the residual stream.
    All other components of the transformer remain at the original precision $\numPrec$: The mixed-precision is used only for the internal score-and-softmax computation.
    We pick $\kappa$ large enough that
    \begin{enumerate}[label=(\roman*),noitemsep,topsep=0pt]
        \item the scaled scores $s_{\stridx, \stridx'} / \tempParamFun{\strlen}$ are exactly representable in $\F_{\numPrec'}$, and
        \item the accumulated rounding error in computing the softmax and the value-weighted sum at precision $\numPrec'$ is at most $2^{-(\numPrec + 2)}$ per output coordinate.
    \end{enumerate}
    Using $1/\tempParamFun{\strlen} \in \bigOFun{\strlen^{c+1}}$, $\maxFNum \in \bigOFun{\strlen^c}$, and $\hiddDim \leq \polyFun{\strlen}$, any constant $\kappa$ with $\kappa \numPrec \geq \numPrec + \log_2(\strlen \hiddDim / \tempParamFun{\strlen}) + \bigThetaFun{1}$ suffices for both (i) and (ii); such a $\kappa$ depends only on the constant hidden in $\numPrec = \bigThetaFun{\log\strlen}$.
    
    \paragraph{Step 5: Layer-by-layer agreement in $\F_\numPrec$ and $\LUniformity$ of $\tf'_\strlen$.}
    Bound (ii) above plays the same role as Step~2's per-layer error bound, but at the level of the implemented attention sub-layer rather than its idealized $\R$-arithmetic counterpart: At every layer $\layeridx$, the clamped output of $\tf'_\strlen$'s attention sub-layer matches $\tf_\strlen$'s hardmax-attention output to within $2^{-(\numPrec + 2)}$ in $\R$.
    The remaining components amplify this by at most a factor of $K_2$, which is absorbed by the $K_2$ factor already in \cref{eq:smats-approximate-ahats-tau-condition} (Step~3); the resulting total per-layer error remains strictly below the rounding threshold $2^{-(\numPrec + 1)}$.
    Standard induction on $\layeridx$ then yields
    \begin{equation} \label{eq:smats-approximate-ahats-rounded-agreement}
        \hiddMtx'^{(\layeridx)} \;=\; \hiddMtx^{(\layeridx)} \qquad \text{in } \F_\numPrec^{\strlen \times \hiddDim},
    \end{equation}
    for every $\layeridx$: The base case $\layeridx = 0$ holds because both models share the same embedding and PEs, and the inductive step uses the per-layer bound above with the shared input $\hiddMtx^{(\layeridx-1)}$ given by IH to conclude that the $\F_\numPrec$ rounding of $\tf'_\strlen$'s layer-$\layeridx$ output collapses onto $\tf_\strlen$'s layer-$\layeridx$ output.
    In particular, the final outputs of $\tf'_\strlen$ and $\tf_\strlen$ coincide in $\F_\numPrec^{\strlen \times \hiddDim}$.

    For $\LUniformity$ of $\tf'_\strlen$: By construction, $\tf'_\strlen$ has the same parameters as $\tf_\strlen$ except for the additional temperature $\tempParamFun{\strlen}$, which is logspace-computable from $1^\strlen$ (a negative power of two with $\log_2(1/\tempParamFun{\strlen}) = \bigOFun{\log\strlen}$ bits).
    The construction machine $\tm_1$ of \cref{def:uniform-transformers} is therefore unchanged for the weight matrices and the MLP, and emits one additional logspace-computable quantity (the temperature); the PE machine $\tm_2$ is unchanged.
    Hence $\tf'_\strlen$ is an $\LUniform$ logarithmic-precision \tSMATAcr family whose outputs coincide with $\tf_\strlen$'s on every input by \cref{eq:smats-approximate-ahats-rounded-agreement}, as the lemma claims.
\end{proof}

\subsection{Proofs of the Constant-depth Results} \label{app:plt-tc0}

\ConstantWidthLB*
\begin{proof}
    The proof follows \citet[][Thm. C.7]{london2025pausetokensstrictlyincrease}, which constructs an $\LUniform~\LPTClassFun{0}{\logFn}{\logFn}$ \SMATAcr family simulating a given $\LUniform~\TCZero$ circuit family $\set{\circuit_\strlen}_{\strlen \in \N}$.
    Our construction reuses theirs verbatim except for the PE: Wherever the original construction stores or queries a binary-valued (i.e., constant-precision) pointer of width $\ceil{\log\strlen}$ (containing $\binEnc{\stridx}$), we substitute the two-dimensional unit-length PE $\unitLenPE$ of \cref{eq:unit-len-pe}.
    We describe the substitution concretely below.
    
    \paragraph{Symbol types and original residual-stream layout.}
    Following \citet[][\S C.1]{london2025pausetokensstrictlyincrease}, the input to the transformer is a residual stream with entries of three types: Input symbols $\inputTok{\stridx}$ (entered from the input string), argument symbols $\argTok{\stridx_g}{\stridx_a}$ (encoding that gate $\stridx_g$ takes the value at position $\stridx_a$ as one of its arguments; entered from the PEs), and gate symbols $\gateTok{\stridx_g}$ (encoding the type and threshold of gate $\stridx_g$; also entered from the PEs).
    Here, $\stridx, \stridx_g, \stridx_a \in \NTo{\strlen}$ are positions in the padded input sequence: $\stridx_g$ is the position of the gate symbol itself, and $\stridx_a$ is the position of the source symbol (an $\inputTok{\cdot}$ or a previously-placed $\gateTok{\cdot}$) feeding into it.
    In their construction, each symbol's PE has the form
    \begin{equation} \label{eq:london-pe}
        \phi_{\textrm{bin}}(\cdot) = (c_1, c_2, c_3, \, \vk_{\tgtIdx_1}, \vq_{\tgtIdx_1}, \, \vk_{\tgtIdx_2}, \vq_{\tgtIdx_2}),
    \end{equation}
    where $c_1, c_2, c_3 \in \set{0, 1}$ are constant-size flags, and each $(\vk_\bullet, \vq_\bullet)$ pair is the signed-binary key--query construction of \cref{lem:our-pos-enc-properties}, i.e., $\vk_{\stridx_\bullet} = \interleaveFun{\sbinFun{\stridx_\bullet}}{\one_{\ceil{\log\strlen}}}$ and $\vq_{\stridx_\bullet} = \maxFNum \cdot \interleaveFun{\sbinFun{\stridx_\bullet}}{(-\one_{\ceil{\log\strlen}})}$.
    The two stored positions $\tgtIdx_1, \tgtIdx_2 \in \NTo{\strlen}$ are the two attention targets a symbol must address---one per attention layer of the two-layer-per-circuit-layer construction, each an instance of the target $\tgtIdx$ from \cref{lem:unit-len-pe}.
    Specifically, in the first layer, an $\argTok{\stridx_g}{\stridx_a}$ symbol \emph{reads} the value at the source position $\tgtIdx_1 = \stridx_a$ into its own slot, and in the second layer, a $\gateTok{\stridx_g}$ symbol \emph{collects} and \emph{computes} the value of the gate at its own index $\tgtIdx_2 = \stridx_g$.
    Each binary key/query block occupies $2 \ceil{\log\strlen}$ coordinates, so the residual stream has width $\bigThetaFun{\log\strlen}$.
    
    \paragraph{Constant-width substitution.}
    We replace \cref{eq:london-pe} by
    \begin{equation} \label{eq:our-pe}
        \phi_{\textrm{unit}}(\cdot) = (c_1, c_2, c_3, \, \unitLenPEFun{\tgtIdx_1}, \, \unitLenPEFun{\tgtIdx_2}),
    \end{equation}
    so each of the two log-width $(\vk_\bullet, \vq_\bullet)$ blocks in \cref{eq:london-pe} collapses to a single two-dimensional unit-length PE.
    The total PE width is $3 + 2 + 2 = 7$, i.e., constant in $\strlen$.
    The machine $\tm_1$ from \Cref{def:uniform-transformers} that generates the transformer's parameters is identical to London et al.'s.
    The machine $\tm_2$ that computes the PEs, in contrast, writes $\unitLenPEFun{\stridx_\bullet}$ whenever their construction would write a $(\vk_{\stridx_\bullet}, \vq_{\stridx_\bullet})$ block into the PE; this PE is logspace-computable from $\stridx_\bullet$ and $\strlen$ by \cref{lem:unit-len-pe}.
    
    \paragraph{Attention with unit-length PEs.}
    The substitution preserves the attention pattern of \citet{london2025pausetokensstrictlyincrease}.
    Each attention layer must focus exclusively on a single target, namely $\tgtIdx_1$ in the first layer and $\tgtIdx_2$ in the second.
    \Cref{lem:lp-pos-enc-properties} provides the query--key construction: Scaling yields a query $\vq_\stridx \defeq \sfrac{\maxFNum}{m} \cdot \unitLenPEFun{\tgtIdx}$ and key $\vk_{\stridx'} \defeq \unitLenPEFun{\stridx'}$ such that the inner-product gap between $\stridx' = \tgtIdx$ and $\stridx' \neq \tgtIdx$ exceeds the saturation threshold required by \cref{lem:focus-on-positions-pes}; we instantiate this with $\tgtIdx = \tgtIdx_1$ in the first layer and $\tgtIdx = \tgtIdx_2$ in the second.
    Both targets are already coordinates of the symbol's PE (cf. \Cref{eq:our-pe}), so the query and key matrices are simple projections---with the $\sfrac{\maxFNum}{m}$ scaling absorbed into them, as in the proof of \cref{lem:smats-approximate-ahats}---onto those two coordinates.
    Concretely, in the first attention layer the query of each symbol projects out $\unitLenPEFun{\tgtIdx_1}$ from its PE while every symbol's key projects out $\unitLenPEFun{\tgtIdx_2}$; since each non-pause symbol's $\tgtIdx_2$ equals its own position in \citeposs{london2025pausetokensstrictlyincrease} encoding, an argument symbol with $\tgtIdx_1 = \stridx_a$ attends exactly to the source at position $\stridx_a$.
    The second layer swaps the roles---query from $\unitLenPEFun{\tgtIdx_2}$, key from $\unitLenPEFun{\tgtIdx_1}$---so that every argument symbol with $\tgtIdx_1 = \stridx_g$ is attended to by the gate at position $\stridx_g$, matching London et al.'s second-layer pattern.
    By \cref{lem:focus-on-positions-pes}, the resulting post-softmax weights are exactly $1$ on the target position and $0$ elsewhere in $\F_\numPrec$ arithmetic.
    The value matrix, MLP, and the rest of the residual-stream bookkeeping are inherited verbatim from \citet[][Thm. C.7]{london2025pausetokensstrictlyincrease}, since they act position-wise on the constant-size flags $c_1, c_2, c_3$ and the (already constant-size) gate-value coordinate, none of which touch the PE block.
    
    \paragraph{Residual-stream contents remain constant-width.}
    Beyond the PEs in \Cref{eq:our-pe}, the residual stream stores only a constant number of additional coordinates: The computed gate value (one coordinate with value in $\F_\numPrec$) and the temporary scratch coordinates used by the MLP to recombine flags into thresholds---all inherited verbatim from \citet{london2025pausetokensstrictlyincrease} and already constant-size there.
    What changes is only the \emph{PE block}: From $\bigThetaFun{\log\strlen}$ coordinates of binary key/query pairs to $4$ coordinates of unit-length PEs.
    The overall width is therefore $\bigOFun{1}$.
    
    Finally, we note that \citeposs{london2025pausetokensstrictlyincrease} construction effectively constructs an \AHATAcr (an \SMATAcr in which the non-max values are small enough to map to 0 via the softmax; cf. \cref{lem:focus-on-positions-pes}), meaning that this constructions show the same relationship for \AHATsAcr as well.
\end{proof}

\PolyWidthUB*
\begin{proof}    
    The proof follows \citet[][Thms. C.5 and C.11]{london2025pausetokensstrictlyincrease}, who handle $\LPTClassFun{0}{\constantFn}{\logFn}$ and $\LPTClassFun{0}{\logFn}{\logFn}$, respectively, by decomposing each transformer layer into components that are individually in $\LUniform~\ACZero$ (constant-precision case) or $\LUniform~\TCZero$ (log-precision case) and composing a constant number of them.
    We follow the same decomposition but track how each component scales when both width and precision are polynomial.
    Fix a $\LUniform$ \SMATAcr family $\set{\tf_\strlen}_{\strlen \in \N}$ of constant depth and let $\numPrecFun{\strlen}, \hiddDimFun{\strlen} \in \polyFun{\strlen}$ denote its precision and width, respectively.
    
    \paragraph{Positional encodings and per-coordinate parameters.}
    \Cref{def:uniform-transformers} guarantees that PE coordinates and parameter-matrix entries are produced by a logspace TM, given the input length $\strlen$, the relevant position $\stridx \in \NTo{\strlen}$, and the coordinate indices.
    Concatenating all $\strlen \cdot \hiddDimFun{\strlen} \cdot \numPrecFun{\strlen} = \polyFun{\strlen}$ output bits gives a logspace-computable function of $1^\strlen$, so \cref{lem:logspace-to-ac0} yields an $\LUniform~\ACZero$ sub-circuit of polynomial size and depth $3$ that produces every PE and parameter bit consumed by the attention and feedforward components, just as in \citeposs{london2025pausetokensstrictlyincrease} proofs of Thms.~C.5 and C.11 use of the same lemma.
    
    \paragraph{Attention layer.}
    Per head and per query position $\stridx$, the layer computes attention scores $s_{\stridx, \stridx'} \defeq \innerProd{\vq_\stridx}{\vk_{\stridx'}}_{\numPrecFun{\strlen}}$ for $\stridx' \in \NTo{\strlen}$, normalizes them via softmax, and returns the weighted value sum.
    Each $s_{\stridx, \stridx'}$ is an inner product of two $\hiddDimFun{\strlen}$-dimensional vectors with $\numPrecFun{\strlen}$-bit entries---i.e., a sum of $\polyFun{\strlen}$ products of $\polyFun{\strlen}$-bit numbers; multiplication of two $\polyFun{\strlen}$-bit numbers is in $\LUniform~\TCZero$ \citep{HESSE2002695}, and iterated addition of $\polyFun{\strlen}$ such products is in $\LUniform~\TCZero$ by \citet[][Thm.~2.1]{HESSE2002695}.
    Evaluating $\exp$ on a $\polyFun{\strlen}$-bit fixed-point input to $\polyFun{\strlen}$-bit precision is also in $\LUniform~\TCZero$: It is computed by a truncated Taylor series whose terms reduce to iterated multiplication and iterated addition, both of which are in $\LUniform~\TCZero$ \citep{chiang2025transformers}.
    Hence the unnormalized weights $\exp(s_{\stridx, \stridx'})$ and their iterated sum (the normalizer) are in $\LUniform~\TCZero$ as well.
    Dividing each $\exp(s_{\stridx, \stridx'})$ by the normalizer and computing the new value involves $\polyFun{\strlen}$ divisions and one more iterated sum per output coordinate, both in $\LUniform~\TCZero$.
    Composing a constant number of $\LUniform~\TCZero$ subroutines keeps the layer in $\LUniform~\TCZero$.
    In the constant-precision case the same decomposition collapses to $\LUniform~\ACZero$: Each product becomes a finite lookup table (\citealp[][Lem.~C.4]{london2025pausetokensstrictlyincrease}), exponentiation reduces to a lookup table as well (\citealp[][Lem.~D.4]{london2025pausetokensstrictlyincrease}), and iterated addition of $\polyFun{\strlen}$ constant-precision numbers is in $\LUniform~\ACZero$ by \citet[][Thm.~C.3]{london2025pausetokensstrictlyincrease}.
    
    \paragraph{Feedforward layer and residual connection.}
    A feedforward layer applies a position-wise affine map followed by a $\ReLU$ nonlinearity.
    Each output coordinate is a sum of $\hiddDimFun{\strlen}$ products of $\numPrecFun{\strlen}$-bit numbers plus a $\ReLU$ thresholding---both in $\LUniform~\TCZero$ at polynomial precision and in $\LUniform~\ACZero$ at constant precision by the same reasoning as above.
    The residual connection adds two $\numPrecFun{\strlen}$-bit numbers position-wise, which is in $\LUniform~\ACZero$ even at polynomial precision \citep{HESSE2002695}.
    
    \paragraph{Composition.}
    A constant-depth transformer applies a constant number of attention and feedforward layers (each in $\LUniform~\TCZero$ for polynomial precision, $\LUniform~\ACZero$ for constant precision), so the whole computation lies in the same class.
    This yields $\LUniform~\LPTClassFun{0}{\constantFn}{\polyFn} \subseteq \LUniform~\ACZero$ and $\LUniform~\LPTClassFun{0}{\polyFn}{\polyFn} \subseteq \LUniform~\TCZero$.
    
    \paragraph{Extension to \AHATsAcr.}
    For \AHATsAcr, the softmax-and-value-sum step is replaced by argmax-then-average: For each query position $\stridx$, identify the set of positions in $\argmax_{\stridx'} s_{\stridx, \stridx'}$ and return the average of their values.
    Computing the maximum of $\strlen$ polynomial-precision integers and counting the argmax positions are both in $\FOUniform~\ACZero \subseteq \LUniform~\TCZero$ \citep[][Thm.~2]{chiang2025transformers}, and the resulting average is again iterated addition followed by one division---in $\LUniform~\TCZero$ at polynomial precision and in $\LUniform~\ACZero$ at constant precision.
    Hence both inclusions extend to \AHATsAcr.
\end{proof}

\subsection{Proofs of Polylogarithmic-depth Results} \label{app:plt-ac}

\CircuitLoopingLemma*
\begin{proof}
    We follow the strategy of \citet[][Lem. 9]{merrill2025exactexpressivepowertransformers}, who cover $\FOUniform$ input families.
    In contrast, we assume an $\LUniform$ circuit family for $f$---so we first construct an $\LUniform$ iterated circuit family for $f^{r(\strlen)}$ by a logspace stitching argument, and then invoke uniformity collapse to strengthen to $\FOUniform$.
    
    \paragraph{Step 1: an $\LUniform$ iterated family.}
    Fix the input length $\strlen$.
    Let $\mathcal C^f = \set{ C^f_\strlen }_{\strlen=0}^\infty$ be the assumed polynomial-size $\LUniform$ circuit family for $f$, and let $M_f$ be a logspace Turing machine that, given $1^\strlen$ and a gate or edge query, decides membership in the connection language (cf. \Cref{app:circuit-complexity}) of $C^f_\strlen$.
    Since $C^f_\strlen$ has polynomially many gates, we may pad $C^f_\strlen$ with dummy gates (handled by a fixed default response from $M_f$) so that it has exactly $\sigma \defeq 2^{s(\strlen)}$ gates for some integer $s(\strlen) = \bigOFun{\log \strlen}$; this preserves $\LCls$-uniformity and at most doubles the circuit size.
    We further assume, without loss of generality, that the $m(\strlen)$ input gates of $C^f_\strlen$ occupy indices $0, \ldots, m(\strlen) - 1$ and the $m(\strlen)$ output gates occupy indices $\sigma - m(\strlen), \ldots, \sigma - 1$; if they do not, we prepend a layer of $m(\strlen)$ identity gates at indices $0, \ldots, m(\strlen) - 1$ wired to the original input gates and append a layer of $m(\strlen)$ identity gates at indices $\sigma - m(\strlen), \ldots, \sigma - 1$ fed by the original output gates.
    This adds $2$ to the depth and $2m(\strlen)$ to the size, and preserves $\LCls$-uniformity since the rewiring can be carried out by $M_f$ in logspace.
    
    We construct the circuit $C_\strlen$ as $r(\strlen)$ copies of $C^f_\strlen$ stacked sequentially, with the output gates of each copy wired to the input positions of the next.
    Indices in $C_\strlen$ range over $\set{0, 1, \ldots, r(\strlen) \cdot \sigma - 1}$; we decompose any such index $i$ uniquely as $i = q_i \cdot \sigma + i'$ with the block index $q_i \in \set{0, \ldots, r(\strlen) - 1}$ and the within-block index $i' \in \set{0, \ldots, \sigma - 1}$.
    Both components are computable from the $\bigOFun{\log \strlen}$-bit representation of $i$ in logspace.
    
    The connection language of $C_\strlen$ is then decided by a logspace machine $M$ that, on input $1^\strlen$ and a query about $C_\strlen$, proceeds as follows:
    \begin{enumerate}[label=\textit{(\alph*)},noitemsep,topsep=0pt]
        \item On a \defn{gate query} at index $i$, $M$ rejects if $i \geq r(\strlen) \cdot \sigma$; otherwise, it computes $(q_i, i')$ and queries $M_f$ for the gate type of index $i'$ in $C^f_\strlen$.
        If $M_f$ reports an input gate and $q_i \geq 1$, $M$ overrides the response to an identity gate---so that block $q_i$'s ``input'' positions become internal pass-through gates fed by block $q_i - 1$---and otherwise it returns $M_f$'s response unchanged.
        \item On an edge query for the pair $(i, j)$, $M$ computes $(q_i, i')$ and $(q_j, j')$ and proceeds as follows.
        If $q_i = q_j$, $M$ forwards the edge query $(i', j')$ to $M_f$, reproducing the intra-block edges of $C^f_\strlen$ inside every block.
        If $q_j = q_i + 1$, $i' \geq \sigma - m(\strlen)$, $j' < m(\strlen)$, and $i' - j' = \sigma - m(\strlen)$---equivalently, the $k$-th output gate of block $q_i$ is paired with the $k$-th input position of block $q_i + 1$ for the same $k$---$M$ returns $1$, wiring exactly the intended $m(\strlen)$ edges per block boundary.
        All remaining edge queries return $0$.
    \end{enumerate}
    
    Every test performed by $M$ is a comparison or addition on $\bigOFun{\log \strlen}$-bit numbers, and answering queries about $C^f_\strlen$ is delegated to $M_f$; hence $M$ runs in logspace.
    By construction, $C_\strlen$ has size $r(\strlen) \cdot \sigma = \polyFun{\strlen}$, depth $\bigOFun{d(\strlen) \cdot r(\strlen)}$, and computes $f^{r(\strlen)}$.
    Since $r(\strlen) \cdot d(\strlen)$ is at most polynomial, $\mathcal C$ is an $\LUniform$ polynomial-size circuit family of depth $\bigOFun{r(\strlen)\, d(\strlen)}$ over the same gate set as $\mathcal C^f$.
    
    \paragraph{Step 2: upgrading $\LCls$-uniformity to $\FO$-uniformity.}
    Since $r(\strlen) \cdot d(\strlen) \geq 1$ whenever $r(\strlen), d(\strlen) \geq 1$, the \emph{uniformity collapse} result \citealp[Thm. 3]{merrill2025exactexpressivepowertransformers}---namely, $\FOUniform~\AC^{d} = \LUniform~\AC^{d}$ and $\FOUniform~\TC^{d} = \LUniform~\TC^{d}$ for $d \geq 1$---applies, yielding the desired $\FOUniform$ circuit family for $f^{r(\strlen)}$ at depth $d(\strlen) \cdot r(\strlen)$.
\end{proof}

\TransformerLoopingLemma*
\begin{proof}
    The argument adapts \citet[][Lem.~6]{merrill2025exactexpressivepowertransformers}, which proves an analogous statement for fully-uniform log-precision \AHATsAcr.
    Despite the different starting point---we begin from $\LUniform$ constant-depth transformers in $\LUniform~\ACZero$ or $\LUniform~\TCZero$ instead of fully-uniform log-precision \AHATsAcr in $\FOUniform~\TCZero$---we arrive at the same $\FOUniform$ upper bounds for $\LUniform$ looped transformers.
    Additionally, the argument now applies to both \AHATsAcr and \SMATsAcr at constant and growing precision.
    Fix any family $\set{\tf_\strlen}_{\strlen \in \N}$ in $\LUniform~\LPTClassFun{d}{\constantFn}{\polyFn}$ or $\LUniform~\LPTClassFun{d}{\logFn}{\polyFn}$ for some $d \geq 1$, and let $\lang$ be the language it recognizes.
    Write $\bigThetaFun{\log^d \strlen}$ for the loop count and let $\tf_\strlen$ have a designated looped block of constant depth (cf.~\cref{def:looped-transformer}).Following \citet{merrill2025exactexpressivepowertransformers}, we partition the $\numlayers$ layers $\tflayer^{(1)}, \ldots, \tflayer^{(\numlayers)}$ of $\tf_\strlen$ into three pieces using the loop boundaries $1 \leq \layeridx_1 \leq \layeridx_2 \leq \numlayers$ of \cref{def:looped-transformer}: The pre-loop layers $\tf^A_\strlen \defeq \tflayer^{(\layeridx_1)} \circ \cdots \circ \tflayer^{(1)}$, the looped constant-depth block $\tf^B_\strlen \defeq \tflayer^{(\layeridx_2)} \circ \cdots \circ \tflayer^{(\layeridx_1 + 1)}$ iterated $r(\strlen) \defeq \bigThetaFun{\log^d \strlen} = \timesteps$ times, and the post-loop layers $\tf^C_\strlen \defeq \tflayer^{(\numlayers)} \circ \cdots \circ \tflayer^{(\layeridx_2 + 1)}$, so that $\tf_\strlen = \tf^C_\strlen \circ (\tf^B_\strlen)^{r(\strlen)} \circ \tf^A_\strlen$.

    \paragraph{Step 1: Each piece is a constant-depth padded transformer.}
    By construction, each of $\tf^A_\strlen$, $\tf^B_\strlen$, and $\tf^C_\strlen$ is a constant-depth padded transformer with the same width, precision, and attention type as $\tf_\strlen$.
    All three are themselves $\LUniform$: Their parameter matrices are an $\LCls$-computable projection (selecting a contiguous range of layers) of those of $\tf_\strlen$, and the construction machine $\tm_1$ of $\tf_\strlen$ only needs to be augmented with logspace-computable counters marking the layer ranges $A$, $B$, $C$.
    The PE machine $\tm_2$ is unchanged.
    Hence each piece lies in $\LUniform~\LPTClassFun{0}{\constantFn}{\polyFn}$ or $\LUniform~\LPTClassFun{0}{\polyFn}{\polyFn}$ depending on the precision regime.

    \paragraph{Step 2: Simulating each piece by a constant-depth circuit.}
    Applying \cref{lem:poly-width-upper-bound} to $\tf^A_\strlen$, $\tf^B_\strlen$, and $\tf^C_\strlen$ yields $\LUniform$ circuit families $\set{\circuit^A_\strlen}_{\strlen \in \N}$, $\set{\circuit^B_\strlen}_{\strlen \in \N}$, and $\set{\circuit^C_\strlen}_{\strlen \in \N}$ of polynomial size and constant depth that simulate them; The gate set is $\set{\andGate, \orGate, \notGate}$ in the constant-precision case (so each family is in $\LUniform~\ACZero$) and $\set{\andGate, \orGate, \notGate, \majGate}$ in the growing-precision case (so each is in $\LUniform~\TCZero$).
    In both cases the constructed circuits act on the residual-stream representation of the padded input, which has polynomially many positions of width $\polyFun{\strlen}$ bits.

    \paragraph{Step 3: Looping $\circuit^B_\strlen$ via \cref{lem:circuit-looping}.}
    The composition $(\tf^B_\strlen)^{r(\strlen)}$ corresponds to iterating $\circuit^B_\strlen$ for $r(\strlen) = \bigThetaFun{\log^d \strlen}$ steps.
    Invoking \cref{lem:circuit-looping} with depth function $d_B(\strlen) = \bigThetaFun{1}$, loop function $r(\strlen) = \bigThetaFun{\log^d \strlen}$, and the appropriate gate set produces an $\FOUniform$ circuit family of polynomial size and depth $\bigOFun{\log^d \strlen}$ computing $(\tf^B_\strlen)^{r(\strlen)}$---an $\FOUniform~\ACd$ family in the constant-precision case and an $\FOUniform~\TCd$ family in the growing-precision case.
    The uniformity \emph{tightens} from $\LUniform$ to $\FOUniform$ via the uniformity-collapse (cf. \cref{lem:circuit-looping}), since $\LCls \subseteq \FOUniform~\ACd$ for $d \geq 1$.

    \paragraph{Step 4: Composing the three pieces.}
    The looped circuit sits between $\circuit^A_\strlen$ and $\circuit^C_\strlen$.
    The result is a serial composition of an $\FOUniform~\ACd$ (resp.~$\FOUniform~\TCd$) family with two $\LUniform~\ACZero \subseteq \FOUniform~\ACd$ (resp.~$\LUniform~\TCZero \subseteq \FOUniform~\TCd$) families, and these classes are closed under fixed serial composition for $d \geq 1$ \citep[][Lem.~8]{merrill2025exactexpressivepowertransformers}, so the composed family lies in $\FOUniform~\ACd$ (resp.~$\FOUniform~\TCd$).
    It recognizes the same language $\lang$ as $\tf_\strlen$, yielding $\lang \in \FOUniform~\ACd$ in the constant-precision case and $\lang \in \FOUniform~\TCd$ in the growing-precision case.

    The argument is agnostic to attention type: \cref{lem:poly-width-upper-bound} is stated and proved for both \AHATsAcr and \SMATsAcr, and the remaining steps only manipulate the resulting circuit families.
    Hence the lemma holds for both attention patterns.
\end{proof}

\reductionLemma*
\begin{proof}

    The proof for $\LUniform~\LPTClassFun{d}{\logFn}{\constantFn}$ follows directly from \citealp[][Lem.~3]{merrill2025exactexpressivepowertransformers} (which gives the construction in the log-precision \AHATAcr regime) combined with \cref{lem:smats-approximate-ahats} of this paper (which lifts it to log-precision \SMATsAcr while preserving $\LUniformity$).
    We therefore focus on the $\LUniform~\LPTClassFun{d}{\constantFn}{\logFn}$ case, where the main adaptation is the absence of the layer hash norm at constant precision; we adapt \citeposs{merrill2025exactexpressivepowertransformers} construction step-by-step.

    Fix any language $\lang' \in \langCls$.
    By the assumed completeness of $\lang$ for $\langCls$ under $\reductionCls$ reductions, there is an $\reductionCls$ reduction $\transduction\colon \strings \to \strings$ such that $\str \in \lang'$ if and only if $\transduction(\str) \in \lang$, with $\abs{\transduction(\str)} \in \polyFun{\abs{\str}}$ (cf.~\cref{def:reduction}).
    Let $\tf_\lang \in \LUniform~\LPTClassFun{d}{\constantFn}{\logFn}$ be the family recognizing $\lang$, and let $\tf_\transduction \in \LUniform~\LPTClassFun{d}{\constantFn}{\logFn}$ be the family computing $\transduction$ (which exists by the second precondition: $\LUniform~\LPTClassFun{d}{\constantFn}{\logFn}$ recognizes the language $\lang_\transduction$ of \cref{def:transformer-computes-reduction}).
    We construct an $\LUniform~\LPTClassFun{d}{\constantFn}{\logFn}$ family $\tf$ for $\lang'$ by stacking $\tf_\lang$ on top of $\tf_\transduction$: The first layers of $\tf$ compute $\transduction(\str)$ symbol-by-symbol in parallel inside disjoint blocks of padding, and the remaining layers run $\tf_\lang$ on the resulting string.

    \paragraph{Step 1: Layout of the padding space.}
    Let $\strlen \defeq \abs{\str}$ and fix a polynomial bound $\abs{\transduction(\str)} \leq \strlen^c$ on the reduction output (with $c \in \N$ depending on $\transduction$).
    Suppose $\tf_\transduction$ uses at most $\strlen^K$ padding positions to compute one output symbol $\transduction(\str)_\stridx$ on input $(\str, \binEnc{\stridx})$; for sufficiently large $\strlen$ we upper-bound this by $\strlen^{K + 1}$, and for small $\strlen$ we assume $\tf_\transduction$ uses a finite lookup table and no padding.
    The constructed transformer $\tf$ divides its padding space into $\strlen^c$ \defn{blocks} of size $\strlen^{K + 1}$ each, so that block $\stridx \in \NTo{\strlen^c}$ holds the workspace for computing $\transduction(\str)_\stridx$.
    The total padding is polynomial: $\strlen^{c + K + 1} \in \polyFun{\strlen}$.

    \paragraph{Step 2: Identifying the block index $\stridx$ at every position (simulating the role of the layer hash norm).}
    \citeposs{merrill2025exactexpressivepowertransformers} construction at log-precision uses the \emph{layer hash norm} to compute, at every padding position $\stridx'$, the index $\stridx \defeq \floor{(\stridx' - \strlen)/\strlen^{K + 1}}$ of the block containing $\stridx'$.
    This identifier is what tells each padding position which output symbol of $\transduction$ its block is supposed to compute, and it is also what lets attention be restricted to a single block.
    At constant precision the layer hash norm is unavailable, and the transformer cannot itself compute $\floor{(\stridx' - \strlen)/\strlen^{K + 1}}$ on the fly.
    However, this quantity depends only on $\stridx'$ and $\strlen$, both visible to the PE machine $\tm_2$ of \cref{def:uniform-transformers}.
    We therefore extend the PE at every position $\stridx'$ with the additional logspace-computable fields
    \begin{equation}
        \posencoding_{\textrm{blk}}(\stridx', \strlen) \defeq \sbinFun{\floor{(\stridx' - \strlen)/\strlen^{K + 1}}}, \qquad \posencoding_{\textrm{off}}(\stridx', \strlen) \defeq \sbinFun{(\stridx' - \strlen) \mod \strlen^{K + 1}},
    \end{equation}
    encoding the block index and within-block offset; both are computable in $\bigOFun{\log\strlen}$ space since division and modulo on $\bigOFun{\log\strlen}$-bit numbers are in $\LCls$.
    This precomputation injects \emph{at the time of PE computation} exactly the information the layer hash norm would have provided in the log-precision construction on the fly.
    
    \paragraph{Step 3: Stage 1---computing the reduction in parallel across blocks.}
    Using the per-position block index from Step~2, every position inside block $\stridx$ can recognize itself as belonging to block $\stridx$ and can read off the block-local offset.
    Attention can therefore be routed within a single block by matching block indices: By \citealp[][Lem.~D.7]{svete2025reasoningabilitiesmaskeddiffusion}, a single attention head can be made to attend exclusively to positions whose PE block index equals a query block index, ignoring all other positions.
    This lets us simulate $\tf_\transduction$ inside block $\stridx$: The input symbols $\sym_1 \cdots \sym_\strlen$ are addressed unconditionally, the $\strlen^{K + 1}$ padding positions of block $\stridx$ play the role of $\tf_\transduction$'s padding tape, and the query index $\stridx$ is available from $\posencoding_{\textrm{blk}}$ at every position in block $\stridx$.
    Running this in parallel inside every block uses the same fixed set of layers (every block executes the same computation on different binary inputs $\binEnc{\stridx}$), so $\tf_\transduction$'s contribution to $\tf$'s depth is exactly $\tf_\transduction$'s own depth.
    After Stage~1, the final position of block $\stridx$ contains the symbol $\transduction(\str)_\stridx$.
    
    \paragraph{Step 4: Stage 2---recognizing $\transduction(\str) \in \lang$ with $\tf_\lang$.}
    We now run $\tf_\lang$ on top of Stage~1's output.
    The string $\transduction(\str)$ that $\tf_\lang$ consumes is spread across the $\strlen^c$ block-final positions: Position $\stridx' \defeq \strlen + \stridx \cdot \strlen^{K + 1} + (\strlen^{K + 1} - 1)$ holds $\transduction(\str)_\stridx$.
    We modify every attention head of $\tf_\lang$ in exactly the way \citeposs{merrill2025exactexpressivepowertransformers} construction does: Add a large negative bias to the attention score of any key position that is \emph{not} block-final, so that each head attends only over the $\strlen^c$ block-final positions and effectively sees the string $\transduction(\str)$.
    Whether a position is block-final is a function of $\stridx'$ and $\strlen$ alone, so it can also be precomputed as a binary PE flag $\posencoding_{\textrm{end}}(\stridx', \strlen) \in \set{0, 1}$ by $\tm_2$ in logspace.
    Likewise, $\tf_\lang$'s PEs are reinterpreted: At block-final position $\stridx'$ we expose $\posencoding_{\tf_\lang}(\stridx, \strlen^c)$ (with $\stridx$ read from $\posencoding_{\textrm{blk}}$), so that $\tf_\lang$ sees the same PE pattern it would on a length-$\strlen^c$ input.
    Since $\tf_\lang \in \LUniform~\LPTClassFun{d}{\constantFn}{\logFn}$, the modified $\tf_\lang$ contributes exactly $\tf_\lang$'s depth, and the resulting computation accepts if and only if $\transduction(\str) \in \lang$, i.e., $\str \in \lang'$.

    \paragraph{Step 5: $\LUniformity$ and resource budget.}
    By construction, $\tf$'s parameter matrices are obtained by gluing the matrices of $\tf_\transduction$ and $\tf_\lang$ (both $\LUniform$) with constant-size routing weights for the block-matching and block-final attention biases, so $\tm_1$ remains logspace.
    The PE machine $\tm_2$ emits, in addition to $\tf_\transduction$'s and $\tf_\lang$'s PEs, the signed-binary block index $\posencoding_{\textrm{blk}}$, the within-block offset $\posencoding_{\textrm{off}}$, and the block-final flag $\posencoding_{\textrm{end}}$; all three are computable in $\bigOFun{\log\strlen}$ space.
    The depth, width, precision, and loop count of $\tf$ are the sum of those of $\tf_\transduction$ and $\tf_\lang$ plus a constant overhead for the routing layers; the two looped blocks of $\tf_\transduction$ and $\tf_\lang$ fit into the multi-block formulation of \cref{def:looped-transformer}, each looped $\bigThetaFun{\log^d \strlen}$ times, or equivalently merge into a single block run for $r_\transduction(\strlen) + r_\lang(\strlen) = \bigThetaFun{\log^d \strlen}$ iterations under a one-bit phase flag selecting which sub-block executes.
    Either way, $\tf \in \LUniform~\LPTClassFun{d}{\constantFn}{\logFn}$.
    Finally, the Stage~1 construction builds an \SMATAcr that, by the focusing argument of \cref{lem:focus-on-positions-pes}, behaves identically to an \AHATAcr; the same applies to Stage~2 via $\tf_\lang$.
    The lemma therefore holds for both attention patterns, completing the constant-precision case.
\end{proof}

\begin{restatable}[Analog of \text{\citealp[][Thm. 2]{merrill2025littledepthgoeslong}}]{theorem}{graphConnectivityTheoremAC} \label{thm:graph-connectivity-ac}
    There exists an $\LUniform~\LPTClassFun{1}{\constantFn}{\logFn}$ transformer family $\set{\tf_\strlen}_{\strlen \in \N}$ such that $\tf_\strlen$ solves connectivity on (directed or undirected) graphs over $\strlen$ vertices: Given the $\strlen \times \strlen$ adjacency matrix of a graph $\graph$, $\strlen^3$ padding positions, and $s, t \in \NTo{\strlen}$ in binary, $\tf_\strlen$ checks whether $\graph$ has a path from vertex $s$ to vertex $t$.
\end{restatable}
\begin{proof}
    We consider a directed graph $\graph$ over $\strlen$ vertices and follow the construction from \citealp[][Thm. 2]{merrill2025littledepthgoeslong}.
    We again only highlight the differences in the construction.
    
    Let $\mA \in \{0, 1\}^{\strlen \times \strlen}$ be $\graph$'s adjacency matrix.
    The input to the transformer is a string over the alphabet $\alphabet \defeq \{0, 1, \padSym, \texttt{\&}\}$, where $\padSym$ is the padding symbol and \texttt{\&} is a dedicated \emph{separator} symbol used to delimit the binary encodings of the source and target nodes; the adjacency matrix $\mA$ occupies $\strlen^2$ positions over $\{0, 1\}$, followed by $\strlen^3$ padding positions $\padSym$, and finally the source and target nodes $s, t \in \{1, \ldots, \strlen\}$.
    In contrast to \citet[][Thm. 2]{merrill2025littledepthgoeslong}, $s$ and $t$ are represented in binary with $\ceil{\log \strlen}$ bits each:
    \begin{align}
        \underbrace{\emA_{1,1} \ldots \emA_{\strlen,\strlen}}_{\strlen^2} \
        \underbrace{\padSym \ldots \padSym}_{\strlen^3} \texttt{ \& } \binEnc{s} \texttt{ \& } \binEnc{t}
    \end{align}
    
    In contrast to \citet[][Thm. 2]{merrill2025littledepthgoeslong}, we cannot rely on the layer hash norm to identify positions.
    To account for that, we provide the transformer with the following PEs:
    \begin{equation}
        \posEncFun{\stridx, \strlen} \defeq
        \begin{pmatrix}
            \sbinFun{\stridx} \\
            \sbinFun{\stridx \mod \strlen} \\
            \sbinFun{\floor{\sfrac{\stridx}{\strlen}}} \\
            \sbinFun{\stridx'}  \\
            \sbinFun{\stridx' \mod \strlen}  \\
            \sbinFun{\floor{\sfrac{\stridx'}{\strlen^2}}}  \\
        \end{pmatrix} \in \set{0, 1}^{\bigOFun{\logFun{\strlen}}}.
    \end{equation}
    where $\stridx' \defeq \max(0, \stridx - \strlen^2)$.
    $\bin(\stridx)$ denotes the binary encoding of $\stridx$ using $\ceil{\log \strlen}$ bits and $\sbin(\stridx)$ denotes the signed binary encoding $2\bin(\stridx)-\one$, where $\one$ is the $\ceil{\log\strlen}$-dimensional vector of all ones.
    These PEs can be computed in logspace.
    
    \paragraph{Initial layers: Reading the source and target.}
    The transformer $\tf_\strlen$ first uses $\ceil{\log \strlen} + 1$ layers to read the binary encodings of the source and target nodes $s$ and $t$ stored in the string into a single dimension of the residual stream, using the $\LUniform$ construction from \cref{lem:binary-encoding}.
    After these layers, the residual stream at the dedicated positions for $s$ and $t$ contains $\binEnc{s}$ and $\binEnc{t}$, respectively, in a single coordinate block; combined with the PEs, every position can now refer to both endpoints of the requested path.
    
    \paragraph{Repeated layers: Iteratively doubling reachability.}
    The transformer maintains, in its residual stream, two families of predicates indexed by $\ell \in \set{0, 1, \ldots, \ceil{\log \strlen}}$:
    \begin{enumerate}[label=(\arabic*),topsep=0pt,noitemsep]
        \item $\emB_\ell(i, j) \in \set{0, 1}$, stored at the input position with coordinates $(i, j)$ (one of the first $\strlen^2$ positions), encoding whether $\graph$ has a path of length at most $2^\ell$ from $i$ to $j$.
        \item $\emC_\ell(i, k, j) \in \set{0, 1}$, stored at the padding position with coordinates $(i, k, j)$ (one of the $\strlen^3$ padding positions), encoding whether $\graph$ has paths of length at most $2^{\ell - 1}$ from $i$ to $k$ \emph{and} from $k$ to $j$.
    \end{enumerate}
    The base predicate $\emB_0(i, j) = \mA(i, j) \lor \ind{i = j}$ is computed at every input position $(i, j)$ in a single layer: Position $(i, j)$ already holds the adjacency bit $\mA(i, j)$, and the test $i = j$ is a function of the PE coordinates stored at the position.
    The iterative step then alternates between two layers, each of which applies once per $\ell \in \set{1, \ldots, \ceil{\log\strlen}}$:
    \begin{enumerate}[label=\arabic*.,topsep=0pt,noitemsep]
        \item \textbf{Computing $\emC_\ell$ in the padding positions.}
        A padding position with coordinates $(i, k, j)$ uses two attention heads to retrieve $\emB_{\ell - 1}(i, k)$ from the input position $(i, k)$ and $\emB_{\ell - 1}(k, j)$ from the input position $(k, j)$.
        Both retrievals are facilitated by binary PEs: Head 1 attends with query $\sbinFun{i \cdot \strlen + k}$ and head 2 with query $\sbinFun{k \cdot \strlen + j}$, both against keys $\sbinFun{i' \cdot \strlen + j'}$ at input position $(i', j')$.
        \cref{lem:our-pos-enc-properties,lem:focus-on-positions-pes} ensure each head focuses on the unique target input position in $\F_\numPrec$ arithmetic.
        The MLP then computes $\emC_\ell(i, k, j) = \emB_{\ell - 1}(i, k) \land \emB_{\ell - 1}(k, j)$ from the retrieved bits.
        \item \textbf{Computing $\emB_\ell$ in the input positions.}
        An input position with coordinates $(i, j)$ accepts as $\emB_\ell(i, j)$ iff there exists $k$ with $\emC_\ell(i, k, j) = 1$, i.e., $\emB_\ell(i, j) = \bigvee_{k = 1}^{\strlen} \emC_\ell(i, k, j)$.
        We compute this disjunction by detection (\cref{lem:detection}): One attention head at the input position $(i, j)$ attends over the $\strlen$ padding positions $(i, k, j)$ for $k \in \NTo{\strlen}$ (selected via the PE block $\sbinFun{\stridx'}$ that exposes the full padding position index) with value $\emC_\ell(i, k, j)$ encoded as a one-hot vector; the aggregated value is positive iff some $k$ has $\emC_\ell(i, k, j) = 1$, and the MLP thresholds it to recover $\emB_\ell(i, j) \in \set{0, 1}$.
    \end{enumerate}
    Each pair of layers doubles the reachability radius, so after $\ceil{\log\strlen}$ iterations every reachable pair is captured by $\emB_{\ceil{\log\strlen}}$.
    Both the initial $\ceil{\log\strlen} + 1$ layers that read $s, t$ via \cref{lem:binary-encoding} and the $\ceil{\log\strlen}$ doubling iterations are obtained by looping a constant-depth block (depth $1$ for the reader and depth $2$ for the doubler), with the iteration counter $\ell$ tracked in a dedicated residual-stream coordinate.
    Both blocks fit into the multi-block formulation of \cref{def:looped-transformer}, each looped $\bigOFun{\log\strlen}$ times; equivalently, the two blocks can be merged into a single block run for $\bigOFun{\log\strlen}$ iterations by adding a one-bit phase flag to the residual stream and selecting which sub-block to execute via the MLP.
    Either way, the construction lies in $\LPTClassFun{1}{\constantFn}{\logFn}$.
    
    \paragraph{Final layer: Reading off $\emB_{\ceil{\log\strlen}}(s, t)$.}
    Once $\emB_{\ceil{\log\strlen}}$ has been populated at every input position, a single additional layer at the last position reads off the answer: It issues an attention head with query $\sbinFun{s \cdot \strlen + t}$ against the keys $\sbinFun{i \cdot \strlen + j}$ stored at input positions $(i, j)$, attending uniquely to the position with coordinates $(s, t)$, and copies its $\emB_{\ceil{\log\strlen}}$ value into the residual stream of the final symbol.
    The output layer then accepts if and only if this value is $1$, i.e., $\graph$ has an $s$-$t$ path.
    
    \paragraph{Constant precision and attention-type.}
    Every gadget above uses constant-precision arithmetic.
    Because the only attention pattern used is a saturated lookup or a binary-detection aggregation, the constructed \SMATAcr behaves identically to an \AHATAcr, so the construction works for both attention types.
\end{proof}

\NLLBThm*
\begin{proof}
    We follow the proof of \citet[][Lem. 4]{merrill2025exactexpressivepowertransformers}.
    Let $\lang$ be the graph connectivity problem, class $\langCls$ be $\NLCls$, and class $\reductionCls$ be $\FO$.
    We will show that the preconditions of \Cref{lem:reductions} are met, which will give us $\NLCls \subseteq \LUniform~\LPTClassFun{1}{\constantFn}{\logFn}$:
    \begin{enumerate}[label=\arabic*.,noitemsep,topsep=0pt]
        \item First, graph connectivity is known to be $\NLCls$-complete under $\FO$ reductions \citep{immerman1999descriptive}.
        \item Second, \cref{thm:graph-connectivity-ac} shows that log-depth transformers with cubic padding can recognize the graph connectivity problem $\lang$, i.e., $\lang \in \LUniform~\LPTClassFun{1}{\constantFn}{\logFn}$. 
        \item Finally, \cref{thm:london-ac-equivalence} gives $\LUniform~\LPTClassFun{0}{\constantFn}{\logFn} = \LUniform~\ACZero$.
        Since $\LUniform~\ACZero \supseteq \FOUniform\ \ACZero = \FO$ \citep{MIXBARRINGTON1990274}, $\LUniform~\LPTClassFun{0}{\constantFn}{\logFn}$ can recognize languages in $\FO$ and can therefore compute $\FO$ reductions.
    \end{enumerate}
\end{proof}

\subsubsection{Circuit Evaluation} \label{app:circuit-evaluation}

We now introduce the circuit evaluation problem and show its completeness under $\LCls$ reductions.
This section adapts \citealp[][App. D]{merrill2025exactexpressivepowertransformers} to our setting.

\begin{definition}[$\AC$ circuit encoding] \label{def:circuit-encoding-ac}
    Let $\circuit$ be an $\AC$ circuit over $\strlen$ inputs. 
    We define the encoding
    \begin{equation}
        \circEncFun{\circuit} \defeq \underbrace{\texttt{ X \ldots X }}_{\strlen \text{ times }} \circ \ \circEncFun{\gate_1} \circ \texttt{ \ldots} \circ \circEncFun{\gate_M}
    \end{equation}
    where $\texttt{X}$ is a placeholder symbol for holding the input of the circuit\footnote{Note that, although the first $\strlen$ positions contain placeholders for input strings, the circuit encoding is independent of the input string---since all compatible strings are of the same length, the circuit encoding does not change depending on the input string.} and $\circEncFun{\gate_m}$ for $m \in \NTo{M}$ encodes the $m$\textsuperscript{th} gate with $K$ inputs as:
    \begin{equation}
        \circEncFun{\gate_m} \defeq \gateTypeFun{\gate_m} \ \texttt{ \& } \binEnc{g_1} \texttt{ \# } \binEnc{m} \ \ldots \texttt{ \& } \binEnc{g_K} \texttt{ \# } \binEnc{m},
    \end{equation}
    where $\gateTypeFun{\gate_m} \in \set{\andGate, \orGate, \notGate}$ denotes $\gate_m$'s type, $\binEnc{g_i}$ is the binary encoding of the position of the $i$\textsuperscript{th} argument of the gate $\gate_m$, and $\binEnc{m}$ is the binary encoding of the gate's index $m$.
\end{definition}

\begin{example}
    The encoding of the circuit $\circuit(x_1, x_2, x_3) = (x_1 \land x_2) \lor x_3$ is
    \begin{equation}
        \circEncFun{\circuit} \defeq \underbrace{\texttt{ X X X }}_{\text{input}} \ \andGate \ \underbrace{\texttt{\&000 \#010 \&001 \#010 }}_{\text{arguments to \andGate}} \ \orGate \ \underbrace{\texttt{\&011 \#100 \&010 \#100 }}_{\text{arguments to OR}}.
    \end{equation}
\end{example}

There are two main differences between \citeposs{merrill2025exactexpressivepowertransformers} definitions and ours:
\begin{enumerate}[label=(\arabic*)]
    \item We encode the argument pointers of each gate using their \emph{binary encodings} instead of the unary ones used by \citet{merrill2025exactexpressivepowertransformers}.
    We require this to ensure that a shallow (in our case, log-depth) constant-precision transformer can convert the circuit encoding into the contents of the residual stream.
    It is not clear how to do this with unary encodings.
    \item Second, we replicate the positions of the gates after the pointer to each argument (prefixed by the special $\texttt{\#}$ symbol).
    This is needed to avoid the use of the layer hash norm to compute the pointer to the argument's gate, which is done by \citet{merrill2025exactexpressivepowertransformers}.
\end{enumerate}

\begin{definition}[$\circuitFamilyClass$ circuit evaluation; Def. 15]
    \label{def:circuit-evaluation-problem}
    The $\circuitFamilyClass$ circuit evaluation problem takes as input $(\str, \circEncFun{\circuit})$ where $\str \in \kleene{\{0, 1\}}$ and $\circEncFun{\circuit}$ is a serialized circuit, and outputs $\circuit(\str)$.
\end{definition}

We refer to the special case where $\circuitFamilyClass = \ACd$ as the \defn{$\ACd$ circuit evaluation problem}. 
We further define \textbf{wide-$\ACd \subseteq \ACd$} as the class of circuit families $\{\circuit_\strlen\}_{\strlen=0}^\infty$ such that there exists some $c$ such that, for large $\strlen$, the depth of $\circuit_\strlen$ is at most $c \log^d \strlen$ and, crucially, the size is \emph{at least} $\strlen^c$. 
That is, the size (and hence the width) of the circuit is large relative to its depth.
We define the corresponding \textbf{wide-$\ACd$ circuit evaluation problem} as a variant of the circuit evaluation problem with $\circuitFamilyClass = \text{wide-}\ACd$.
The $\TCd$ analogs of these concepts are defined in \citealp[][App. D]{merrill2025exactexpressivepowertransformers}.

The following lemmata show that both $\ACd$ and wide-$\ACd$ circuit evaluation are hard for $\FOUniform~\ACd$ under $\NLCls$ reductions.
Their proofs are identical to the ones in \citet{merrill2025exactexpressivepowertransformers}, except that we replace $\TCd$ with $\ACd$ (the proofs are not affected by the differences in circuit serializations, since our serialization is also logspace-computable).
\begin{restatable}[Lem. 11]{lemma}{CircuitHardLemma} \label{lem:circuit-evaluation-is-hard}
    For $d \geq 1$, $\ACd$ circuit evaluation is hard for $\LUniform$ $\ACd$ under $\LCls$ reductions.
\end{restatable}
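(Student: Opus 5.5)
To show that $\ACd$ circuit evaluation is hard for $\LUniform~\ACd$ under $\LCls$ reductions, I will give the standard reduction. The input string $\str$ is mapped to the pair $(\str, \circEncFun{\circuit_\strlen})$, with the serialization of \cref{def:circuit-encoding-ac}. I will then check two things: that this map is an $\LCls$ reduction in the sense of \cref{def:reduction}, and that the produced instance lies in the promised $\ACd$ evaluation problem. This mirrors \citet[][Lem.~11]{merrill2025exactexpressivepowertransformers}, with $\TCd$ replaced by $\ACd$ and with our binary serialization in place of their unary one.

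\textbf{Step 1.} Fix $\lang \in \LUniform~\ACd$, witnessed by a family $\set{\circuit_\strlen}_{\strlen \in \N}$ of polynomial size and depth $\bigOFun{\log^d\strlen}$. Let $M_\circuit$ be the logspace machine deciding its connection language (\cref{app:circuit-complexity}). Define
\[
\transduction(\str) \defeq \str \circ \circEncFun{\circuit_{\abs{\str}}}.
\]
Then $\transduction(\str)$ evaluates to $1$ if and only if $\circuit_{\abs{\str}}(\str)=1$, which holds if and only if $\str\in\lang$. Since the circuit has polynomially many gates and each gate index takes $\bigOFun{\log\strlen}$ bits, $\abs{\transduction(\str)}$ is polynomial. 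The serialized circuits still have depth $\bigOFun{\log^d \strlen}$ in the length of the original input, and hence also in the length of the instance. So every image lies in the $\ACd$ evaluation problem.

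\textbf{Step 2.} I must show that $\lang_\transduction=\{(\str,\binEnc{\stridx},\sym)\mid \transduction(\str)_\stridx=\sym\}$ is in $\LCls$. Given $\stridx$, a logspace machine first decides whether $\stridx$ falls in the $\str$ part, the $\texttt{X}$ placeholder block, or the gate block. For the gate block, it iterates over gates $m=1,2,\ldots$ while keeping a running offset counter. For each gate it computes the length of $\circEncFun{\gate_m}$ by counting, through queries to $M_\circuit$, the number $K_m$ of predecessors $g$ of $m$. That length equals
\[
1+K_m\bigl(2+2\ceil{\log \strlen}\bigr).
\]
Once it finds the gate $m$ whose encoding contains $\stridx$, it locates the argument $k$ and the offset within that argument. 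The output symbol is then either the gate type (one query to $M_\circuit$), the $\texttt{\&}$ or $\texttt{\#}$ separator, a bit of $\binEnc{g_k}$ (found by enumerating predecessors in increasing order up to the $k$-th), or a bit of $\binEnc{m}$. Every quantity here is an $\bigOFun{\log\strlen}$-bit counter, so logspace suffices.

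\textbf{Main obstacle.} The only delicate point is the offset bookkeeping: the machine must locate position $\stridx$ without writing out the encoding. This works because every field has a fixed width of $\ceil{\log\strlen}$ bits, so each gate's encoding length is an affine function of its fan-in. Computing that fan-in takes one counting pass over candidate predecessors. Because of this, neither the binary pointers nor the replicated $\binEnc{m}$ fields cause any difficulty, which justifies the remark that the serialization differences do not affect the argument.
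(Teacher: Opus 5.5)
Your proposal is correct and takes the same route as the paper, which defers to \citet[Lem.~11]{merrill2025exactexpressivepowertransformers}: reduce via $\str \mapsto (\str, \circEncFun{\circuit_{\abs{\str}}})$ and observe that the binary serialization is logspace-computable from the $\LUniform$ family, which the paper only asserts and you verify explicitly. Two small fixes: the pointer and index fields need a fixed width of $\bigOFun{\log\strlen}$ bits, enough to address all polynomially many positions of the instance, not exactly $\ceil{\log\strlen}$ with $\strlen = \abs{\str}$; and if $\binEnc{g_k}$ denotes a position rather than a gate index, you must rerun your offset computation for the predecessor---both routine logspace adjustments.
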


\begin{restatable}[Lem. 12]{lemma}{WideCircuitHardLemma}\label{lem:wide-circuit-evaluation-is-hard}
    For $d \geq 1$, wide-$\ACd$ circuit evaluation is hard for $\LUniform$ $\ACd$ under $\LCls$ reductions.
\end{restatable}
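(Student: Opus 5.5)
Taking the hardness of plain $\ACd$ circuit evaluation (\cref{lem:circuit-evaluation-is-hard}) as given, I will reduce $\ACd$ circuit evaluation to its wide variant. The reduction only pads the circuit, so it is computable in logspace. Since $\LCls$ reductions compose, this gives hardness of wide-$\ACd$ circuit evaluation for $\LUniform~\ACd$ under $\LCls$ reductions. This is the argument of \citet[][Lem.~12]{merrill2025exactexpressivepowertransformers} with $\TCd$ replaced by $\ACd$.

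Fix $\lang \in \LUniform~\ACd$. By \cref{lem:circuit-evaluation-is-hard}, there is an $\LCls$ reduction $\str \mapsto (\str, \circEncFun{\circuit_{\abs{\str}}})$ with $\circuit_\strlen$ of depth at most $c_0 \log^d \strlen$ and size at most $\strlen^{c_1}$. Let $c \defeq \max(c_0, c_1, 1)$. From $\circuit_\strlen$ I build $\circuit'_\strlen$ by appending $\strlen^c$ dummy gates, for example unary $\orGate$ gates that each take input $x_1$ as argument. None of them feeds into the output gate, so $\circuit'_\strlen$ computes the same function. Its depth is unchanged, hence at most $c \log^d \strlen$, and its size is at least $\strlen^c$. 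Therefore $\set{\circuit'_\strlen}$ is a wide-$\ACd$ family.

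The remaining step is to check that $\str \mapsto (\str, \circEncFun{\circuit'_{\abs{\str}}})$ is an $\LCls$ reduction in the sense of \cref{def:reduction}, i.e., that the $\stridx$-th symbol of the output can be computed in logspace from $(\str, \binEnc{\stridx})$.

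\begin{enumerate}
\item Decide whether position $\stridx$ lies in the original encoding or in the appended block. The length of the original encoding can be computed in logspace by running the original reduction and counting output symbols.
\item In the appended block, the encoding of each dummy gate has fixed length $\bigOFun{\log \strlen}$. So the gate index $m$ and the offset inside its encoding follow from a division and a modulo on $\bigOFun{\log \strlen}$-bit numbers. The emitted symbol is then a bit of $\binEnc{1}$, a bit of $\binEnc{m}$, or one of the fixed symbols $\orGate$, $\texttt{\&}$, $\texttt{\#}$.
\item Keep the binary pointer widths consistent. The new index range is larger, so the field width of every pointer must be fixed in advance to $\ceil{\log(\text{total size})}$. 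This rules out emitting the original encoding verbatim; the reduction instead re-emits each original pointer and gate index, zero-padded to the new width. This is still logspace, because it only shifts positions by computable offsets.
\end{enumerate}

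The main obstacle is this serialization bookkeeping, in particular the replicated $\texttt{\#}\,\binEnc{m}$ fields of our \cref{def:circuit-encoding-ac}. If the output gate must be the last gate, I would insert the dummy gates before the original gates rather than after them, and shift every original pointer by the corresponding offset. The shifted pointers remain computable by logspace arithmetic.
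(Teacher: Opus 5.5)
Your proposal is correct and follows the same route as the paper, which defers to \citet[][Lem.~12]{merrill2025exactexpressivepowertransformers} with $\TCd$ replaced by $\ACd$: pad the $\LUniform$ circuit family for the given language with $\strlen^c$ unused gates so that the depth bound $c\log^d\strlen$ and the size bound $\strlen^c$ hold with the same $c$, and observe that the padded serialization remains logspace-computable. One framing remark: the padding amount depends on the depth constant of each particular family, so the argument is the per-language one in your second paragraph rather than a single reduction from plain to wide circuit evaluation, and your construction already proceeds that way.
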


\begin{restatable}[Constant-precision path; Lem. 13]{lemma}{ACdCompleteness} \label{lem:ACd-completeness}
    Let $\circEncFun{\circuit}$ be the serialization of a depth-$\numlayers$ circuit with $\andGate$, $\orGate$, and $\notGate$ gates over $\str \in \kleene{\set{0, 1}}$. There exist two $\LUniform$ constant-precision log-width transformer families $\set{\tf^{\textup{read}}_\strlen}_{\strlen \in \N}$ and $\set{\tf^{\textup{step}}_\strlen}_{\strlen \in \N}$ such that:
    \begin{enumerate}[label=(\arabic*),topsep=0pt,noitemsep]
        \item $\set{\tf^{\textup{read}}_\strlen}_{\strlen \in \N}$ is in $\LUniform~\LPTClassFun{1}{\constantFn}{\logFn}$ and converts the input $\str \circ \circEncFun{\circuit}$ into internal pointer representations in $\bigOFun{\log \strlen}$ loop iterations;
        \item $\set{\tf^{\textup{step}}_\strlen}_{\strlen \in \N}$ is a constant-depth $\LUniform$ family in $\LPTClassFun{0}{\constantFn}{\logFn}$ that, when unrolled $\numlayers$ times on the output of $\tf^{\textup{read}}_\strlen$, computes $\circuit(\str)$.
    \end{enumerate}
    Their composition $\tf^{\textup{step}}_\strlen \circ \tf^{\textup{read}}_\strlen$, run for $\bigOFun{\log \strlen} + \numlayers$ total loop iterations, computes $\circuit(\str)$. In particular, when $\numlayers = \bigOFun{\log^d \strlen}$ for $d \geq 1$, the composed family is in $\LUniform~\LPTClassFun{d}{\constantFn}{\logFn}$.
\end{restatable}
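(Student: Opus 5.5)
We prove \cref{lem:ACd-completeness} by adapting \citet[][Lem.~13]{merrill2025exactexpressivepowertransformers} in two phases, matching the two families in the statement. The padding is organized so that every symbol of $\str \circ \circEncFun{\circuit}$ owns a fixed position. Each gate $\gate_m$ keeps its value at the position of its type symbol $\gateTypeFun{\gate_m}$, and each input bit $\sym_i$ keeps its value at the $i$\textsuperscript{th} \texttt{X} placeholder. The \LUniform PE machine $\tm_2$ supplies only position-dependent data: the signed binary encoding $\sbinFun{\stridx}$ of each position, flags marking whether the position is a gate-type symbol or the start of an \texttt{\&} or \texttt{\#} block, and the offset within a block. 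Every such quantity is a logspace function of $\stridx$ and $\strlen$, as in the proofs of \cref{lem:reductions,thm:graph-connectivity-ac}.

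\textbf{Phase 1 ($\tf^{\textup{read}}$).} We loop the single-layer block of \cref{lem:binary-encoding} $\ceil{\log\strlen}$ times, in parallel at every \texttt{\&} and every \texttt{\#} marker. After these iterations, the last bit position of each block \texttt{\&}$\binEnc{g_i}$ holds the argument pointer $\binEnc{g_i}$ in a $\ceil{\log\strlen}$-dimensional sub-block of its residual stream, and the last position of the matching \texttt{\#}$\binEnc{m}$ block holds $\binEnc{m}$. This reuse of \cref{lem:binary-encoding} is the reason the encoding in \cref{def:circuit-encoding-ac} stores pointers in binary and repeats the gate index after \texttt{\#}. A constant number of further layers then let each \texttt{\#} position attend, via \cref{lem:our-pos-enc-properties,lem:focus-on-positions-pes}, to the paired \texttt{\&} position at a PE-known relative offset. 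This step copies $\binEnc{g_i}$ next to $\binEnc{m}$, so each argument occurrence holds the pair (source pointer, owning gate) in $\bigOFun{\log\strlen}$ width and constant precision. Because $g_i$ and $m$ index gates rather than raw positions, we also let $\tm_2$ store at each gate-type position the gate index it hosts, in signed binary. Lookups by gate index then work by matching that PE field, and no division is needed at run time.

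\textbf{Phase 2 ($\tf^{\textup{step}}$).} This is a constant-depth block looped $\numlayers$ times. Each gate position keeps a value bit and a \emph{done} bit, and input placeholders are marked done once they have received $\sym_i$ in an initial layer. One iteration runs two layers. In the first layer, each argument position queries with $\sbinFun{g_i}$ and retrieves the value and done bits of its source gate, using saturated single-position attention (\cref{lem:focus-on-positions-pes}). In the second layer, each gate position $m$ attends with query $\sbinFun{m}$ over all argument positions whose stored owner equals $m$. It uses the detection gadget of \cref{lem:detection} on one-hot values to compute three existence bits: whether some argument is $1$, whether some argument is $0$, and whether some argument is not done. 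From these the MLP evaluates \andGate, \orGate, or \notGate and sets the done bit when all arguments are done.

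The main obstacle is this aggregation over a \emph{set} of matching keys under constant precision. Exact uniform averaging is unavailable, so the attention must give matching keys equal maximal scores and every other key a score that saturates to $0$. \cref{lem:our-pos-enc-properties} gives exactly this separation, and the existence test of \cref{lem:detection} then survives rounding. I would verify this step first. After $\numlayers$ iterations every gate at depth at most $\numlayers$ is done, and a final layer reads the output gate's value into the last position.

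The total loop count is $\bigOFun{\log\strlen} + \numlayers$. Following the construction in \cref{thm:graph-connectivity-ac}, the two loops are merged into one block selected by a phase flag, or combined via the multi-block form of \cref{def:looped-transformer}. All weights are fixed projections and shifts whose sizes depend on $\ceil{\log\strlen}$, so $\tm_1$ runs in logspace. When $\numlayers = \bigOFun{\log^d\strlen}$ with $d \geq 1$, the family therefore lies in $\LUniform~\LPTClassFun{d}{\constantFn}{\logFn}$, and because all attention is saturated, the same construction works for both \AHATsAcr and \SMATsAcr.
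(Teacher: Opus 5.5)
Your two-stage plan follows the paper's construction. You run \cref{lem:binary-encoding} in parallel on every pointer block, pair each \texttt{\&} block with its \texttt{\#} block through a fixed-offset lookup, and evaluate \andGate/\orGate by detection. However, one step fails as written: you have $\tm_2$ supply layout information that depends on the input.

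The PE machine sees only $(1^\strlen, \binEnc{\stridx})$. The position of $\gateTypeFun{\gate_m}$ inside $\str \circ \circEncFun{\circuit}$ depends on the fan-ins of $\gate_1, \ldots, \gate_{m-1}$ and on $\abs{\str}$, and these are part of the input. For example, gates with fan-ins $(2,3)$ versus $(3,2)$ give encodings of the same length with gate~$2$ at different positions. So ``$\tm_2$ stores at each gate-type position the gate index it hosts'' is not available.

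Your Phase~2 relies on this twice. The argument lookup needs keys $\sbinFun{g}$ at the position hosting gate $g$. The aggregation step needs the query $\sbinFun{m}$ at gate $m$'s own position. The same problem affects copying $\sym_i$ into the $i$\textsuperscript{th} \texttt{X}, since the offset $\abs{\str}$ is not determined by the total length. Your other ``PE'' flags (gate-type symbol, start of an \texttt{\&} or \texttt{\#} block, offset within a block) are also input-dependent. They do no harm, though, because the symbol embeddings and the counters of \cref{lem:binary-encoding} already provide them.

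The paper's sketch avoids this by treating the pointers as addresses, i.e., string positions. The \texttt{\#} field hands each argument the address of its gate, and a gate's own address is simply its $\sbinFun{\stridx}$. The only positional data needed are then $\sbinFun{\stridx}$ and $\sbinFun{\stridx + \ceil{\log\strlen} + 1}$, which are genuinely computable from the position.

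If you want to keep gate indices, repair the step as follows. Let each gate-type position read $m$ from the $\texttt{\#}\,\binEnc{m}$ field of its first argument. That field ends at a fixed offset after the type symbol, depending only on the pointer bit-width, so the PE can supply it. Write $m$ into the residual stream to serve as both key and query. Also let input arguments point directly at the positions of $\str$ instead of copying bits into the \texttt{X} placeholders. With this repair, your construction (including the done bits and the merged loop) goes through essentially as the paper's does.
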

\begin{proof}[Proof sketch]
    The high-level idea of the construction is similar to that of \citealp[][Lem. 7]{merrill2025exactexpressivepowertransformers}, but we need to adapt it to the constant-precision setting.
    Let $\circuit$ be a circuit of depth $\numlayers$ over $\strlen$ inputs.
    The simulation of $\circuit$ is split across the two transformers $\tf^{\textup{read}}_\strlen$ and $\tf^{\textup{step}}_\strlen$, each responsible for one stage:
    \begin{enumerate}[label=\arabic*.,noitemsep,topsep=0pt]
        \item Stage 1, implemented by $\tf^{\textup{read}}_\strlen$: Converting the input $\str \circ \circEncFun{\circuit}$ into internal representations that will allow the transformer to process it.
        This stage requires $\bigOFun{\log \strlen}$ layers.
        \item Stage 2, implemented by $\tf^{\textup{step}}_\strlen$ unrolled $\numlayers$ times: Iteratively applying the circuit operations to the internal representations to compute the final output.
        This stage requires $\numlayers$ layers.
    \end{enumerate}
    
    Stage 1 converts the binary encodings of the argument pointers stored in the input string $\circEncFun{\circuit}$ into binary encodings stored in the residual stream as per the $\LUniform$ construction in \cref{lem:binary-encoding}.
    These pointers allow the model to retrieve the values stored in those positions (once they become available).
    Importantly, this conversion only has to happen once for all gates and inputs in parallel, even if the inputs have not been computed yet---this is possible since the encoding of the entire circuit is available at the beginning.
    This means that the computation of pointers adds a fixed overhead of $\bigOFun{\log \strlen}$ layers to the simulation.
    
    At the same time, the positions containing the \emph{gate} addresses $\binEnc{m}$ compute their binary encodings in the same way as the input arguments (cf. \cref{lem:binary-encoding}), storing the encoding in another designated part of the residual stream.
    The final position of the input argument can then attend $\ceil{\log \strlen} + 1$ positions forward to retrieve the position of the gate it belongs to---this can be done by storing the (signed) binary encodings of both $\stridx$ and $\stridx + \ceil{\log \strlen} + 1$ in the PEs.
    With this, each input argument $g_\strlen$ contains both the pointer to its value as well as the pointer to the gate it belongs to---this will be used at a later stage of the simulation, when each gate has to read its input argument values before computing its own value.
    
    The rest of the proof (Stage 2) closely follows that of \citealp[][Lem. 7]{merrill2025exactexpressivepowertransformers}.
    In contrast to \citeposs{merrill2025exactexpressivepowertransformers} fully uniform transformer, ours is $\LUniform$; the only aspect of the transformer family that depends on $\strlen$ is the size of the matrices, which needs to grow with the growing PEs.
    The counters required to construct such matrices can be implemented in log-space, which is why the transformer family is $\LUniform$.
    The transformer uses $\numlayers$ layers to simulate the circuit layer by layer, using the pointers constructed in Stage 1 to retrieve the values of the input arguments of each gate.
    The only difference is that, to avoid issues with constant precision, the model computes the $\andGate$ gates by detecting $0$ among the inputs and the $\orGate$ gates by detecting a $1$ among the inputs as per \cref{lem:detection}---this is enough to determine the truth value of the gate.
    
    As above, this constructs an \SMATAcr that behaves like an \AHATAcr, meaning that it holds for both types of models.\qedhere

\end{proof}

\FPMainThm*
\begin{proof}
    Let $\lang$ be the wide-$\ACd$ circuit evaluation problem, $\langCls$ be the class $\FOUniform~\ACd$, and $\reductionCls$ be the class $\LCls$.
    We prove the two equalities in turn.
    
    \textbf{Part (a): $\LUniform~\LPTClassFun{d}{\constantFn}{\logFn} = \FOUniform~\ACd$.}
    The upper bound $\LUniform~\LPTClassFun{d}{\constantFn}{\logFn} \subseteq \FOUniform~\ACd$ follows directly from \cref{thm:SMAT-in-ACd}.
    For the lower bound, the preconditions of \Cref{lem:reductions} are met:
    \begin{enumerate}[label=(\arabic*),noitemsep,topsep=0pt]
        \item \Cref{cor:ac-completeness} shows that $\lang \in \LUniform~\LPTClassFun{d}{\constantFn}{\logFn}$ for $d \geq 1$.
        Together with \Cref{thm:SMAT-in-ACd}, this implies $\lang \in \FOUniform~\ACd$.
        \item \Cref{lem:wide-circuit-evaluation-is-hard} shows that $\lang$ is hard for $\ACd$ under $\LCls$ reductions.
        Together with step (1), we get that $\lang$ is complete for $\FOUniform~\ACd$.
        \item \Cref{lem:nl-lb} gives us $\LCls \subseteq \LUniform~\LPTClassFun{d}{\constantFn}{\logFn}$, meaning that $\LUniform~\LPTClassFun{d}{\constantFn}{\logFn}$ can compute any $\LCls$ reduction.
    \end{enumerate}
    Thus, \Cref{lem:reductions} applies, yielding $\FOUniform~\ACd \subseteq \LUniform~\LPTClassFun{d}{\constantFn}{\logFn}$.
    
    Note that, unlike \citet[][Thm. 3]{merrill2025exactexpressivepowertransformers}, which constructs a transformer with depth $\bigOFun{\log^d \strlen}$ to simulate $\TCd$ circuits, our application of \Cref{lem:ACd-completeness} yields a transformer with depth $\bigOFun{\log \strlen + \log^d \strlen}$ to simulate an $\ACd$ circuit.
    For $d \geq 1$, this reduces to $\bigOFun{\log^d \strlen}$.

    \textbf{Part (b): $\LUniform~\LPTClassFun{d}{\logFn}{\constantFn} = \FOUniform~\TCd$.}
    
    \emph{Upper bound.}
    For \SMATsAcr, $\LUniform~\LPTClassFun{d}{\logFn}{\constantFn} \subseteq \FOUniform~\TCd$ (cf. \cref{thm:SMAT-in-ACd}).
    For \AHATsAcr, the same inclusion applies because \cref{lem:poly-width-upper-bound} and \cref{thm:SMAT-in-ACd} both hold for \AHATsAcr as well.
    
    \emph{Lower bound.}
    The matching lower bound $\FOUniform~\TCd \subseteq \LUniform~\LPTClassFun{d}{\logFn}{\constantFn}$ traces the steps outlined in \cref{sec:looped-results}.
    The starting point is \citealp[][Thm.~3]{merrill2025exactexpressivepowertransformers}, which establishes that fully-uniform log-precision \AHATsAcr with $\bigThetaFun{\log^d \strlen}$ looping simulate $\FOUniform~\TCd$:
    \begin{equation} \label{eq:merrill-thm3}
        \FOUniform~\TCd \subseteq \FOUniform~\LPTClassFun{d}{\logFn}{\constantFn} \text{ for \AHATsAcr.}
    \end{equation}
    \citeposs{merrill2025exactexpressivepowertransformers} proof of \cref{eq:merrill-thm3} bases on the reduction framework restated in \cref{lem:reductions}: They show that fully-uniform log-precision \AHATsAcr can compute every $\LCls$ reduction (via the $\NLCls$ lower bound, the \AHATAcr analog of \cref{lem:nl-lb}) and can recognize the wide-$\TCd$ circuit evaluation problem (the \AHATAcr/$\TCd$ analog of \cref{cor:ac-completeness}), and they invoke their reduction lemma (the \AHATAcr/log-precision analog of \cref{lem:reductions}) to conclude.
    Since wide-$\TCd$ circuit evaluation is $\FOUniform~\TCd$-complete under $\LCls$ reductions \citep[][Lem.~12, App.~D]{merrill2025exactexpressivepowertransformers} (the $\TCd$ analog of \cref{lem:wide-circuit-evaluation-is-hard}), this yields \cref{eq:merrill-thm3}.\looseness=-1

    We lift \cref{eq:merrill-thm3} to $\LUniform$ \SMATsAcr in two steps.
    First, since fully uniform families are a special case of $\LUniform$ families, \cref{eq:merrill-thm3} gives
    \begin{equation} \label{eq:fpmainthm-ahat}
        \FOUniform~\TCd \subseteq \LUniform~\LPTClassFun{d}{\logFn}{\constantFn} \text{ for \AHATsAcr.}
    \end{equation}
    Second, we apply \cref{lem:smats-approximate-ahats} to each \AHATAcr family produced by \cref{eq:fpmainthm-ahat}: The lemma constructs an $\LUniform$ logarithmic-precision \SMATAcr family whose outputs match the \AHATAcr's on every input, uses the same number of layers, and preserves the loop count of the original looped block (the \SMATAcr replacement is layer-by-layer, with the looped block of the \AHATAcr corresponding to the looped block of the \SMATAcr).
    The width and PE width are preserved up to a constant factor.
    Hence the resulting \SMATAcr family lies in $\LUniform~\LPTClassFun{d}{\logFn}{\constantFn}$, transferring the lower bound to \SMATsAcr:
    \begin{equation} \label{eq:fpmainthm-smat}
        \FOUniform~\TCd \subseteq \LUniform~\LPTClassFun{d}{\logFn}{\constantFn} \text{ for \SMATsAcr.}
    \end{equation}

    Combining \cref{eq:fpmainthm-ahat,eq:fpmainthm-smat} with the upper bound established above yields $\LUniform~\LPTClassFun{d}{\logFn}{\constantFn} = \FOUniform~\TCd$ for both attention types.\qedhere
\end{proof}

\end{document}